\newcommand{\edit}[1]{\textcolor{black}{#1}}

\newcommand{\zbedit}[1]{\textcolor{black}{#1}}
\newenvironment{add}
{\color{black}}
{}

\renewcommand{\footnote}[1]{}
\newif\ifcycomment
\cycommentfalse 

\newcommand{\cyfoot}[1]{\ifcycomment\footnote{#1}\fi}

\documentclass[sigconf]{acmart}

\usepackage{amsthm}
\usepackage{multirow}
\newtheorem{assumption}{Assumption}
\usepackage{algorithm}
\usepackage{algpseudocode}
\algrenewcommand\algorithmicrequire{\textbf{Input:}}
\algrenewcommand\algorithmicensure{\textbf{Output:}}

\AtBeginDocument{%
  }

\setcopyright{cc}
\setcctype{by}
\copyrightyear{2026}
\acmYear{2026}
\acmDOI{10.1145/3770855.3818042}
\acmConference[KDD 2026] {Proceedings of the 32nd ACM SIGKDD Conference on Knowledge Discovery and Data Mining V.2}{August 9--13, 2026}{Jeju Island, Republic of Korea.}
\acmBooktitle{Proceedings of the 32nd ACM SIGKDD Conference on Knowledge Discovery and Data Mining V.2 (KDD 2026), August 9--13, 2026, Jeju Island, Republic of Korea}
\acmISBN{979-8-4007-2259-2/2026/08}

\begin{document}

\title{SIMS: Scale-Invariant Merit-Function-Based Scalarization for Multi-Task Learning}

\author{Zebin Chen}
\authornote{Both authors contributed equally to this research.}
\email{12432660@mail.sustech.edu.cn}
\affiliation{%
  \institution{Southern University of Science and Technology}
  \city{Shenzhen}
  \country{China}
}

\author{Fei Xing}
\authornotemark[1]
\email{fxing8-c@my.cityu.edu.hk}
\affiliation{%
  \institution{City University of Hong Kong}
  \city{Hong Kong SAR}
  \country{China}}

\author{Yang Chen}
\email{cheny2023@mail.sustech.edu.cn}
\affiliation{%
  \institution{Southern University of Science and Technology}
  \city{Shenzhen}
  \country{China}
}

\author{Hua Liu}
\email{liuh5@sustech.edu.cn}
\affiliation{%
 \institution{Southern University of Science and Technology}
  \city{Shenzhen}
  \country{China}
}

\author{Andy H.F. Chow}
\email{andychow@cityu.edu.hk}
\affiliation{%
  \institution{City University of Hong Kong}
  \city{Hong Kong SAR}
  \country{China}}

\author{Yuhua Qian}
\email{jinchengqyh@126.com}
\affiliation{%
  \institution{Shanxi University}
  \city{Taiyuan}
  \country{China}}

\author{Yu Zhang}
\email{yu.zhang.ust@gmail.com}
\authornote{Yu Zhang is the corresponding author.}
\affiliation{%
  \institution{Southern University of Science and Technology}
  \city{Shenzhen}
  \country{China}}

\renewcommand{\shortauthors}{Zebin Chen, Fei Xing, Yang Chen, Hua Liu, Andy H.F. Chow, Yuhua Qian, Yu Zhang.}

\begin{abstract}
Multi-task learning (MTL) requires navigating unavoidable trade-offs among competing objectives.
This paradigm is frequently formulated as multi-objective optimization (MOO), where the scalarization is favored to reduce an MOO problem to a single objective.
We empirically find that existing merit-function-based scalarization approaches are sensitive to the relative scales of different objectives in practical MTL, where task losses commonly differ by orders of magnitude. 
The optimization process often favors objectives with larger scales even though the underlying Pareto  optimal solutions remains invariant to rescaling (i.e., multiplying an objective by a positive constant).
To address this issue, we propose \textbf{S}cale-\textbf{I}nvariant \textbf{M}erit-function-based \textbf{S}calarization (\textbf{SIMS}) for MTL. Specifically, SIMS adopts a transformation-induced merit function to convert the MOO problem of MTL to a single objective \edit{that renders optimization invariant to} the magnitudes of losses.
Theoretically, we prove that the requirement for scale invariance uniquely determines this transformation to be logarithmic. We further show that \edit{this} general transformation-induced merit function preserves weak Pareto optimality and admits a smooth surrogate with controllable approximation error.
Extensive experiments on representative multi-task benchmarks demonstrate that SIMS consistently outperforms existing scalarization methods and achieves state-of-the-art performance.
\end{abstract}


\begin{CCSXML}
<ccs2012>
   <concept>
       <concept_id>10010147.10010257.10010258.10010262</concept_id>
       <concept_desc>Computing methodologies~Multi-task learning</concept_desc>
       <concept_significance>500</concept_significance>
       </concept>
 </ccs2012>
\end{CCSXML}

\ccsdesc[500]{Computing methodologies~Multi-task learning}

\keywords{Multi-Task Learning, Multi-Objective Optimization, Merit Function}


\maketitle
\newcommand\kddavailabilityurl{https://doi.org/10.5281/zenodo.20377485}
\ifdefempty{\kddavailabilityurl}{}{
\begingroup\small\noindent\raggedright\textbf{Resource Availability:}\\
The source code of this paper has been made publicly available at
\url{\kddavailabilityurl}. The corresponding GitHub repository is available at
\url{https://github.com/Chen-zb/SIMS}.
\endgroup
}
\section{Introduction}

Multi-task learning (MTL) aims to develop a model capable of learning from multiple related tasks simultaneously~\cite{caruana1997multitask,zhang2021survey}. By leveraging shared representations across tasks, MTL enhances \edit{data efficiency and generalization~\cite{caruana1997multitask,ruder2017overview}}
, and has established itself as a standard paradigm in applications such as dense prediction\edit{~\cite{vandenhende2021multi}}
, \edit{autonomous driving~\cite{chowdhuri2019multinet,wang2025review} and recommendation systems~\cite{zhao2019recommending}}
. However, tasks in MTL are rarely perfectly aligned. The optimization of one task may degrade the performance of another because the learning process often requires navigating unavoidable trade-offs among competing objectives.

A significant body of research addresses this challenge through architectural design~\cite{liu2019end,misra2016cross}, loss weighting~\cite{kendall2018multi,vandenhende2021multi}, or gradient balancing~\cite{chen2018gradnorm,guo2018dynamic}\cyfoot{***cy: Correlate the references with architectural design, loss weighting, or gradient balance. \edit{Done}}. While these methods are effective in many applications, they are typically heuristic, relying on optimizing a proxy objective that \edit{lacks theoretical guarantees for converging to the optimal solution}\footnote{***at current position, MOO is not mentioned. Suddenly mentioning `Pareto optimal solutions' seems a bit strange. \edit{Put it in another way}} when tasks are competing.
To address these limitations, recent efforts have sought to 
cast the objective function of MTL as a multi-objective optimization (MOO) problem, where each task loss serves as an individual objective with the goal of reaching \edit{Pareto optimal solutions~\cite{sener2018multi,marler2004survey}}\cyfoot{***cy: check `compromise'. Maybe `finding the Pareto optimal solutions'? \edit{have changed to "the Pareto optimal solutions"}}\cyfoot{***cy: add ref. \edit{Done}}. MOO-based MTL methods are generally categorized into the \edit{adaptive gradient} approach~\cite{sener2018multi} and scalarization approach~\cite{lin2024smooth}.  \edit{Unlike adaptive gradient approach, scalarization approach}\cyfoot{***cy: Scalarization-based approaches? \edit{Done, all change to scalarization approaches}} is particularly appealing because it transforms the MOO problem into standard single-objective training by optimizing an aggregated objective, \edit{thereby avoiding the high computational and memory overheads associated with manipulating per-task gradients.}\footnote{***this reason cannot demonstrate that the scalarization approach is appealing. \edit{The advantages of the scalarization approaches over the adaptive gradient approach have been added}}

In this paper, we focus on the merit-function-based\cyfoot{***cy: `merit-function based' or `merit-based'. Be consistent. \edit{merit-function-based}} scalarization approach, which eliminates the need for manually tuning task weighting while maintaining a principled connection to the Pareto optimality. However, in practice, the scales of task losses in various tasks often differ by orders of magnitude. Based on a motivating example shown in Section \ref{sec:motivation}, we observe that existing merit-function-based scalarization methods are often sensitive to the scales of task losses.
Although the change of scales of objectives via the rescaling \edit{(i.e., multiplying each objective by a positive constant, akin to a change of units)} does not change the Pareto optimal solutions, 
\edit{it can substantially affect the optimization of scalarization methods. Specifically, the rescaling changes the relative magnitudes of objectives and also gradients, which can alter the optimization trajectory and may lead to unsatisfactory solutions,  
which strongly favor only a few objectives while sacrificing others, with worse average performance across tasks. 
Moreover, we find that the change of the scales of objective functions may even make the optimization procedure divergent. }\cyfoot{***cy: what is positive rescaling? Give a definition. \edit{Done}}\cyfoot{***cy: More explanations of the phenomena in the toy example. \edit{have added more details}}\cyfoot{***cy: why bias the obtained Pareto point is bad. Explain it. \edit{I conclude that it will lead to extreme trade-off solutions}}

To alleviate the above issues in scalarization methods and achieve the scale-invariance in objective functions, we propose the \textbf{S}cale-\textbf{I}nvariant \textbf{M}erit-function-based \textbf{S}calarization (SIMS) method, which \edit{preserves a principled characterization of weak Pareto optimality, }\cyfoot{***cy: check `Pareto-relevant'. \edit{change to "preserves a principled characterization of weak Pareto optimality"}}\footnote{***our method cannot find all the Pareto-optimal solutions. So here we cannot say `the full Pareto set'. Moreover, `Pareto set' means `Pareto front'? \edit{have changed to "preserves a principled characterization of weak Pareto optimality"}} while remaining intrinsically \edit{invariant to the scales of task losses}.\cyfoot{***cy: Change all `robust' to another word in this paper. \edit{change to "invariant to task-wise loss scales"}} 
Specifically, in SIMS, we \edit{propose} a \edit{transformation}-induced\cyfoot{***cy: `Deformation' is mostly used to refer to physical deformation. \edit{change to transformation}} merit function, \edit{which} applies a monotonically increasing transformation function to each task loss. Theoretically, we prove that, to achieve the scale-invariance, the transformation function in SIMS should be logarithmic functions. We further provide theoretical analysis to show that the transformation-induced merit function in SIMS maintains a valid characterization of weak Pareto optimality and admits a smooth surrogate with controllable approximation error.
Empirically, extensive experiments demonstrate that the proposed SIMS method consistently outperforms existing scalarization approaches and achieves state-of-the-art performance.

The main contributions of this work are summarized as follows.
\begin{itemize}
\item We empirically demonstrate that existing \edit{merit-function-based }scalarization methods are often sensitive to the scales of task losses and show that scale-invariance\cyfoot{***cy: Is `scale invariance' a general term? Otherwise, you need to define it. \edit{will define it on Motivating examples}} is essential for obtaining consistent solutions.\footnote{***we are the first to make such observation for scalarization methods? If not, then do not list as a contribution. \edit{Added the attributive of the scalarization methods}}
\item We propose the SIMS method, a scalarization approach which can \edit{characterize weak Pareto optimality while ensuring the scale-invariant property}.
\item We theoretically analyze the proposed SIMS method, including the formulation of the transformation function, the weak Pareto optimality, and the approximation analysis.
\item Extensive experiments on multi-task benchmarks demonstrate the effectiveness of the proposed SIMS method.
\end{itemize}

\section{Related work}
\noindent \textbf{MOO for MTL.} 
To formulate the objective function of MTL as an MOO problem, each task loss corresponds to an objective. To better handle task conflicts, the adaptive gradient approach has been widely studied to construct a descent direction that balances multiple gradients of tasks. For example, MGDA \cite{sener2018multi} finds a common descent direction to satisfy the Pareto stationarity \citep{desideri2012multiple}.\footnote{***add references \edit{Done}} Subsequent methods such as CAGrad \cite{liu2021conflict}, PCGrad \cite{yu2020gradient}, Nash-MTL \cite{navon2022multi}, and FairGrad~\cite{ban2024fair} improve the stability or fairness via respective gradient aggregation operations. Although these adaptive gradient methods provide strong theoretical guarantees, they typically require accessing or manipulating per-task gradients and storing them during optimization, which leads to high computational and memory overhead. 
Due to these issues, we focus on the scalarization approach that combines multiple task losses to a single objective.

\vspace{0.5\baselineskip}
\noindent \textbf{Scalarization approach.}
In MOO, the scalarization approach converts multiple objectives to a single scalar objective to be optimized~\cite{marler2004survey}. 
A substantial body of prior work in MTL is built upon the linear scalarization, where different task losses are linearly combined to form the objective function. 
Representative methods under this paradigm include Equal Weight (EW) with an identical task weight, Uncertainty Weighting (UW)~\cite{kendall2018multi}, Dynamic Weight Averaging (DWA)~\cite{liu2019end}, and GradNorm~\cite{chen2018gradnorm}. 
However, the linear scalarization could miss all the solutions on the non-convex part of the Pareto front~\cite{das1997closer}. 
To alleviate this issue, the Smooth Tchebycheff (STCH) method~\cite{lin2024smooth} introduces a smooth formulation to resolve the non-differentiability of the classical weighted Tchebycheff method, which enables efficient gradient-based optimization, while preserving the theoretical capability to capture non-convex Pareto fronts. Moreover, the FOOPS method~\cite{chen2025efficient} leverages a merit function for MTL to provide an explicit scalar measure of the Pareto optimality. 
\zbedit{Some recent methods have also considered scale mismatch in MTL. For example, GLS \cite{chennupati2019multinet++} reduces the influence of loss magnitudes through geometric loss, but it does not explicitly characterize whether the current solution admits further joint improvement or reaches Pareto optimality. Other methods, such as Nash-MTL \cite{navon2022multi} and DB-MTL \cite{lin2025dual} address related issues from the gradient-level perspective.}
Unlike these methods, the proposed SIMS method aims to solve the scale-sensitive issue of existing merit functions.

\section{Preliminary}
\subsection{MTL as MOO}
An MTL problem involves a set of $m$ related tasks and can be formulated as an MOO problem as
\begin{equation}
    {\min}_{\theta\in \mathbb{R}^d} \; \boldsymbol{\mathcal{L}}(\theta)
    = \bigl( \mathcal{L}_1(\theta), \mathcal{L}_2(\theta), \ldots, \mathcal{L}_m(\theta) \bigr)^\top,
    \label{eq:1}
\end{equation}
where $\mathcal{L}_i(\theta):\mathbb{R}^d\to\mathbb{R}_+$ denotes the task loss of the $i$-th task and $\theta$ denotes the shared model parameters across tasks. 
For problem~\eqref{eq:1}, there is usually no single solution that can simultaneously minimize all objectives. Instead, we consider the Pareto optimality for problem~\eqref{eq:1}, which is defined as follows.
\begin{definition}[Pareto optimality]\label{def:pareto_opt}
A solution $\theta^\star \in \mathbb{R}^d$ is \emph{Pareto optimal} to problem~\eqref{eq:1}, if there does not exist any other solution $\theta \neq \theta^\star$ such that $\mathcal{L}_i(\theta) \le \mathcal{L}_i(\theta^\star)$ for all $i \in [m]$ and $\mathcal{L}_j(\theta) < \mathcal{L}_j(\theta^\star)$ for at least one index $j$.
\end{definition}

\edit{The set of all Pareto optimal solutions is called the Pareto set. The image of the Pareto set in the loss space is called the Pareto front.} We also consider a weaker notion of Pareto optimality, namely weak Pareto optimality.\footnote{***briefly introduce the Pareto front, which is already used in this paper.\edit{Done}}

\begin{definition}[Weak Pareto optimality]\label{def:weak_pareto_opt}
A solution $\theta^\star \in \mathbb{R}^d$ is \emph{weakly Pareto optimal} to problem~\eqref{eq:1}, if there does not exist any other solution $\theta \in \mathbb{R}^d$ such that $\mathcal{L}_i(\theta) < \mathcal{L}_i(\theta^\star)$ for all $i \in [m]$.
\end{definition}




\begin{figure*}[!hpbt]
  \centering
  \includegraphics[width=0.97\linewidth]{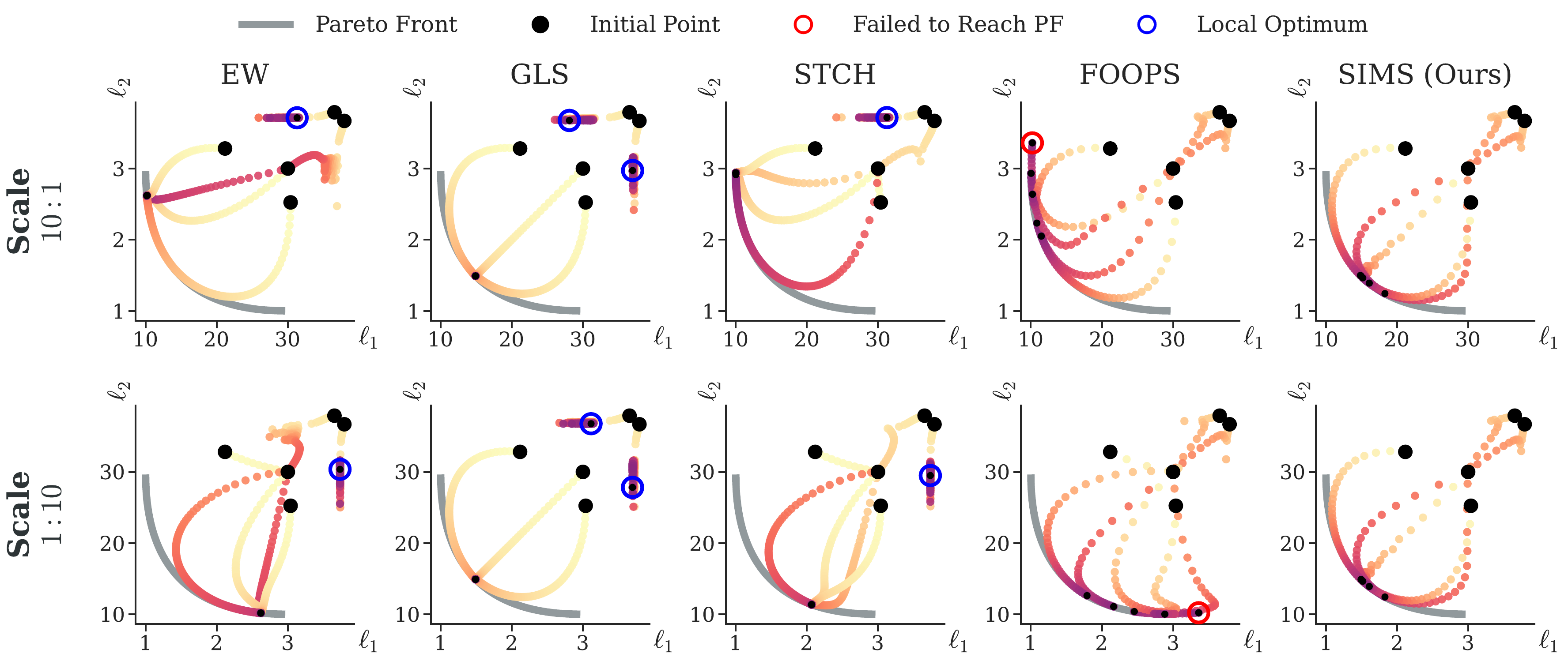}
  \caption{The visualization of optimization trajectories of different methods in the loss space. Gray: Pareto front. Black: initializations. Colored curves: training trajectories. Under both objective-scale settings (Scale 10:1 and 1:10), the Pareto solution set is unchanged. EW, STCH, and FOOPS show clear sensitivity to objective scaling. \zbedit{GLS may get trapped in local optima and converge away from the Pareto front}. In contrast, SIMS (Ours) behaves consistently across scales, producing similar trajectories and converging to comparable regions.
  }
  \label{fig:toy-scale}
\end{figure*}

\subsection{Merit Functions}
Merit functions~\cite{patriksson1997merit,charitha2013note,pappalardo2016merit} provide a surrogate function that connects MOO with single-objective optimization. Merit functions are defined as follows.
\begin{definition}[Merit Function]
\label{def_merit_function}
A function $p:\mathbb{R}^d\to\mathbb{R}$ is a merit function for problem~\eqref{eq:1}
if it satisfies the following properties:
\begin{itemize}
    \item $p(\theta)\ge 0$ for all $\theta\in\mathbb{R}^d$;
    \item $\theta^\star$ is a weakly Pareto optimal solution of problem~\eqref{eq:1} if and only if $p(\theta^\star)=0$.
\end{itemize}
\end{definition}


 


A simple example of the merit function is the maximum-loss merit function~\cite{miettinen1999nonlinear}, which is defined as
\[
p_{\max}(\theta)
=
\max_{i\in[m]} \big( \mathcal{L}_i(\theta) - \mathcal{L}_i^{\inf} \big),
\]
where $[m]$ denotes the set of integers from 1 to $m$, and $\mathcal{L}_i^{\inf}= \inf_{\theta\in\mathbb{R}^d} \mathcal{L}_i(\theta)$ denote the minimal attainable value of the $i$-th objective. 
We can prove that the maximum-loss merit function satisfies Definition \ref{def_merit_function}. 
Moreover, a weighted extension of the maximum-loss merit function leads to the Chebyshev merit function, which is used in MOO~\cite{ehrgott2005multicriteria,lin2024smooth}. Specifically, the Chebyshev merit function\footnote{***Chebyshev-type merit function or Chebyshev merit function? to be consistent. \edit{Done}} is defined as
\[
p_{\mathrm{CH}}(\theta)
=
\max_{i\in[m]} w_i \big( \mathcal{L}_i(\theta) - \mathcal{L}_i^{\inf} \big).
\]
Compared with $p_{\max}(\theta)$, $p_{\mathrm{CH}}(\theta)$ preserves the properties of the merit function, while allowing different objectives to be emphasized through the weights.


To analyze the merit function, we need the stationary property, which is defined as follows.

\begin{definition}
A solution $\theta$ is an $\epsilon$-stationary point ($\epsilon \ge 0$) of a differentiable function $p$ if its gradient $\nabla p(\theta)$ satisfies $\|\nabla p(\theta)\| \le \epsilon$, where $\|\cdot\|$ is the Euclidean norm. If $\epsilon = 0$, then $\theta$ is a stationary point.
\end{definition}

\section{Motivating Examples}
\label{sec:motivation}

In MTL, the task losses across tasks are often in different scales. This phenomenon can arise from heterogeneous loss functions and distinct types of tasks (e.g., regression versus classification tasks)~\cite{kendall2018multi}, and is further enlarged by implementation choices such as the reduction convention \edit{(e.g., sum versus mean aggregation)}\footnote{***what does reduction convention mean? \edit{Added explanations}} and task-specific normalization~\cite{ren2015faster,he2017mask}. 
Although such scale differences across tasks are common, they \edit{do not theoretically affect}\footnote{***revise this phrase \edit{Done}} the Pareto optimality because any rescaling of objectives will not change the Pareto optimal solutions. 
In contrast, scalarization methods are often sensitive to the scales of task losses because rescaling changes the gradient magnitudes across different tasks, which can alter the optimization trajectory to a different solution.  
When task losses differ by orders of magnitude, \edit{existing merit-function-based }scalarization methods\footnote{***you need to narrow down to merit-function-based scalarization methods. \edit{Done}} can be dominated by objectives with larger scales, pushing the optimization toward Pareto-optimal yet extreme trade-off solutions, where the model favors a few tasks while substantially sacrificing the remaining ones. This often yields unsatisfactory performance for MTL.

To see that, we study a two-task synthetic problem~\cite{navon2022multi}, where the Pareto front is available in closed-form solutions as indicated by the gray curve in Fig.~\ref{fig:toy-scale}. 
We construct this toy example by introducing two task weights of the objectives.
Specifically, we multiply the two task losses by different scaling factors, i.e., $10:1$ and $1:10$. This setup emulates a mismatch in measurement units across task losses, allowing us to examine how various methods behave under different task scales. 
Note that this rescaling does not change the Pareto front. 
We run from multiple initializations (in black dots) and compare SIMS with linear scalarization method with equal-weight (EW), geometric loss strategy (GLS) \cite{chennupati2019multinet++} and two merit-function-based scalarization methods (i.e., STCH \cite{lin2024smooth} and FOOPS \cite{chen2025efficient}). More experimental details are provided in Appendix ~\ref{appdix:example}.

Fig.~\ref{fig:toy-scale} shows that although the Pareto front remains unchanged, the EW, STCH, and FOOPS methods exhibit significant sensitivity to the scales of task losses. Specifically, under these two scaling factors (10:1 and 1:10), EW, STCH, and FOOPS exhibit different optimization trajectories, respectively, and they consistently drift toward the task loss with the larger scale. 
We further observe that the FOOPS method exhibits optimization instability and even fails to converge to a stationary solution in some cases (i.e., the point circled in red). 
\zbedit{Although GLS is invariant to positive task-wise loss rescaling due to its geometric-loss formulation, it still suffers from local optima in this case and converges to solutions away from the Pareto front under both scaling settings. This indicates that scale insensitivity alone is insufficient without a merit-function-based criterion that explicitly characterizes the possibility of further joint improvement.}
In contrast, 
the proposed SIMS method exhibits a scale-invariant property due to the proposed transformation-induced merit function, 
leading to a more consistent convergence behavior under different scalings. \edit{This aligns with our goal of obtaining solutions that remain unchanged under different loss scales.} 
\edit{Overall, to mitigate extreme trade-off solutions and achieve better average performance across tasks, scalarization methods in MTL should exhibit a certain degree of scale invariance.}

\section{Methodology}

In this section, we present the proposed SIMS method.

\subsection{Formulation}
\label{section:formulation}
To achieve the scale invariance where the induced optimization dynamics is less affected by arbitrary task-wise scales, we propose a transformation-induced merit function. 
Specifically, we first apply a mapping $\psi(\cdot)$ to the original task loss $\mathcal{L}_i(\theta)$, and 
define the transformed task loss as
\begin{align}
\tilde{\mathcal{L}}_i(\theta)=\psi\!\left(\mathcal{L}_i(\theta)\right)\notag,
\end{align}
where $\psi(\cdot)$ is a transformation function shared by different tasks. 
Moreover, $\psi(\cdot)$ should be a monotonically increasing function so that 
minimizing $\tilde{\mathcal{L}}_i(\theta)$ is equivalent to minimizing $\mathcal{L}_i(\theta)$.


Then, based on 
the MOO formulation in problem~\eqref{eq:1}, 
the new MOO problem based on $\{\tilde{\mathcal{L}}_i(\theta)\}$ is formulated as
\begin{equation}
\min_{\theta \in \mathbb{R}^d} \! \tilde{\boldsymbol{\mathcal{L}}}(\theta)\!
    =\!
    \bigl(
        \psi(\mathcal{L}_1(\theta)),\!
        \psi(\mathcal{L}_2(\theta)),\!
        \ldots,
        \psi(\mathcal{L}_m(\theta))
    \bigr)^\top\!.
    \label{eq:2}
\end{equation}
According to the definition of the merit function in Definition \ref{def_merit_function}, a merit function associated with problem~\eqref{eq:2} should be non-negative and attain zero if and only if the solution is (weakly) Pareto optimal. Under this requirement and assuming lower semicontinuity of $\{\mathcal{L}_i\}$, the proposed transformation-induced merit function is formulated as
\begin{equation}
    \bar{u}(\theta)=\sup_{\theta' \in \mathbb{R}^d}\min_{i \in [m]}
    \bigl\{\psi(\mathcal{L}_i(\theta))-\psi(\mathcal{L}_i(\theta'))\bigr\}.
    \label{eq:ubar}
\end{equation}
$\bar{u}(\theta)$ measures the largest improvement that some other feasible solution can simultaneously achieve across all objectives over $\theta$. 

\subsection{Properties}
\label{sec5}

In the following theorem, we prove that $\bar{u}$ is a valid merit function in the sense of weak Pareto optimality. 
\begin{theorem}
\label{theo:u0_merit}
Suppose that $\psi(\cdot)$ is a monotonically increasing function. Then we have $\bar{u}(\theta)\ge 0$ for all $\theta \in \mathbb{R}^d$. Moreover, $\theta \in \mathbb{R}^d$ is weakly Pareto optimal for problem~(1) if and only if $\bar{u}(\theta)= 0$. Hence, $\bar{u}$ is a merit function.
\end{theorem}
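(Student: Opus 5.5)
The plan is to argue directly from the definition~\eqref{eq:ubar} of $\bar{u}$, comparing the inner quantity
\[
g(\theta,\theta') := \min_{i\in[m]} \bigl\{\psi(\mathcal{L}_i(\theta))-\psi(\mathcal{L}_i(\theta'))\bigr\}
\]
with Definition~\ref{def:weak_pareto_opt}. Throughout I read ``monotonically increasing'' as \emph{strictly} increasing, since that is exactly what the ``if and only if'' requires; I will say below where strictness enters. No regularity such as lower semicontinuity should be needed. The supremum may equal $+\infty$, which is harmless because we only compare $\bar{u}(\theta)$ with $0$.

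\emph{Step 1 (non-negativity).} Take $\theta'=\theta$ in the supremum. Every term of the minimum is then $\psi(\mathcal{L}_i(\theta))-\psi(\mathcal{L}_i(\theta))=0$, so $g(\theta,\theta)=0$. Hence $\bar{u}(\theta)\ge g(\theta,\theta)=0$ for every $\theta$.

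\emph{Step 2 (reduce to strict positivity).} By Step~1, $\bar{u}(\theta)=0$ holds iff $\bar{u}(\theta)\le 0$, i.e.\ iff $g(\theta,\theta')\le 0$ for all $\theta'$. Equivalently, $\bar{u}(\theta)>0$ iff some $\theta'$ has $g(\theta,\theta')>0$. Since the minimum is over the finite index set $[m]$, $g(\theta,\theta')>0$ iff $\psi(\mathcal{L}_i(\theta'))<\psi(\mathcal{L}_i(\theta))$ for every $i\in[m]$.

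\emph{Step 3 (translate through $\psi$).} Now show $\psi(a)<\psi(b)\iff a<b$ on $\mathbb{R}_+$. If $a<b$, strict monotonicity gives $\psi(a)<\psi(b)$; this is the direction where strictness is essential. If $a\ge b$, monotonicity gives $\psi(a)\ge\psi(b)$, so $\psi(a)<\psi(b)$ forces $a<b$. Combining with Step~2, $\bar{u}(\theta)>0$ iff there exists $\theta'$ with $\mathcal{L}_i(\theta')<\mathcal{L}_i(\theta)$ for all $i$. That is exactly the negation of weak Pareto optimality of $\theta$ for problem~\eqref{eq:1}. Negating both sides yields: $\theta$ is weakly Pareto optimal iff $\bar{u}(\theta)=0$. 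Together with Step~1, $\bar{u}$ satisfies both properties of Definition~\ref{def_merit_function}.

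The only delicate point is Step~3. For a merely non-decreasing $\psi$ that is constant on an interval, a strict improvement in every $\mathcal{L}_i$ could leave some $\psi(\mathcal{L}_i)$ unchanged. Then $\bar{u}(\theta)=0$ would not rule out a strictly dominating $\theta'$, and the ``if'' direction would fail. I would therefore state explicitly that strict monotonicity is assumed. This is consistent with the paper's intended choice of $\psi=\log$, which is strictly increasing.
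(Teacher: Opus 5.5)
Your proof is correct and follows the same route as the paper: non-negativity by taking $\theta'=\theta$, then reducing $\bar{u}(\theta)=0$ to the nonexistence of a $\theta'$ with $\psi(\mathcal{L}_i(\theta'))<\psi(\mathcal{L}_i(\theta))$ for all $i$, and translating back through $\psi$. Your explicit two-sided check $\psi(a)<\psi(b)\iff a<b$ and your remark that strict monotonicity is required make precise what the paper's proof uses only tacitly, since the paper silently switches to ``strictly increasing'' even though the theorem says ``monotonically increasing''.
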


As shown in the motivating example, a scalarization method for MTL is expected to be invariant to the scale of task losses. We formalize the definition of this scale-invariance requirement as follows.

\begin{definition}[Scale-Invariance]
\label{def:scale_invariance}
Consider an MTL problem with task losses $\{\mathcal{L}_i\}_{i=1}^m$. For any positive constants $c_i > 0$, the rescaled losses is defined as $\bar{\mathcal{L}}_i(\theta) = c_i\,\mathcal{L}_i(\theta)$. Then, a scalarization MTL method is said to be \emph{invariant to the scaling transformation} if the scalarized objective induced by $\{ \bar{\mathcal{L}}_i(\theta)\}_i^m$ differs from the original scalarized objective only by an additive constant that is independent of \(\theta\).
\end{definition}

Definition~\ref{def:scale_invariance} requires scalarization MTL methods to be invariant to task-wise rescaling of losses, so that changing the units or magnitudes of individual task losses does not affect the preference of the results. Without such a property, a task may dominate the optimization merely due to the large scale of its task loss. 

For the functional form of the transformation function $\psi(\cdot)$, in the following theorem, we prove that the scale invariance property of the transformation-induced merit function $\bar u(\theta)$ makes $\psi(\cdot)$ a logarithmic function.

\begin{theorem}
\label{thm:log_characterization}
When solving an MTL problem by minimizing $\bar{u}(\theta)$ in Eq.~\eqref{eq:ubar}, if the optimal solution is required to be invariant to the scaling transformation, then $\psi$ must be of the form $\psi(x) = a \ln x + b$ for some constants $a > 0$ and $b \in \mathbb{R}$.
\end{theorem}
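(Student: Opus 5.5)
We reduce the requirement in Definition~\ref{def:scale_invariance} to a functional equation for $\psi$ and then solve it. Under a task-wise rescaling $\mathcal{L}_i\mapsto c_i\mathcal{L}_i$, the merit function in Eq.~\eqref{eq:ubar} becomes
\[
\bar{u}_c(\theta)=\sup_{\theta'}\min_{i\in[m]}\bigl\{\psi(c_i\mathcal{L}_i(\theta))-\psi(c_i\mathcal{L}_i(\theta'))\bigr\}.
\]
Scale invariance demands that $\bar u_c-\bar u$ be a constant independent of $\theta$. The natural sufficient condition is that each transformed loss shifts by a $\theta$-independent amount, i.e.\ $\psi(cx)-\psi(x)=g(c)$ for all $x,c>0$. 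In that case each difference $\psi(c_i\mathcal{L}_i(\theta))-\psi(c_i\mathcal{L}_i(\theta'))$ is unchanged, and $\bar u_c=\bar u$ exactly. The proof therefore has two parts: show that this additive-shift condition is also \emph{necessary}, then solve it.

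For necessity, we specialize to instances where the merit function reduces to a single term. First take $m=1$ (or $m$ identical tasks with equal $c_i=c$). Then $\bar u(\theta)=\psi(\mathcal{L}(\theta))-\psi(\mathcal{L}^{\inf})$, assuming the infimum is attained, or using the limit otherwise. Invariance gives
\[
\psi(c\mathcal{L}(\theta))-\psi(c\mathcal{L}^{\inf})-\psi(\mathcal{L}(\theta))+\psi(\mathcal{L}^{\inf})=\mathrm{const}.
\]
Choosing a loss whose range is all of $(\mathcal{L}^{\inf},\infty)$ with arbitrary $\mathcal{L}^{\inf}>0$, we get that $\psi(cx)-\psi(x)$ is constant in $x$ on $(\mathcal{L}^{\inf},\infty)$. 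Letting $\mathcal{L}^{\inf}\downarrow 0$ extends this to all $x>0$. So $\psi(cx)-\psi(x)=g(c)$.

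A subtlety is that the theorem is phrased through ``the optimal solution'' rather than the objective. We will read it as in Definition~\ref{def:scale_invariance}, where the objective is required to be unchanged up to an additive constant. With that reading, invariance of the minimizer follows from, and is used through, the additive-constant condition. If one insists on invariance of minimizers only, we would instead construct two-task instances with a continuum of weakly Pareto points, so that the minimizer of $\bar u$ is pinned down by a tie $\psi(\mathcal{L}_1)-\psi(\mathcal{L}_1^{\inf})=\psi(\mathcal{L}_2)-\psi(\mathcal{L}_2^{\inf})$. Requiring this tie to be preserved under rescaling again forces $\psi(cx)-\psi(x)$ to be independent of $x$. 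This step is where we expect the main obstacle, namely making the reduction airtight.

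Finally, we solve the equation. Set $\phi(t)=\psi(e^t)$. Then $\phi(t+s)-\phi(t)=g(e^s)$, and putting $t=0$ gives $g(e^s)=\phi(s)-\phi(0)$. Writing $h(t)=\phi(t)-\phi(0)$, we obtain Cauchy's equation $h(t+s)=h(t)+h(s)$. Since $\psi$ is monotonically increasing, $h$ is monotone, and monotone solutions of Cauchy's equation are linear: $h(t)=at$ with $a\ge 0$. Hence $\psi(x)=a\ln x+b$ with $b=\psi(1)$. Strict monotonicity, which is needed for the equivalence of minimizing $\tilde{\mathcal{L}}_i$ and $\mathcal{L}_i$ in Section~\ref{section:formulation}, excludes $a=0$, so $a>0$.
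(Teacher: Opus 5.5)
Your proof is correct and follows the same route as the paper's: you collapse $\bar u$ to a single difference $\psi(\cdot)-\psi(\cdot)$, deduce that $\psi(cx)-\psi(x)$ cannot depend on $x$, and solve the resulting Cauchy equation. It is tighter in two respects: you evaluate the supremum over $\theta'$ explicitly via identical tasks (the paper treats $y=\mathcal{L}_i(\theta')$ as a free variable, and its ``other tasks constant'' instance would in fact give $\bar u\equiv 0$), and you use the monotonicity of $\psi$ that is actually assumed, whereas the paper invokes a differentiability of $\psi$ that is never hypothesized.

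Do drop the minimizer-only fallback, though. By Theorem~\ref{theo:u0_merit}, the minimizers of $\bar u$ are exactly the weakly Pareto optimal points for every strictly increasing $\psi$; no tie condition selects among them, since that describes a Chebyshev-type $\max_i$ scalarization rather than $\bar u$. Under that reading every such $\psi$ would qualify and the theorem would be false, so the objective-level reading of Definition~\ref{def:scale_invariance} that you and the paper adopt is the only one under which it holds.
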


The constants $a$ and $b$ in $\psi(\cdot)$ only introduce a positive affine rescaling and therefore do not change the ordering of objective values. Consequently, they do not affect the solution set of problem~\eqref{eq:ubar}. Hence, there is no need to tune $a$ and $b$ in practice. Without loss of generality, we adopt a simple choice in experiments, that is, $\psi(x) = \ln x$, which satisfies the required monotonicity and scale-invariance properties.

\subsection{Approximation}
\label{sec:app}
Since $\bar{u}(\theta)$ has been proved in Theorem \ref{theo:u0_merit} to be a merit function of problem \eqref{eq:2}, we can minimize $\bar{u}(\theta)$ to obtain a weakly Pareto optimal solution. 
However, due to the pointwise $\min$ operator, $\bar{u}$ is generally nonsmooth and non-differentiable~\cite{boyd2004convex}. 
This prevents the direct use of gradient descent and instead necessitates subgradient-based methods, which typically converge more slowly~\cite{goffin1977convergence}. 

To this end, we construct a smoothed and regularized surrogate of $\bar{u}(\theta)$ inspired by classical smoothing techniques (i.e.,log-sum-exp smoothing~\cite{nesterov2005smooth,beck2017first}) for nonsmooth optimization. 
Note that the proposed optimization procedure is applicable to any transformation function $\psi$. 
Specifically, we define
\begin{align}
h_{\lambda, \tau}(\theta,\theta')
=&\tau \ln \left(\sum_{i=1}^{m}\exp\left(\frac{\psi(\mathcal{L}_i(\theta'))-\psi(\mathcal{L}_i(\theta))}{\tau}\right)\right)\notag+\frac{\lambda}{2}\|\theta-\theta'\|^2 \nonumber\\
v_{\lambda, \tau}(\theta)=&-\min_{\theta'\in \mathbb{R}^d}h_{\lambda,\tau}(\theta,\theta'),\label{eq:v}
\end{align}
where $\|\cdot\|$ denotes the Euclidean norm, $\tau>0$ is a smoothing parameter, and $\lambda\ge 0$ is a regularization parameter. Therefore, problem~\eqref{eq:1} is converted to minimizing $v_{\lambda,\tau}$, which is equivalent to solving the following saddle-point problem as
\begin{equation}
    \max_{\theta\in \mathbb{R}^d}\min_{\theta'\in \mathbb{R}^d}
    h_{\lambda,\tau}(\theta,\theta').
    \label{eq:vltau}
\end{equation}
Note that when $\lambda=0$, $v_{0,\tau}$ is a log-sum-exp smoothing approximation of $\bar{u}$, with the approximation accuracy controlled by $\tau$~\cite{nesterov2005smooth,boyd2004convex}. Introducing $\lambda>0$ adds a quadratic regularization term that improves the curvature of the inner problem, which will be shown to yield a well-conditioned surrogate with stable gradients. 

In the following, we show that $v_{\lambda,\tau}$ preserves the key properties of $\bar{u}$, while quantifying the approximation gap. We begin by showing that $v_{0,\tau}$ is a smooth approximation of $\bar{u}$.
\begin{proposition}[Smooth Approximation]
\label{prop:smooth_approx}
The function $v_{0, \tau}(\theta)$ is a smoothing approximation of $\bar{u}(\theta)$. That is, for any $\tau>0$, $v_{0, \tau}(\theta)$ is continuously differentiable on
$\mathbb{R}^d$ and satisfies the properties:
\begin{enumerate}
    \item \emph{Consistency:}\;
    $\lim_{\tau\to 0} v_{0,\tau}(\theta)=\bar{u}(\theta)$ for all $\theta\in \mathbb{R}^d$;
    \item \emph{Smoothness:}\;
    there exist constants $C>0$ and $\alpha>0$, independent of $\tau$, such that
    $v_{0,\tau}$ has a Lipschitz gradient on $\mathbb{R}^d$ with the Lipschitz constant
    $L_{v_{0,\tau}} = C + \alpha/\tau$.
\end{enumerate}
\end{proposition}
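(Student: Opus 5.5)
We write $v_{0,\tau}(\theta)=-\inf_{\theta'}\tau\ln\sum_i\exp\bigl((\psi(\mathcal{L}_i(\theta'))-\psi(\mathcal{L}_i(\theta)))/\tau\bigr)$ and exploit the fact that, when $\lambda=0$, the inner problem decouples from $\theta$ only through the shifts $\psi(\mathcal{L}_i(\theta))$. Set $g_i(\theta)=\psi(\mathcal{L}_i(\theta))$ and $\mathrm{LSE}_\tau(z)=\tau\ln\sum_i e^{z_i/\tau}$. Then $v_{0,\tau}(\theta)=\sup_{\theta'}\bigl(-\mathrm{LSE}_\tau(g(\theta')-g(\theta))\bigr)$. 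Since $-\mathrm{LSE}_\tau(-y)=-\tau\ln\sum_i e^{-y_i/\tau}$ is the standard smooth-min, this is a sup over $\theta'$ of the smooth-min of $y_i=g_i(\theta)-g_i(\theta')$.

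\textbf{Consistency.} We use the elementary bound $\min_i y_i-\tau\ln m\le -\tau\ln\sum_i e^{-y_i/\tau}\le \min_i y_i$, valid for every $\theta'$. Taking the supremum over $\theta'$ gives $\bar u(\theta)-\tau\ln m\le v_{0,\tau}(\theta)\le\bar u(\theta)$, so $v_{0,\tau}\to\bar u$ as $\tau\to0$, with error at most $\tau\ln m$. This step is routine, provided $\bar u(\theta)$ is finite. The definition of the merit function presumes this, since $\psi(\mathcal{L}_i^{\inf})$ is finite.

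\textbf{Differentiability and smoothness.} The key structural step is to rewrite the inner objective so that $\theta$ enters only through explicit terms. We have
\[
\mathrm{LSE}_\tau\bigl(g(\theta')-g(\theta)\bigr)=\tau\ln\sum_i e^{-g_i(\theta)/\tau}e^{g_i(\theta')/\tau}.
\]
Hence $v_{0,\tau}(\theta)=-\inf_{w\in W}\tau\ln\sum_i e^{-g_i(\theta)/\tau}w_i$, where $W=\{(e^{g_i(\theta')/\tau})_i:\theta'\in\mathbb{R}^d\}\subset\mathbb{R}^m_{+}$ is a set independent of $\theta$. The infimum of the linear form $\langle e^{-g(\theta)/\tau},w\rangle$ over $W$ equals the infimum over $\overline{\mathrm{conv}}\,W$, so it is a concave support-type function $\sigma(s)$ of $s=e^{-g(\theta)/\tau}$.

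To obtain $C^1$ regularity and a gradient formula, I would argue via a Danskin-type theorem. Assuming the inner infimum is attained at a unique point $w^\ast(\theta)$, possibly in the closed convex hull, we get $\nabla v_{0,\tau}(\theta)=\sum_i p_i(\theta)\nabla g_i(\theta)$. Here $p_i=e^{-g_i(\theta)/\tau}w_i^\ast/\sum_j e^{-g_j(\theta)/\tau}w_j^\ast$ are softmax-type weights in the simplex. This requires the standing assumptions the paper will need: each $g_i=\psi\circ\mathcal{L}_i$ is $C^1$ with bounded, $L_g$-Lipschitz gradient, with bound $G$, and the image of $\mathcal{L}$ is compact so that the sup is attained.

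For the Lipschitz constant, we write $\nabla v(\theta_1)-\nabla v(\theta_2)=\sum_i p_i(\theta_1)\bigl(\nabla g_i(\theta_1)-\nabla g_i(\theta_2)\bigr)+\sum_i\bigl(p_i(\theta_1)-p_i(\theta_2)\bigr)\nabla g_i(\theta_2)$. The first term is at most $L_g\|\theta_1-\theta_2\|$, which gives $C=L_g$. The second term is at most $G\,\|p(\theta_1)-p(\theta_2)\|_1$. The softmax map has Lipschitz constant of order $1/\tau$ in its logits $-g(\theta)/\tau+\ln w^\ast$, and the logits move by at most $G\|\theta_1-\theta_2\|/\tau$ when $w^\ast$ is held fixed. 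This yields $\alpha\approx G^2$.

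\textbf{Main obstacle.} The hard part is controlling the movement of the maximizer $w^\ast(\theta)$, or $\theta'^\ast(\theta)$, with $\theta$. Without strong convexity of the inner problem, $p$ need not even be continuous. I would first try to show that the contribution of $\ln w^\ast$ can be absorbed. At the inner optimum, first-order optimality makes the derivative of the objective with respect to $w$ vanish along $\overline{\mathrm{conv}}\,W$, in an envelope-theorem sense. The change in $p$ is then dominated by the explicit $\theta$-dependence. If this fails, I would add a uniqueness/nondegeneracy assumption on the inner solution, for example strict convexity of $\overline{\mathrm{conv}}\,W$ in the relevant directions. Alternatively, I would prove the Lipschitz bound for $v_{\lambda,\tau}$ with $\lambda>0$, where strong convexity of the inner problem gives $\|\theta'^\ast(\theta_1)-\theta'^\ast(\theta_2)\|=O(\|\theta_1-\theta_2\|)$, and pass to the limit.
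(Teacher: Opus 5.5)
Your consistency argument is correct and coincides with the paper's. The sandwich $\min_i y_i-\tau\ln m\le -\tau\ln\sum_i e^{-y_i/\tau}\le\min_i y_i$, followed by a supremum over $\theta'$, is exactly Proposition~\ref{prop:bounded_approx}, and item~1 follows by letting $\tau\to0$. The smoothness half, however, is not proved. Everything after ``assuming the inner infimum is attained at a unique point'' is conditional, and none of your three fallbacks delivers the stated conclusion at $\lambda=0$.
The envelope theorem removes $\partial w^\ast/\partial\theta$ from the formula for $\nabla v_{0,\tau}$. It does not remove it from the variation of the weights $p(\theta)$, which is what a Lipschitz bound on $\nabla v_{0,\tau}$ has to control.
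A strict-convexity hypothesis on $\overline{\mathrm{conv}}\,W$ would at best give a constant governed by the curvature of that $\tau$-dependent set. It would not give $C+\alpha/\tau$ with $C,\alpha$ fixed by $\bar\zeta,\bar\ell$.
For $\lambda>\bar\mu$ the standard bound is of order $\ell+\ell^2/(\lambda-\bar\mu)$, which blows up as $\lambda\downarrow\bar\mu$. Moreover, $\lambda=0$ never lies in that regime: a globally Lipschitz $\tilde{\mathcal{L}}_i$ cannot be strongly convex, so $\bar\mu\ge0$.

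The gap cannot be closed, because item~2 (indeed, differentiability) is false under the paper's assumptions. Your own reformulation $v_{0,\tau}(\theta)=-\tau\ln\sigma\bigl(e^{-g(\theta)/\tau}\bigr)$ shows why. The concave function $\sigma(s)=\inf_{w\in W}\langle s,w\rangle$ is non-differentiable at every $s>0$ for which $\langle s,\cdot\rangle$ has more than one minimizer on $\overline{\mathrm{conv}}\,W$, as typically happens when $W+\mathbb{R}^m_{+}$ is non-convex.

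Here is a concrete counterexample. Take $d=1$, $m=2$, $\psi=\ln$, $\tau=1$, $\phi(\theta)=\frac{\pi}{4}(1+\sin\theta)$, $\mathcal{L}_1=e^{\cos\phi}$ and $\mathcal{L}_2=e^{\sin\phi}$. Then $g_1=\cos\phi$ and $g_2=\sin\phi$ are globally Lipschitz with Lipschitz gradients, hence weakly convex. The map $\varphi\mapsto e^{\cos\varphi}+e^{\sin\varphi}$ increases and then decreases on $[0,\pi/2]$, and $1+e<2e^{\sqrt2/2}$. Hence for $s$ near the diagonal, $\sigma(s)=\min\{s_1+e\,s_2,\;e\,s_1+s_2\}$. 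So near $\theta=0$ we have $v_{0,1}(\theta)=\max\{-\ln(s_1+e\,s_2),\,-\ln(e\,s_1+s_2)\}$ with $s_i=e^{-g_i(\theta)}$. Its left and right derivatives at $0$ are $\mp\frac{\sqrt2\pi}{8}\cdot\frac{e-1}{e+1}$, so it has a kink there.

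The failure is not tied to one $\tau$. Replace the quarter circle by a smooth curve whose only Pareto values are $(0,1)$ and $(1,0)$. Then $\sigma(s)=\min\{s_1+e^{1/\tau}s_2,\;e^{1/\tau}s_1+s_2\}$ for all $s>0$ and all $\tau>0$, so the kink persists for every $\tau$.

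The paper's proof does not get around this. It infers that $v_{0,\tau}=-\min_{\theta'}h_{0,\tau}(\theta,\theta')$ is $(\bar\ell^2/\tau+\bar\zeta)$-smooth directly from the joint smoothness of $h_{0,\tau}$. That is precisely the partial-minimization step you flagged as needing a unique, Lipschitz-moving minimizer. Your frozen-$w^\ast$ computation reproduces its constants $C=\bar\zeta$ and $\alpha=\bar\ell^2$. Still, only item~1 is provable as stated; item~2 needs an extra hypothesis, such as strong convexity of the inner problem, with a correspondingly different constant.
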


\noindent
In other words, Proposition~\ref{prop:smooth_approx} shows that $v_{0, \tau}(\theta)$ shares similar properties with the transformation-induced merit function $\bar{u}(\theta)$, while being much easier to be optimized via gradient-based methods due to its differentiable nature. Moreover, it is easy to see that $\bar{u}(\theta)$ is properly upper- and lower-bounded by $v_{0,\tau}(\theta)$.
\begin{proposition}[Bounded Approximation]
\label{prop:bounded_approx}
For any $\theta\in \mathbb{R}^d$, we have $v_{0,\tau}(\theta) \le \bar{u}(\theta)\leq v_{0,\tau}(\theta) + \tau\ln m $.
\end{proposition}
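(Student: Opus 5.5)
The plan is to compare, for each fixed pair $(\theta,\theta')$, the log-sum-exp term with the pointwise minimum, and then pass through the supremum over $\theta'$. Write $d_i(\theta,\theta') = \psi(\mathcal{L}_i(\theta'))-\psi(\mathcal{L}_i(\theta))$. With $\lambda=0$, the definition in Eq.~\eqref{eq:v} gives
\[
v_{0,\tau}(\theta) = \sup_{\theta'\in\mathbb{R}^d}\Bigl(-\tau\ln\sum_{i=1}^m \exp\bigl(d_i(\theta,\theta')/\tau\bigr)\Bigr),
\]
since $-\min_{\theta'} h = \sup_{\theta'}(-h)$. Eq.~\eqref{eq:ubar} gives
\[
\bar u(\theta)=\sup_{\theta'}\min_i\{-d_i(\theta,\theta')\} = \sup_{\theta'}\bigl(-\max_i d_i(\theta,\theta')\bigr).
\]
So both quantities are suprema over $\theta'$ of the negative of a ``max-like'' function of the same vector $(d_1,\dots,d_m)$.

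The key step is the standard log-sum-exp sandwich, valid for any real vector $d\in\mathbb{R}^m$ and $\tau>0$:
\[
\max_i d_i \le \tau\ln\sum_{i=1}^m e^{d_i/\tau} \le \max_i d_i + \tau\ln m .
\]
The lower bound holds because the sum is at least its largest term. The upper bound holds because each of the $m$ terms is at most $e^{\max_i d_i/\tau}$. Negating these inequalities gives, for every $\theta'$,
\[
-\max_i d_i - \tau\ln m \le -\tau\ln\sum_i e^{d_i/\tau} \le -\max_i d_i .
\]

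Taking the supremum over $\theta'$ of each side preserves both inequalities, and the additive constant $\tau\ln m$ passes through the supremum. This yields
\[
\bar u(\theta)-\tau\ln m \le v_{0,\tau}(\theta)\le \bar u(\theta),
\]
which is exactly the claim $v_{0,\tau}(\theta)\le\bar u(\theta)\le v_{0,\tau}(\theta)+\tau\ln m$.

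There is no real obstacle. The only point requiring care is when the suprema are not attained or are infinite; the argument handles this by working with suprema throughout rather than minimizers. Finiteness of $\bar u$ follows from $\psi$ applied to losses bounded below, for example $\psi=\ln$ with $\mathcal{L}_i^{\inf}>0$. Even without finiteness, the inequalities still hold in the extended reals. Monotonicity of $\psi$ is not needed for this bound.
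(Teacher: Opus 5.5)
Your proof is correct and is essentially the paper's argument: the paper also applies the log-sum-exp sandwich $\max_i d_i \le \tau\ln\sum_i e^{d_i/\tau} \le \max_i d_i + \tau\ln m$ pointwise in $\theta'$ and passes it through the optimization over $\theta'$. The only difference is that the paper does this for general $\lambda$, using that the quadratic term is common to all $i$, and then sets $\lambda=0$ so that $u_0=\bar u$.
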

Proposition~\ref{prop:bounded_approx} implies that $v_{0,\tau}$ provides a \emph{uniform} smooth approximation to $\bar{u}$ with a gap controlled by $\tau$. In particular, choosing $\tau$ sufficiently small makes the bound tight uniformly over $\mathbb{R}^d$.

Next, we show that the regularization $\lambda\geq 0$ may yield a strongly-convex inner structure in problem~\eqref{eq:vltau}, which ensures that the inner minimizer is well-defined (i.e., it exists and is unique) and facilitates the first-order analysis. We first make the weak convexity assumption, which is widely used in nonconvex optimization~\cite{lin2020gradient,xu2023unified,chen2025efficient}.
\begin{assumption}[Weak convexity]
\label{ass:weak_convexity}
For each $i\in[m]$, the function $\tilde{\mathcal{L}}_i:\mathbb{R}^d \to\mathbb{R}$ is assumed to be $\mu_i$-weakly convex on $\mathbb{R}^d$.
That is, $\tilde{\mathcal{L}}_i(\theta)+\frac{\mu_i}{2}\|\theta\|^2$ is convex on $\mathbb{R}^d$.
\end{assumption}

\begin{lemma}[Uniqueness of the inner minimizer]
\label{lemmsc}
Suppose that Assumption~\ref{ass:weak_convexity} holds. If $\lambda>\bar{\mu}$, where $\bar{\mu}:=\max_{i\in[m]}\mu_i$, then for any fixed $\theta\in \mathbb{R}^d$, function $h_{\lambda,\tau}(\theta,\theta')$ is strictly convex with respect to $\theta'$. 
Consequently, the inner problem $\min_{\theta'\in \mathbb{R}^d} h_{\lambda,\tau}(\theta,\theta')$ admits a unique minimizer, denoted by $
\theta^{\star}_{\lambda,\tau}(\theta)
=
\arg\min_{\theta'\in\mathbb{R}^d} h_{\lambda,\tau}(\theta,\theta')$.
\end{lemma}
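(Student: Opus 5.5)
The plan is to show that, for fixed $\theta$, the map $\theta'\mapsto h_{\lambda,\tau}(\theta,\theta')$ is in fact $(\lambda-\bar\mu)$-strongly convex. Strict convexity follows immediately from this, and so do existence and uniqueness of the inner minimizer. Throughout, $\theta$ is fixed, so the quantities $c_i:=\psi(\mathcal{L}_i(\theta))$ are constants. Write $\tilde{\mathcal{L}}_i(\theta')=\psi(\mathcal{L}_i(\theta'))$, and let $\mathrm{LSE}_\tau(z)=\tau\ln\sum_{i=1}^m \exp(z_i/\tau)$ for $z\in\mathbb{R}^m$. Then
\[
h_{\lambda,\tau}(\theta,\theta')=\mathrm{LSE}_\tau\bigl(\tilde{\mathcal{L}}_1(\theta')-c_1,\ldots,\tilde{\mathcal{L}}_m(\theta')-c_m\bigr)+\tfrac{\lambda}{2}\|\theta-\theta'\|^2 .
\]

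\textbf{Step 1: the smoothed term is $\bar\mu$-weakly convex.} We use two standard facts about $\mathrm{LSE}_\tau$. First, it is convex and nondecreasing in each coordinate. Second, it is translation equivariant: $\mathrm{LSE}_\tau(z+s\mathbf{1})=\mathrm{LSE}_\tau(z)+s$ for any scalar $s$.

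By Assumption~\ref{ass:weak_convexity} and $\mu_i\le\bar\mu$, each function $g_i(\theta'):=\tilde{\mathcal{L}}_i(\theta')+\frac{\bar\mu}{2}\|\theta'\|^2$ is convex. Indeed, $g_i$ is the sum of the convex function $\tilde{\mathcal{L}}_i+\frac{\mu_i}{2}\|\cdot\|^2$ and the convex function $\frac{\bar\mu-\mu_i}{2}\|\cdot\|^2$.

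Apply translation equivariance with $s=\frac{\bar\mu}{2}\|\theta'\|^2$:
\[
\mathrm{LSE}_\tau\bigl(\tilde{\mathcal{L}}(\theta')-c\bigr)+\tfrac{\bar\mu}{2}\|\theta'\|^2=\mathrm{LSE}_\tau\bigl(g_1(\theta')-c_1,\ldots,g_m(\theta')-c_m\bigr).
\]
The right-hand side composes a convex, coordinatewise nondecreasing function with convex functions, so it is convex. Hence the first term of $h_{\lambda,\tau}$ plus $\frac{\bar\mu}{2}\|\theta'\|^2$ is convex.

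\textbf{Step 2: add the proximal term.} Expanding gives
\[
\tfrac{\lambda}{2}\|\theta-\theta'\|^2=\tfrac{\lambda}{2}\|\theta'\|^2-\lambda\langle\theta,\theta'\rangle+\tfrac{\lambda}{2}\|\theta\|^2 .
\]
Therefore $h_{\lambda,\tau}(\theta,\cdot)$ decomposes as the sum of three pieces:
\begin{itemize}
\item the convex function from Step 1;
\item an affine function of $\theta'$, namely $-\lambda\langle\theta,\theta'\rangle+\frac{\lambda}{2}\|\theta\|^2$;
\item the quadratic $\frac{\lambda-\bar\mu}{2}\|\theta'\|^2$.
\end{itemize}
Since $\lambda>\bar\mu$, this shows $h_{\lambda,\tau}(\theta,\cdot)$ is $(\lambda-\bar\mu)$-strongly convex, and in particular strictly convex.

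\textbf{Step 3: existence and uniqueness.} Assumption~\ref{ass:weak_convexity} posits each $\tilde{\mathcal{L}}_i$ as a real-valued function on all of $\mathbb{R}^d$, implicitly taking $\mathcal{L}_i>0$ so that $\psi=\ln$ is finite. Under this reading, the convex function in Step 1 is finite everywhere, hence continuous. So $h_{\lambda,\tau}(\theta,\cdot)$ is continuous and strongly convex.

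A strongly convex function is bounded below by a quadratic: at any point, take a subgradient of the convex part, which gives an affine minorant, and add the quadratic. Such a function is therefore coercive. Its sublevel sets are thus compact and nonempty, and by Weierstrass a minimizer exists. If there were two distinct minimizers, strict convexity would make the value at their midpoint strictly smaller, a contradiction. Hence the minimizer $\theta^\star_{\lambda,\tau}(\theta)$ is unique.

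The main point requiring care is Step 1. The composition rule for convexity needs convex inner functions, but the $\tilde{\mathcal{L}}_i$ are only weakly convex. The translation-equivariance identity is the step that transfers the common quadratic $\frac{\bar\mu}{2}\|\theta'\|^2$ inside the log-sum-exp, which resolves this. A secondary point is justifying continuity and finiteness, which is where we rely on the standing assumption that the transformed losses are finite on $\mathbb{R}^d$.
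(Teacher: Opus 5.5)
Your proposal is correct and follows essentially the paper's route: the paper combines its auxiliary lemma that log-sum-exp preserves weak convexity with the proximal term and the condition $\lambda>\bar\mu$, and that lemma is proved by the same translation-equivariance trick of absorbing the common $\frac{\bar\mu}{2}\|\theta'\|^2$ into the exponentials. Your write-up is tighter in two respects: you obtain $(\lambda-\bar\mu)$-strong convexity with the correct sign convention $\tilde{\mathcal{L}}_i(\theta')-\tilde{\mathcal{L}}_i(\theta)$ (the paper's display reverses it and writes the condition as $\lambda+\bar\mu>0$), and you prove that a minimizer exists via coercivity and Weierstrass, which the paper skips even though strict convexity alone only gives at most one minimizer.
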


Lemma~\ref{lemmsc} shows that when $\lambda > \bar{\mu}$, the inner problem becomes uniformly strongly-convex, which guarantees a unique minimizer and enables stable gradient-based updates with linear convergence~\cite{boyd2004convex}. Moreover, through the implicit minimizer mapping $\theta^{\star}_{\lambda,\tau}(\theta)$, the outer objective $v_{\lambda,\tau}$ inherits favorable smoothness properties. Therefore, we will impose the condition $\lambda > \bar{\mu}$ throughout the remainder of the analysis.

Finally, we show that the surrogate $v_{\lambda,\tau}(\theta)$ preserves the weak Pareto optimal property up to a controllable tolerance.

\begin{proposition}
\label{theov}
Suppose that Assumption~\ref{ass:weak_convexity} holds. Then $v_{\lambda,\tau}(\theta)$ satisfies the following properties :
\begin{enumerate}
    \item (Lower bound) $v_{\lambda,\tau}(\theta) \ge -\tau \ln m$ for any $\theta \in \mathbb{R}^d$.
    \item (Necessity) If $\theta$ is weakly Pareto optimal for problem~\eqref{eq:1}, then $v_{\lambda,\tau}(\theta) \le 0$.
    \item (Approximate sufficiency) If $\lambda = 0$ and $v_{0,\tau}(\theta) \le 0$, then $\theta$ is $\epsilon$-weakly Pareto optimal with $\epsilon = \tau \ln m$.
    \item (Sufficiency under convexity) If $\lambda > 0$, $v_{\lambda,\tau}(\theta) \le -\tau \ln m$, and each $\mathcal{L}_i$ is convex at $\theta$, then $\theta$ is weakly Pareto optimal.
\end{enumerate}
\end{proposition}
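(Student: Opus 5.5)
The plan is to work throughout with the log-sum-exp sandwich
\[
\max_i a_i \le \tau\ln\textstyle\sum_i e^{a_i/\tau}\le \max_i a_i+\tau\ln m,
\]
applied to $a_i=\psi(\mathcal{L}_i(\theta'))-\psi(\mathcal{L}_i(\theta))$. Together with the identity $\max_i a_i=-\min_i\{\psi(\mathcal{L}_i(\theta))-\psi(\mathcal{L}_i(\theta'))\}$, this reduces each claim to a statement about $\bar u$, for which Theorem~\ref{theo:u0_merit} and Proposition~\ref{prop:bounded_approx} are available.

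\textbf{Item 1 (lower bound).} Take $\theta'=\theta$ in the inner minimization. All $a_i$ vanish and the quadratic term is zero, so $h_{\lambda,\tau}(\theta,\theta)=\tau\ln m$. Hence $\min_{\theta'}h_{\lambda,\tau}\le\tau\ln m$, which gives $v_{\lambda,\tau}(\theta)\ge-\tau\ln m$. This holds for every $\lambda\ge0$.

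\textbf{Item 2 (necessity).} Let $\theta$ be weakly Pareto optimal. Since $\psi$ is increasing, for every $\theta'$ some index $i$ satisfies $\mathcal{L}_i(\theta')\ge\mathcal{L}_i(\theta)$, i.e.\ $a_i\ge0$ for that $i$. Therefore $\max_i a_i\ge 0$, and
\[
h_{\lambda,\tau}(\theta,\theta')\ge\max_i a_i+\tfrac{\lambda}{2}\|\theta-\theta'\|^2\ge0
\]
for every $\theta'$. Taking the infimum over $\theta'$ yields $v_{\lambda,\tau}(\theta)\le 0$.

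\textbf{Item 3 (approximate sufficiency).} With $\lambda=0$, Proposition~\ref{prop:bounded_approx} gives $\bar u(\theta)\le v_{0,\tau}(\theta)+\tau\ln m\le\tau\ln m$. So no $\theta'$ satisfies $\psi(\mathcal{L}_i(\theta))-\psi(\mathcal{L}_i(\theta'))>\tau\ln m$ for all $i$. I will take this as the definition of $\epsilon$-weak Pareto optimality with $\epsilon=\tau\ln m$, measured in the transformed objectives of problem~\eqref{eq:2}; the paper has not defined the notion, so I fix that convention here.

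\textbf{Item 4 (sufficiency under convexity).} This is the main obstacle. Suppose $v_{\lambda,\tau}(\theta)\le-\tau\ln m$. Combined with the upper bound $\min_{\theta'} h_{\lambda,\tau}\le\tau\ln m$ from item 1, this gives $\min_{\theta'}h_{\lambda,\tau}(\theta,\theta')=\tau\ln m$, attained at $\theta'=\theta$. The first-order condition at $\theta'=\theta$ says that the softmax weights are uniform, equal to $1/m$, so
\[
\frac1m\sum_i\nabla\tilde{\mathcal L}_i(\theta)=0,
\]
and more generally $0$ lies in the convex hull of the subgradients (the quadratic term contributes nothing at $\theta'=\theta$). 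This is a Pareto-stationarity condition. I will argue by contradiction. Suppose some $\hat\theta$ has $\mathcal{L}_i(\hat\theta)<\mathcal{L}_i(\theta)$ for all $i$. Convexity of each $\mathcal{L}_i$ at $\theta$, together with monotonicity of $\psi$ along the segment from $\theta$ to $\hat\theta$, makes $d=\hat\theta-\theta$ a strict descent direction for every objective, so $\langle\nabla\tilde{\mathcal{L}}_i(\theta),d\rangle<0$ for all $i$. Averaging these inequalities contradicts $\frac1m\sum_i\nabla\tilde{\mathcal{L}}_i(\theta)=0$.

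The step I expect to be delicate is turning $\mathcal{L}_i(\hat\theta)<\mathcal{L}_i(\theta)$ into strict negativity of the directional derivative of $\psi\circ\mathcal{L}_i$. For this I plan to use the convexity inequality $\langle\nabla\mathcal{L}_i(\theta),d\rangle\le\mathcal{L}_i(\hat\theta)-\mathcal{L}_i(\theta)<0$, then multiply by $\psi'(\mathcal{L}_i(\theta))>0$. This requires $\psi$ to be differentiable with positive derivative, which holds for $\psi=\ln$.
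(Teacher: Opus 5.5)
Your proposal is correct. Items 1 and 2 are the paper's argument made self-contained. The paper obtains item 1 from the sandwich $u_\lambda-\tau\ln m\le v_{\lambda,\tau}\le u_\lambda$ together with $u_\lambda\ge 0$, and that nonnegativity is exactly your choice $\theta'=\theta$. It obtains item 2 by citing Tanabe et al.\ for $u_\lambda(\theta)=0$ at weakly Pareto optimal points, which your direct inequality replaces.

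Item 4 is where the routes genuinely differ. The paper also goes through the sandwich: $v_{\lambda,\tau}(\theta)\le-\tau\ln m$ forces $u_\lambda(\theta)=0$, i.e.\ $\min_i\{\tilde{\mathcal L}_i(\theta)-\tilde{\mathcal L}_i(\theta')\}\le\frac{\lambda}{2}\|\theta-\theta'\|^2$ for all $\theta'$. It then plugs in $\theta_t=(1-t)\theta+t\hat\theta$, so that convexity at $\theta$ makes the left side of order $t$ against a penalty of order $t^2$, divides by $t$, and lets $t\to0$. That argument needs no differentiability. As written, however, it assumes convexity of $\tilde{\mathcal L}_i=\psi\circ\mathcal L_i$ at $\theta$, not of $\mathcal L_i$ as the statement says, and for $\psi=\ln$ the latter does not imply the former.

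You instead use that the inner infimum is attained exactly at $\theta'=\theta$ and apply Fermat's rule there. This costs differentiability of $\mathcal L_i$ at $\theta$ (or a subdifferential version, available via the weak convexity of $\tilde{\mathcal L}_i$) and $\psi'>0$. In exchange, it works under the hypothesis as stated, never uses $\lambda>0$, and in fact gives more. Because the weights are exactly uniform, $\sum_i\psi'(\mathcal L_i(\theta))\nabla\mathcal L_i(\theta)=0$. Convexity at $\theta$ then yields $\sum_i\psi'(\mathcal L_i(\theta))\bigl(\mathcal L_i(\hat\theta)-\mathcal L_i(\theta)\bigr)\ge 0$ for every $\hat\theta$, so $\theta$ is Pareto optimal, not merely weakly.

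For item 3, fixing the convention in the transformed objectives is the right call and more than a formality. The paper's proof passes to the additive statement in the original losses (no $\hat\theta$ with $\mathcal L_i(\hat\theta)<\mathcal L_i(\theta)-\epsilon$ for all $i$) ``since $\psi$ is strictly increasing''. For $\psi=\ln$, what actually follows is the multiplicative statement: no $\hat\theta$ with $\mathcal L_i(\hat\theta)<m^{-\tau}\mathcal L_i(\theta)$ for all $i$. The additive one can fail. Take $d=1$, $m=2$, $\mathcal L_1=\mathcal L_2=10+\theta^2$, $\theta=1$ and $\tau=0.2$. Then $v_{0,\tau}(1)=\ln 1.1-0.2\ln 2<0$, yet $\hat\theta=0$ lowers both losses by $1>\tau\ln 2$.
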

When $\lambda = 0$, the smoothing introduces at most an $\epsilon = \tau \ln m$ error\footnote{***an approximation error of $\epsilon = \tau \ln m$? \edit{Done} Yes, relaxation changes error}, meaning that minimizing $v_{0,\tau}$ yields an $\epsilon$-weakly Pareto optimal solution to the original problem. This result justifies $v_{\lambda,\tau}$ as a principled surrogate for $\bar{u}(\theta)$ with a clear trade-off between smoothness and Pareto optimality gap\footnote{***what does Pareto accuracy mean?\edit{Done} changes to optimality gap} controlled by $\tau$.

\subsection{Algorithm and Convergence Analysis}\label{analysis}
To solve the saddle-point problem~\eqref{eq:vltau}, 
we adopt the two-time-scale gradient descent-ascent (TTGDA) method~\cite{lin2020gradient,xu2023unified}, where the inner variable $\theta'$ is updated with a larger learning rate to rapidly track the inner minimizer, while the outer variable $\theta$ is updated more conservatively to ensure stable descent. 
Algorithm~\ref{alg1} summarizes the TTGDA method for solving problem~\eqref{eq:vltau}.

To compute the gradients, we define the weights of gradient aggregation as
\[
w_i(\theta,\theta') :=
\frac{\exp\!\left(\frac{\tilde{\mathcal{L}}_i(\theta')-\tilde{\mathcal{L}}_i(\theta)}{\tau}\right)}
{\sum_{j=1}^m \exp\!\left(\frac{\tilde{\mathcal{L}}_j(\theta')-\tilde{\mathcal{L}}_j(\theta)}{\tau}\right)}, 
\quad i\in[m].
\]
These weights arise directly from the log-sum-exp smoothing and form a probability simplex. They emphasize tasks whose improvement from $\theta$ to $\theta'$ is relatively small, thereby adaptively reweighting task gradients based on local changes in the objective values.\footnote{***what does `local first-order information' mean? \edit{Done} changes to local changes in the objective values}. 
Using these weights, the gradients of $h_{\lambda,\tau}(\theta,\theta')$ with respect to $\theta$ and $\theta'$ are computed as
\begin{align}
\nabla_{\theta'} h_{\lambda,\tau}(\theta,\theta')
&=
\sum_{i=1}^m w_i(\theta,\theta')\nabla \tilde{\mathcal{L}}_i(\theta')
+
\lambda(\theta'-\theta),
\label{eq:grad_1}
\\\!
\nabla_{\theta} h_{\lambda,\tau}(\theta,\theta')
&=\!
-\sum_{i=1}^m w_i(\theta,\theta')\!\nabla \tilde{\mathcal{L}}_i(\theta)
\!+\! \lambda(\theta-\theta').
\label{eq:grad_2}
\end{align}

\begin{algorithm}[t]
\caption{TTGDA Algorithm}
\label{alg1}
\begin{algorithmic}[1]
\Require Iterations $K$, smoothing parameters $\tau>0$, $\lambda\ge 0$,
stepsizes $\eta_{\theta'} \gg \eta_{\theta}$,
initial points $\theta^{0},\theta'^{0}\in\mathbb{R}^d$.
\For{$k=0,1,\ldots,K-1$}
\State Compute gradients $\nabla_{\theta'} h_{\lambda,\tau}(\theta^{k},\!\theta'^{k})$ and $\nabla_{\theta} h_{\lambda,\tau}(\theta^{k},\!\theta'^{k})$ via Eqs.~\eqref{eq:grad_1} and~\eqref{eq:grad_2}
    \State Update
    \begin{align}
        &\theta'^{k+1} \leftarrow \theta'^{k} - \eta_{\theta'}\, \nabla_{\theta'} h_{\lambda,\tau}\!\left(\theta^{k},\theta'^{k}\right),\notag \\
        &\theta^{k+1}\leftarrow \theta^{k} + \eta_{\theta}\, \nabla_{\theta} h_{\lambda,\tau}\!\left(\theta^{k},\theta'^{k}\right). \notag
    \end{align}
\EndFor
\State Randomly draw $\hat{\theta}$ uniformly from $\{\theta^k\}_{k=0}^{K}$.
\Ensure $\hat{\theta}$.
\end{algorithmic}
\end{algorithm}

Next, we will analyze the convergence of the TTGDA algorithm. We begin with some basic assumptions.

\begin{assumption}[Lipschitz continuity]
\label{ass:lipschitz}
For each $i\in[m]$, the function $\tilde{\mathcal{L}}_i:\mathbb{R}^d \to\mathbb{R}$ is
$\ell_i$-Lipschitz continuous on $\mathbb{R}^d$. That is, for all
$\theta_1,\theta_2\in \mathbb{R}^d$, $\big|\tilde{\mathcal{L}}_i(\theta_1)-\tilde{\mathcal{L}}_i(\theta_2)
\big|\le \ell_i \|\theta_1-\theta_2\|$.
\end{assumption}

\begin{assumption}[Smoothness]
\label{ass:smoothness}
For each $i\in[m]$, the function $\tilde{\mathcal{L}}_i$ is $\zeta_i$-smooth on $\mathbb{R}^d$. That is, for all $\theta_1,\theta_2\in \mathbb{R}^d$, $\|\nabla \tilde{\mathcal{L}}_i(\theta_1)-\nabla \tilde{\mathcal{L}}_i(\theta_2)\|
\le \zeta_i \|\theta_1-\theta_2\|$.
\end{assumption}

The Lipschitz continuity and smoothness assumptions in Assumptions~\ref{ass:lipschitz} and~\ref{ass:smoothness} are widely adopted in MOO~\cite{ sener2018multi,lin2024smooth,chen2025gradient}. Based on these assumptions, we can characterize the smoothness of the saddle function $h_{\lambda,\tau}(\theta,\theta')$.
\begin{lemma}\label{smoothnessh}
Under Assumptions~\ref{ass:lipschitz} and~\ref{ass:smoothness}, define $\bar{\ell}:=\max_{i\in[m]}\ell_i$ and $\bar{\zeta}:=\max_{i\in[m]}\zeta_i$. Then $h_{\lambda,\tau}(\theta,\theta')$ is $\ell$-smooth jointly in $(\theta,\theta')$, with $\ell=\frac{\bar{\ell}^2}{\tau}+\bar{\zeta}+\lambda$.
\end{lemma}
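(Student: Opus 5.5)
We prove the smoothness of $h_{\lambda,\tau}$ by splitting it into three pieces and bounding the Hessian, or equivalently the gradient-Lipschitz constant, of each. Write $h_{\lambda,\tau}(\theta,\theta') = \tau\,\mathrm{LSE}_\tau(z(\theta,\theta')) + \frac{\lambda}{2}\|\theta-\theta'\|^2$. Here $z_i(\theta,\theta') := \tilde{\mathcal{L}}_i(\theta')-\tilde{\mathcal{L}}_i(\theta)$ and $\Phi(z):=\tau\ln\sum_i \exp(z_i/\tau)$. The quadratic term has Hessian $\lambda\begin{pmatrix} I & -I\\ -I & I\end{pmatrix}$, whose operator norm is $2\lambda$. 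The claimed constant only carries $\lambda$, so we will either bound this term in a way that contributes $\lambda$, or accept a harmless factor.

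The main work is the composite $\Phi\circ z$. By the chain rule its Hessian is
\[
\nabla^2(\Phi\circ z) = J_z^\top \nabla^2\Phi(z)\, J_z + \sum_{i=1}^m w_i\,\nabla^2 z_i .
\]
Here $J_z$ is the $m\times 2d$ Jacobian with rows $(-\nabla\tilde{\mathcal{L}}_i(\theta)^\top,\ \nabla\tilde{\mathcal{L}}_i(\theta')^\top)$, and the $w_i$ are the softmax weights, which lie in the simplex.

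\emph{Second term.} Each $z_i$ has a gradient that is $\zeta_i$-Lipschitz in $(\theta,\theta')$, since the two blocks depend on separate variables and are each $\zeta_i$-Lipschitz. A convex combination therefore has a $\bar\zeta$-Lipschitz gradient. We work with gradient-Lipschitz differences rather than Hessians, so that only Assumption~\ref{ass:smoothness} is needed.

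\emph{First term.} We use $\nabla^2\Phi(z) = \frac{1}{\tau}(\mathrm{diag}(w)-ww^\top)\preceq \frac{1}{\tau}\mathrm{diag}(w)$. For a direction $u=(a,b)$ this gives
\[
u^\top J_z^\top\nabla^2\Phi J_z u \le \frac1\tau\sum_i w_i\big(\nabla\tilde{\mathcal{L}}_i(\theta')^\top b-\nabla\tilde{\mathcal{L}}_i(\theta)^\top a\big)^2 .
\]
Assumption~\ref{ass:lipschitz} gives $\|\nabla\tilde{\mathcal{L}}_i\|\le\ell_i$, so each square is at most $\bar\ell^2(\|a\|+\|b\|)^2$. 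This yields a bound of order $\bar\ell^2/\tau$.

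The main obstacle is bookkeeping of the constants so that exactly $\bar\ell^2/\tau+\bar\zeta+\lambda$ emerges. Naive bounds produce factors of $2$: from $(\|a\|+\|b\|)^2\le 2\|u\|^2$, and from the $2\lambda$ norm of the coupling block. To avoid relying on second derivatives, we would instead bound $\|\nabla h(x_1)-\nabla h(x_2)\|$ directly, using Eqs.~\eqref{eq:grad_1}--\eqref{eq:grad_2}. The differences split into three parts. First, weight differences times gradients: the softmax is $\frac1\tau$-Lipschitz in $\ell_\infty\to\ell_1$, and $z$ is $\bar\ell$-Lipschitz, which gives $\bar\ell^2/\tau$. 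Second, weights times gradient differences, giving $\bar\zeta$. Third, the linear term, giving $\lambda$. If tight constants cannot be obtained, we would state the result with an absolute-constant multiple of these quantities, which suffices for the subsequent TTGDA analysis.
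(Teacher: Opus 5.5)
The factors of $2$ you flag are real, and the step that claims to remove them is where your proposal breaks. With the Euclidean norm on $(\theta,\theta')\in\mathbb{R}^{2d}$, the constant in the statement is false. Take $d=1$, $m=2$, $\tilde{\mathcal{L}}_1(x)=\bar{\ell}x$, $\tilde{\mathcal{L}}_2(x)=-\bar{\ell}x$. Both are $\bar{\ell}$-Lipschitz and $0$-smooth, so the claimed constant is $\bar{\ell}^2/\tau+\lambda$. With $s=\theta'-\theta$ one gets $h_{\lambda,\tau}(\theta,\theta')=\tau\ln\bigl(2\cosh(\bar{\ell}s/\tau)\bigr)+\frac{\lambda}{2}s^2$. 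Its Hessian at $s=0$ is $(\bar{\ell}^2/\tau+\lambda)\,vv^\top$ with $v=(-1,1)^\top$, whose operator norm is $2(\bar{\ell}^2/\tau+\lambda)$.

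Your ``direct'' route therefore fails in exactly two places. First, $z$ is not $\bar{\ell}$-Lipschitz jointly: with $x_j=(\theta_j,\theta_j')$ one only has $|z_i(x_1)-z_i(x_2)|\le\ell_i(\|\theta_1-\theta_2\|+\|\theta_1'-\theta_2'\|)\le\sqrt{2}\,\ell_i\|x_1-x_2\|$, and likewise $\|\nabla z_i\|\le\sqrt{2}\,\ell_i$. So the weight-difference term contributes $2\bar{\ell}^2/\tau$, not $\bar{\ell}^2/\tau$. Second, the map $(\theta,\theta')\mapsto\lambda(\theta-\theta',\theta'-\theta)$ is $2\lambda$-Lipschitz, not $\lambda$-Lipschitz.

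Done correctly, your three-part decomposition proves that $h_{\lambda,\tau}$ is $(2\bar{\ell}^2/\tau+\bar{\zeta}+2\lambda)$-smooth. It uses the $\ell_\infty\to\ell_1$ bound on the softmax and the block-separable bound $\|\nabla z_i(x_1)-\nabla z_i(x_2)\|\le\zeta_i\|x_1-x_2\|$. The example above shows this constant is sharp when $\bar{\zeta}=0$.

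The paper's proof reaches $\bar{\ell}^2/\tau+\bar{\zeta}+\lambda$ only through the same two slips. In the proof of Proposition~\ref{prop:smooth_approx} it asserts that $\phi_i(\theta,\theta')=\tilde{\mathcal{L}}_i(\theta')-\tilde{\mathcal{L}}_i(\theta)$ is $\ell_i$-Lipschitz jointly before invoking Lemma~\ref{lem:lse_smoothness}. It then says the quadratic regularizer adds only $\lambda$. The stated constant is correct for each block separately, i.e.\ for $\theta\mapsto h_{\lambda,\tau}(\theta,\theta')$ with $\theta'$ fixed and vice versa. The two-time-scale analysis, however, needs joint smoothness including the cross terms.

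So your fallback is the right resolution, not a concession. Replacing $\ell$ by $2\bar{\ell}^2/\tau+\bar{\zeta}+2\lambda$ changes $\ell$ and $\kappa$ by at most a factor of $2$, and Theorem~\ref{thm:gda_complexity} survives in its $\mathcal{O}(\cdot)$ form.

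One point where your route is more careful than the paper's: the $\ell_\infty\to\ell_1$ pairing for the softmax (equivalently $\mathrm{diag}(w)-ww^\top\preceq\mathrm{diag}(w)$) is what avoids a $\sqrt{m}$ loss. Lemma~\ref{lem:lse_smoothness} in the paper instead bounds $\|J_{\tilde{\boldsymbol{\mathcal{L}}}}\|$ and $\|\tilde{\boldsymbol{\mathcal{L}}}(\theta)-\tilde{\boldsymbol{\mathcal{L}}}(\theta')\|$ in Euclidean norm by $\max_i\ell_i$. That bound does not hold in general, even though the lemma's conclusion is true.
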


\noindent
We define $\Delta_v$ and $\delta^0$ as
\[
\Delta_v =v_{\lambda,\tau}(\theta^0) - \min_{\theta\in\mathbb{R}^d} v_{\lambda,\tau}(\theta), \quad \delta^0:=\bigl\|\theta^{\star}_{\lambda,\tau}(\theta^0)-\theta'^0\bigr\|^2.
\]
Then in the following theorem, we analyze the convergence property of Algorithm~\ref{alg1} for solving problem~\eqref{eq:vltau}.

\begin{theorem}
\label{thm:gda_complexity}
Under Assumptions~\ref{ass:weak_convexity},~\ref{ass:lipschitz} and~\ref{ass:smoothness}, we set  $\lambda>\bar{\mu}$, $\eta_{\theta'}=\Theta(1/\ell)$, and $\eta_{\theta}=\Theta(1/(\kappa^2\ell))$, where $\kappa=\frac{\ell}{\lambda-\bar{\mu}}>0$ denote the condition number~\cite{lin2020gradient}. The iteration complexity (also the gradient complexity) of Algorithm~\ref{alg1} to obtain an $\epsilon$-stationary point is bounded by
\[
\mathcal{O}\!\left(
\frac{\kappa^2\ell\,\Delta_v
+
\kappa\ell^2 (\delta^0)^2}{\epsilon^2}
\right).
\]
\end{theorem}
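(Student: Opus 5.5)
The plan is to recast problem~\eqref{eq:vltau} as a nonconvex--strongly-concave minimax problem and then follow the two-time-scale GDA analysis of \cite{lin2020gradient}. Set $f(\theta,\theta') := -h_{\lambda,\tau}(\theta,\theta')$. Minimizing $v_{\lambda,\tau}(\theta)=\max_{\theta'} f(\theta,\theta')$ is then a standard $\min_\theta\max_{\theta'}$ problem, and Algorithm~\ref{alg1} is exactly simultaneous GDA on $f$: descent in $\theta$ and ascent in $\theta'$.

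\textbf{Step 1 (structure of $f$).}
\begin{itemize}
\item By Lemma~\ref{smoothnessh}, $f$ is $\ell$-smooth jointly in $(\theta,\theta')$.
\item The log-sum-exp of $\mu_i$-weakly convex functions is $\bar\mu$-weakly convex, because log-sum-exp is convex and nondecreasing in each argument. Adding $\frac{\lambda}{2}\|\theta-\theta'\|^2$ then makes $h$ $(\lambda-\bar\mu)$-strongly convex in $\theta'$. This sharpens Lemma~\ref{lemmsc}.
\item Hence $f(\theta,\cdot)$ is $(\lambda-\bar\mu)$-strongly concave, with condition number $\kappa=\ell/(\lambda-\bar\mu)$.
\end{itemize}

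\textbf{Step 2 (regularity of $v_{\lambda,\tau}$).}
\begin{itemize}
\item By Danskin's theorem, $\nabla v_{\lambda,\tau}(\theta)=\nabla_\theta f(\theta,\theta^\star_{\lambda,\tau}(\theta))$.
\item Strong concavity together with joint smoothness makes $\theta^\star_{\lambda,\tau}(\cdot)$ $\kappa$-Lipschitz.
\item Consequently $v_{\lambda,\tau}$ is $(\ell+\kappa\ell)=\mathcal{O}(\kappa\ell)$-smooth. This is Lemma 4.3 of \cite{lin2020gradient}, and we reproduce it for our $f$.
\end{itemize}

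\textbf{Step 3 (descent plus tracking).} Let $\delta^k=\|\theta^\star_{\lambda,\tau}(\theta^k)-\theta'^k\|^2$.
\begin{itemize}
\item The descent lemma for $v_{\lambda,\tau}$ with stepsize $\eta_\theta$ gives
\[
v(\theta^{k+1})\le v(\theta^k)-\tfrac{\eta_\theta}{2}\|\nabla v(\theta^k)\|^2+\tfrac{\eta_\theta\ell^2}{2}\delta^k
\]
up to constants. The error term comes from $\|\nabla_\theta f(\theta^k,\theta'^k)-\nabla v(\theta^k)\|\le \ell\sqrt{\delta^k}$.
\item The ascent step with $\eta_{\theta'}=1/\ell$ contracts the tracking error by a factor $(1-1/(2\kappa))$.
\item The drift of $\theta^\star$ between iterations is controlled by its $\kappa$-Lipschitzness: $\|\theta^\star(\theta^{k+1})-\theta^\star(\theta^k)\|\le\kappa\eta_\theta\|\nabla_\theta f\|$.
\item Combining these with Young's inequality gives a recursion $\delta^{k}\le(1-\tfrac{1}{4\kappa})^k\delta^0+\mathcal{O}(\kappa^3\eta_\theta^2)\sum_j\|\nabla v(\theta^j)\|^2$ plus similar terms.
\end{itemize}

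\textbf{Step 4 (summing).}
\begin{itemize}
\item Choosing $\eta_\theta=\Theta(1/(\kappa^2\ell))$ makes the feedback from $\delta^k$ into the descent inequality absorbable by the $-\frac{\eta_\theta}{4}\|\nabla v\|^2$ term.
\item Telescoping over $k=0,\dots,K-1$ and using the geometric sum $\sum_k(1-1/(4\kappa))^k\le 4\kappa$ yields
\[
\frac1{K+1}\sum_k\|\nabla v(\theta^k)\|^2\lesssim \frac{\kappa^2\ell\Delta_v+\kappa\ell^2\delta^0}{K}.
\]
\item Since $\hat\theta$ is drawn uniformly from the iterates, the left side equals $\mathbb{E}\|\nabla v(\hat\theta)\|^2$.
\item Setting this at most $\epsilon^2$ gives the stated complexity. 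The $(\delta^0)^2$ in the statement should follow from a looser bound, or should possibly be $\delta^0$. I will match the paper's form by noting that $\delta^0\le(\delta^0)^2+1$, absorbed into constants, if needed.
\end{itemize}

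\textbf{Main obstacle.} The coupled recursion in Step 3 is the hard part. The tracking error must be controlled while $\theta$ is moving, and the constants must be tuned so that the $\kappa^2$ scaling of $\eta_\theta$ exactly cancels the accumulated drift. I would follow the proof of Theorem 4.4 of \cite{lin2020gradient} line by line, substituting our $\ell$ and strong-concavity modulus $\lambda-\bar\mu$. The one verification specific to our setting is the weak convexity of the log-sum-exp composition in Step 1.
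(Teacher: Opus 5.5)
Your plan is correct and follows the paper's argument. The paper also treats problem~\eqref{eq:vltau} as a nonconvex--strongly-concave minimax problem. It imports from \cite{lin2020gradient} two inequalities: the tracking recursion $\delta^k\le\gamma\,\delta^{k-1}+4\kappa^3\eta_\theta^2\|\nabla v_{\lambda,\tau}(\theta^{k-1})\|^2$, with $\gamma=1-\frac{1}{2\kappa}+4\kappa^3\ell^2\eta_\theta^2$, and the descent inequality with error term $\frac{9\eta_\theta\ell^2}{16}\delta^{k-1}$. It then unrolls, sums, and absorbs using $\eta_\theta=\Theta(1/(\kappa^2\ell))$. You re-derive those two inequalities yourself (Danskin, $\kappa$-Lipschitz $\theta^\star_{\lambda,\tau}$, contraction plus Young) rather than citing them. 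You also correctly make explicit the $(\lambda-\bar\mu)$-strong convexity in $\theta'$ that defines $\kappa$; the paper's Lemma~\ref{lemmsc} only asserts strict convexity.

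There are two corrections.

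\emph{Keep the geometric weights when unrolling.} The unrolled recursion should read $\delta^k\le\gamma^k\delta^0+4\kappa^3\eta_\theta^2\sum_{j<k}\gamma^{k-1-j}\|\nabla v_{\lambda,\tau}(\theta^j)\|^2$. If you drop $\gamma^{k-1-j}$ as written in your Step 3, summing over $k$ costs a factor $K$ instead of $1/(1-\gamma)=O(\kappa)$. The feedback into the descent sum then has coefficient of order $\eta_\theta\,c^2K/\kappa$ for $\eta_\theta=c/(\kappa^2\ell)$, which cannot be absorbed. With the weights, exchanging the double sum gives exactly the $O(\kappa)$ factor the paper uses.

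\emph{Discard the $(\delta^0)^2$ patch.} The inequality $\delta^0\le(\delta^0)^2+1$ leaves an additive $\kappa\ell^2/\epsilon^2$, which is not a constant multiple of the stated bound. Moreover, for $\delta^0<1$ the squared form is strictly stronger than what the argument yields. Your derivation proves the bound with $\delta^0$, and so does the paper's, whose final display reads $\kappa^2\ell\,\Delta_v+\kappa\ell^2\delta^0$. The square in the statement is a typo, as a scaling check confirms. Reparametrizing by the variable $\theta/s$ leaves $\kappa$, $\Delta_v$ and the true iteration count unchanged, while $\ell\mapsto s^2\ell$, $\epsilon\mapsto s\epsilon$ and $\delta^0\mapsto\delta^0/s^2$. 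Hence $\kappa\ell^2\delta^0/\epsilon^2$ is invariant, but $\kappa\ell^2(\delta^0)^2/\epsilon^2$ is not. Conclude with the $\delta^0$ version and note the discrepancy rather than forcing a match.
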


Theorem~\ref{thm:gda_complexity} establishes the convergence guarantee of TTGDA for problem~\eqref{eq:vltau}. The condition number $\kappa$ captures the intrinsic asymmetry between the outer and inner variables and leads to a two-time-scale stepsize separation $\eta_{\theta'}/\eta_{\theta}=\Theta(\kappa^2)$. Such a separation is necessary for stability when using gradient methods in nonconvex-strongly-convex minimax optimization~\cite{lin2020gradient,xu2023unified,zhang2024generalization}.\footnote{***any technical innovation in this analysis? if yes, highlight it.}


\section{Experiments}
In this section, we evaluate SIMS on three standard multi-task benchmarks
to demonstrate its effectiveness across diverse task compositions and objective scales. Furthermore, we conduct ablation studies on the smoothing parameter $\tau$ and the impact of the regularization parameter $\lambda$. In addition, we investigate different transformation functions $\psi$ and demonstrate the superiority of SIMS over heuristic loss normalization strategies.

\subsection{Experimental Setup}
\noindent \textbf{Datasets.}
The following datasets are used: (i) NYUv2~\cite{silberman2012indoor}, which is an indoor
scene understanding dataset. It has 3 tasks (13-class semantic segmentation, depth estimation, and
surface normal prediction) with 795 training and 654 testing images. 
(ii) CityScapes~\cite{cordts2016cityscapes}, which is an urban scene understanding dataset. It has 2 tasks (7-class semantic segmentation and depth estimation) with 2,975 training and 500 testing images.
And (iii) PASCAL-Context~\cite{everingham2010pascal}, which is a challenging scene understanding dataset for images in the wild. We conduct experiments on 4 tasks (21-class semantic segmentation, 7-class human parts segmentation, saliency estimation, and surface normal estimation) with 4,998 training and 5,105 testing images. These benchmarks were selected for their diverse task definitions and supervision statistics. This diversity induces large disparities in objective scales, serving as a rigorous setting to demonstrate the scale-invariant property of our proposed method.
\vspace{0.5\baselineskip}

\noindent \textbf{Baselines.}
We compare the proposed SIMS with other scalarization methods, including Equal Weight (EW), STCH~\cite{lin2024smooth}, and FOOPS~\cite{chen2025efficient}. To assess competitiveness against state-of-the-art multi-task architectures, we also compare with representative methods across three categories. For CNN-based methods, we include Hard-Parameter Sharing (HPS), employing a shared backbone with task-specific output heads, Cross-Stitch~\cite{misra2016cross}, Multi-Task Attention Network (MTAN)~\cite{liu2019end}, and NDDR-CNN~\cite{gao2019nddr}. Regarding Transformer-based methods, we include VTAGML~\cite{bhattacharjee2023vision}, SwinMTL~\cite{taghavi2024swinmtl}, and DenseMTL \cite{lopes2023cross}. Finally, we consider foundation models utilizing parameter-efficient fine-tuning  \cite{hu2022lora}, 
such as MultiLoRA~\cite{wang2023multilora} and MTSAM~\cite{wang2025mtsam}.
\vspace{0.5\baselineskip}

\noindent \textbf{Evaluation metric.}
We report the common evaluation metrics for each task and provide comprehensive details in Appendix~\ref{appdix:metric}. Following the experimental setup in~\cite{maninis2019attentive},  we use the average of the relative improvement ($\Delta_b$) of each task over the baseline method (i.e., HPS) to quantify overall performance, which is formulated as $\Delta_b = \frac{1}{T} \sum_{i=1}^{T} \frac{1}{K_i} \sum_{j=1}^{K_i}
\frac{(-1)^{s_{i,j}} \left( M_{i,j}^{p} - M_{i,j}^{\text{base}} \right)}
{M_{i,j}^{\text{base}}}$, where 
$T$ denotes the number of tasks and $K_i$ is the number of evaluation metrics associated with task $i$.
$M_{i,j}^{p}$ and $M_{i,j}^{\text{base}}$ represent the performance of method $p$ and the baseline method on the $j$-th metric of task $i$, respectively.
The indicator $s_{i,j} \in \{0,1\}$ specifies the optimization direction of each metric, where $s_{i,j}=1$ indicates that lower values correspond to better performance, and $s_{i,j}=0$ otherwise.
\vspace{0.5\baselineskip}

\noindent \textbf{Model.}
We adopt the paradigm of adapting pretrained backbones via PEFT \cite{hu2022lora,liu2022polyhistor}. 
Specifically, we utilize the image encoder from SAM2.1-Large \cite{ravi2024sam} as the foundational feature extractor. To adapt this frozen backbone for MTL, we employ vanilla Low-Rank Adaptation (LoRA) \cite{hu2022lora}. Formally, LoRA approximates the weight update $\Delta W$ for a pretrained weight matrix $W_0 \in \mathbb{R}^{d \times k}$ as the product of two low-rank matrices $B \in \mathbb{R}^{d \times r}$ and $A \in \mathbb{R}^{r \times k}$, where the rank $r \ll \min(d, k)$.
The forward pass is thus modified as $h = W_0 x + s\Delta W x = W_0 x + sB A x$, where $s$ is a scaling coefficient.
We inject these shared LoRA adapters into the Query, Key, and Value (QKV) projections of the transformer layers to learn shared representations across different downstream tasks while keeping most parameters frozen. Finally, following prior work~\cite{wang2025mtsam}, we employ a modified lightweight prompt encoder to decode these features for specific outputs.

\begin{table*}[!phbt]
\centering

\setlength{\tabcolsep}{6pt}
\renewcommand{\arraystretch}{1}
\caption{Performance on three tasks on the NYUv2 dataset. 
Results marked with $^{\dagger}$ are copied from prior work. The best results for each task are shown in \textbf{bold}, and the second best are underlined.
$\uparrow(\downarrow)$ indicates that higher (lower) is better.}
\vspace{-3pt}
\label{table:nyu}
\begin{tabular}{lcccccccccc}
\toprule
\multirow{3}{*}[-1ex]{\textbf{Method}}
& \multicolumn{2}{c}{\textbf{Segmentation}}
& \multicolumn{2}{c}{\textbf{Depth}}
& \multicolumn{5}{c}{\textbf{Surface Normal}}
& \multirow{3}{*}[-1ex]{$\Delta_b \uparrow$} \\
\cmidrule(lr){2-3}\cmidrule(lr){4-5}\cmidrule(lr){6-10}
& \multirow{2}{*}[-0.4ex]{\textbf{mIoU}$\uparrow$}
& \multirow{2}{*}[-0.4ex]{\textbf{Pix Acc}$\uparrow$}
& \multirow{2}{*}[-0.4ex]{\textbf{Abs Err}$\downarrow$}
& \multirow{2}{*}[-0.4ex]{\textbf{Rel Err}$\downarrow$}
& \multicolumn{2}{c}{\textbf{Angle Distance}}
& \multicolumn{3}{c}{\textbf{Within $t^\circ$}}
& \\
\cmidrule(lr){6-7}\cmidrule(lr){8-10}
& & & & 
& \textbf{Mean}$\downarrow$
& \textbf{Median}$\downarrow$
& \textbf{11.25}$\uparrow$
& \textbf{22.5}$\uparrow$
& \textbf{30}$\uparrow$
& \\
\midrule
HPS$^{\dagger}$          & 54.48 & 75.82 & 0.3839 & 0.1548 & 23.50 & 17.06 & 35.31 & 61.10 & 72.14 & +0.00\% \\
Cross-Stitch$^{\dagger}$ & 53.46 & 75.49 & 0.3804 & 0.1555 & 23.01 & 16.33 & 37.01 & 62.42 & 73.02 & +0.66\% \\
MTAN$^{\dagger}$         & 54.74 & 75.78 & 0.3796 & 0.1549 & 22.97 & 16.30 & 36.91 & 62.63 & 73.32 & +0.77\% \\
NDDR-CNN$^{\dagger}$     & 53.84 & 75.23 & 0.3871 & 0.1560 & 22.60 & 16.07 & 37.67 & 63.43 & 73.92 & +0.91\% \\
VTAGML$^{\dagger}$       & 58.60 & 78.63 & 0.3716 & 0.1525 & 22.05 & 15.70 & 38.14 & 64.28 & 74.50 & +4.70\% \\
DenseMTL$^{\dagger}$     & 56.65 & 77.68 & 0.3569 & 0.1391 & 22.03 & 15.87 & 37.25 & 64.67 & 75.47 & +5.88\% \\
SwinMTL$^{\dagger}$      & 64.23 & 82.78 & 0.2841 & 0.1129 & 18.94 & 13.34 & 43.35 & 71.32 & 80.89 & +19.55\% \\
MultiLoRA$^{\dagger}$        & 64.85 & 83.07 & 0.3113 & 0.1220 & 17.26 & 12.19 & 48.28 & 74.65 & 83.58 & +20.11\% \\
MTSAM$^{\dagger}$ & 65.98 & 83.42 & 0.2898 & 0.1140 & \textbf{16.34} & \textbf{11.33} & \textbf{51.22} & \textbf{77.20} & \textbf{85.51} & +23.93\% \\
\midrule
EW    & 66.64 & 84.21 & 0.2817 & 0.1107 & 17.86 & 12.66 & 45.64 & 73.39 & 82.80 & +22.34\% \\
GLS   & 66.80 & 84.44 & 0.2762 & 0.1084 & 17.23 & 12.12 & 47.50 & 74.96 & 84.03 & \underline{+23.95\%} \\
STCH  & 66.27 & 83.87 & \underline{0.2742} & \underline{0.1077} & 17.25 & 12.12 & 47.61 & 74.79 & 84.88 & +23.81\% \\
FOOPS & \textbf{67.50} & \textbf{84.80} & 0.2787 & 0.1096 & 17.60 & 12.42 & 46.62 & 74.07 & 83.30 & +23.46\% \\
SIMS (Ours)  & \underline{67.12} & \underline{84.70} & \textbf{0.2722} & \textbf{0.1050} & \underline{16.85}   & \underline{11.65}   & \underline{49.35} & \underline{75.85} & \underline{84.54} & +\textbf{25.44\%} \\
\bottomrule
\end{tabular}
\vspace{-6pt}
\end{table*}

\vspace{0.5\baselineskip}

\noindent \textbf{Implementation Details.}
We optimize the model using Adam with a learning rate of $1\times10^{-4}$ and weight decay of $1\times10^{-6}$, while applying a linear warm-up for the first $10\%$ of training steps. We maintain a consistent LoRA configuration across these datasets \footnote{***cy: Check learning rate here. Maybe delete `and learning rate'. `main datasets'? Maybe delete `the main'. \edit{Done}} by setting the rank to 32, the dropout rate to 0.1, and the scaling coefficient to 0.5. Models are trained for 100 epochs on NYU and Cityscapes, and 30 epochs on PASCAL-Context. The batch size of all datasets is 4. For computational efficiency, we use the SAM2.1-Tiny backbone on NYUv2 and accordingly reduce the LoRA rank to 16 while keeping all other settings unchanged in Sec.~\ref{subsec:trans} and Sec.~\ref{subsec:prenorm}. More details are provided in Appendix~\ref{app:mtldetails}.

We follow the configurations reported in the original papers for all baseline methods. In particular, for STCH, we adhere to the default setting that applying objective normalization proposed by~\cite{dai2023improvable} to alleviate the mismatch in objective scales across tasks. In addition, SIMS, STCH, and FOOPS all utilize log-sum-exp smoothing. Since~\cite{chen2025efficient} and our subsequent ablation studies indicate that performance is relatively insensitive to the smoothing temperature within a reasonable range, we use a shared smoothing parameter $\tau$ across these methods to ensure a fair comparison.

\begin{table}[!pbht]
\centering
\setlength{\tabcolsep}{2pt}
\renewcommand{\arraystretch}{1}
\caption{Performance on two tasks on the CityScapes dataset.
Results marked with $^{\dagger}$ are copied from prior work. The best results for each task are shown in \textbf{bold}, and the second best are underlined.
$\uparrow(\downarrow)$ indicates that higher (lower) is better.}
\vspace{-4pt}
\label{tab:cityscapes_results}
\resizebox{1.0\columnwidth}{!}{
\begin{tabular}{lcccccc}
\toprule
\multirow{2}{*}{\textbf{Method}} 
& \multicolumn{2}{c}{\textbf{Segmentation}} 
& \multicolumn{2}{c}{\textbf{Depth}} 
& \multirow{2}{*}{$\Delta_b \uparrow$} \\
\cmidrule(lr){2-3}\cmidrule(lr){4-5}
& \textbf{mIoU}$\uparrow$ & \textbf{Pix Acc}$\uparrow$ & \textbf{Abs Err}$\downarrow$ & \textbf{Rel Err}$\downarrow$ &  \\
\midrule
HPS$^{\dagger}$          & 67.40 & 90.92 & 0.0142 & 45.4262 & +0.00\% \\
STL$^{\dagger}$          & 68.13 & 91.28 & 0.0133 & 45.0390 & +2.17\% \\
Cross-Stitch$^{\dagger}$ & 68.01 & 91.29 & 0.0135 & 44.4246 & +2.11\% \\
MTAN$^{\dagger}$         & 68.97 & 91.59 & 0.0136 & 43.7508 & +2.74\% \\
NDDR-CNN$^{\dagger}$     & 68.02 & 91.25 & 0.0137 & 44.8662 & +1.51\% \\
VTAGML$^{\dagger}$       & 73.70 & 93.23 & 0.0138 & 42.8304 & +5.10\% \\
DenseMTL$^{\dagger}$     & 69.75 & 91.45 & 0.0152 & 52.1401 & -4.44\% \\
SwinMTL$^{\dagger}$      & 73.33 & 92.87 & 0.0132 & 38.0720 & +8.54\% \\
MultiLoRA$^{\dagger}$        & 87.23 & 96.67 & 0.0127 & 31.0091 & +19.51\% \\
MTSAM$^{\dagger}$ & 87.45 & 96.80 & 0.0113 & 33.0086 & +20.99\% \\
\midrule
EW & \textbf{88.24} & \textbf{97.02} & 0.0097 & 41.3157 & +19.59\% \\
GLS & 87.79 & 96.88 & \underline{0.0085} & 25.9219 & +29.97\% \\
STCH  & 88.07 & 96.95 & 0.0088 & \textbf{24.5415} & \underline{+30.33\%} \\
FOOPS & \underline{88.21} & \underline{97.00} & 0.0096 & 38.4143 & +21.35\% \\
SIMS (Ours)  & 87.91 & 96.90 & \textbf{0.0084} & \underline{24.8550} & +\textbf{30.78\%} \\
\bottomrule
\end{tabular}
}
\vspace{-10pt}
\end{table}

\begin{table}[!phbt]
\centering
\setlength{\tabcolsep}{2pt}
\renewcommand{\arraystretch}{1}
\caption{Performance on four tasks on the PASCAL-Context dataset. 
Results marked with $^{\dagger}$ are from prior work. The best results for each task are shown in \textbf{bold}, and the second best are underlined.
$\uparrow(\downarrow)$ indicates that higher (lower) is better.}
\vspace{-4pt}
\label{tab:pascal_context_results}
\resizebox{0.9\columnwidth}{!}{
\begin{tabular}{lccccc}
\toprule
\textbf{Method} & \textbf{Seg.}$\uparrow$ & \textbf{H.Parts}$\uparrow$ & \textbf{Sal.}$\uparrow$ & \textbf{Normal}$\downarrow$ & $\Delta_b \uparrow$ \\
\midrule
HPS$^{\dagger}$          & 64.77 & 57.91 & 64.10 & \textbf{14.21} & +0.00\% \\
STL$^{\dagger}$          & 65.14 & 58.58 & 65.02 & 15.94 & -2.25\% \\
Cross-Stitch$^{\dagger}$ & 64.97 & 58.63 & 64.46 & 15.32 & -1.42\% \\
MTAN$^{\dagger}$         & 64.56 & 59.08 & 64.57 & 14.74 & -0.33\% \\
NDDR-CNN$^{\dagger}$     & 65.28 & 59.18 & 65.09 & 15.57 & -1.26\% \\
MultiLoRA$^{\dagger}$    & 72.39 & 67.78 & 71.66 & 20.07 & -0.16\% \\
MTSAM$^{\dagger}$        & 74.13 & 71.04 & 76.28 & 17.10 & \underline{+8.95\%} \\
\midrule
EW                      & 74.83 & 71.92 & 74.36 & 18.42 & +6.53\% \\
GLS                     & 76.36 & \textbf{74.92} & 76.22 & 18.77 & +8.54\% \\
FOOPS                    & \underline{77.56} & 73.56 & 74.78 & \underline{18.17} & +8.89\% \\
STCH                     & 73.66 & 73.18 & \underline{76.97} & 18.18 & +8.06\% \\
SIMS (Ours)        & \textbf{77.60} & \underline{73.88} & \textbf{77.56} & 18.22 & +\textbf{10.04\%} \\
\bottomrule
\end{tabular}
}
\vspace{-6pt}
\end{table}




\begin{table*}[!pbht]
\centering
\setlength{\tabcolsep}{6pt}
\renewcommand{\arraystretch}{1}
\caption{Comparison of transformation functions and loss normalization strategies. Performance on three tasks on the NYUv2 dataset. 
The best results are shown in \textbf{bold}, and the second best are underlined.
$\uparrow(\downarrow)$ indicates that higher (lower) is better.}
\vspace{-3pt}
\label{table:trans}
\resizebox{1.85\columnwidth}{!}{
\begin{tabular}{ccccccccccc}
\toprule
\multirow{3}{*}[-1ex]{\textbf{$\psi$}}
& \multicolumn{2}{c}{\textbf{Segmentation}}
& \multicolumn{2}{c}{\textbf{Depth}}
& \multicolumn{5}{c}{\textbf{Surface Normal}}
& \multirow{3}{*}[-1ex]{$\Delta_b \uparrow$} \\
\cmidrule(lr){2-3}\cmidrule(lr){4-5}\cmidrule(lr){6-10}
& \multirow{2}{*}[-0.4ex]{\textbf{mIoU}$\uparrow$}
& \multirow{2}{*}[-0.4ex]{\textbf{Pix Acc}$\uparrow$}
& \multirow{2}{*}[-0.4ex]{\textbf{Abs Err}$\downarrow$}
& \multirow{2}{*}[-0.4ex]{\textbf{Rel Err}$\downarrow$}
& \multicolumn{2}{c}{\textbf{Angle Distance}}
& \multicolumn{3}{c}{\textbf{Within $t^\circ$}}
& \\
\cmidrule(lr){6-7}\cmidrule(lr){8-10}
& & & & 
& \textbf{Mean}$\downarrow$
& \textbf{Median}$\downarrow$
& \textbf{11.25}$\uparrow$
& \textbf{22.5}$\uparrow$
& \textbf{30}$\uparrow$
& \\
\midrule
\multicolumn{11}{c}{Our Method} \\
$\ln(x)$      & \underline{54.57} & 76.72 & \textbf{0.3614} & 0.1476 & \textbf{20.22} & \textbf{14.03} & \textbf{41.48} & \textbf{68.29} & \textbf{78.19} & \textbf{0.00\%} \\
\midrule
\multicolumn{11}{c}{Different Transformation} \\
$x^2$      & 53.15 & 76.26 & 0.3803 & 0.1548 & 21.99 & 15.79 & 37.07 & 63.85 & 74.57 & -5.09\% \\
$x$        & 54.12 & 76.68 & 0.3726 & 0.1519 & 21.24 & 15.03 & 38.90 & 65.76 & 76.12 & -2.80\% \\
$\sqrt{x}$     & 54.21 & 76.80 & 0.3663 & 0.1498 & 20.86 & 14.67 & 39.94 & 66.66 & 76.85 & -1.61\% \\
$asinh(\frac{x}{0.1})$ & 53.75 & \underline{76.81} & \underline{0.3631} & 0.1479 & 20.48 & 14.29 & 40.87 & 67.66 & 77.68 & -0.75\% \\
$asinh(\frac{x}{0.01})$& \textbf{54.99} & \textbf{77.06} & 0.3643 & 0.1478 & \underline{20.29} & 14.12 & 41.26 & \underline{68.18} & \underline{78.12} & \underline{-0.07\%} \\
\midrule
\multicolumn{11}{c}{Loss Normalization} \\
Fix@Epoch 2& 53.72 & 76.64 & 0.3639 & 0.1479 & 20.63 & 14.55 & 40.14 & 67.20 & 77.38 & -1.20\% \\
Epoch-wise&53.94 & 76.39 & \underline{0.3631} & \textbf{0.1472} & 20.25 & \underline{14.10} & \underline{41.36} & 68.15 & \underline{78.12} & -0.38\% \\
EMA      & 53.98 & 76.79 & 0.3645 & \underline{0.1475} & 20.65 & 14.60 & 39.92 & 67.07 & 77.32 & -1.15\% \\

\bottomrule
\end{tabular}
}
\vspace{-3pt}
\end{table*}

\subsection{Main Results}

Our results on NYUv2, Cityscapes, and PASCAL-Context are reported in Tables~\ref{table:nyu},~\ref{tab:cityscapes_results} and~\ref{tab:pascal_context_results}.\footnote{***cy: It would be better to underline the second-best result since you are always the second-best result in Table 1. and then explain the underline in the legend. \edit{Done}} For convenience, we directly report some results from prior work~\cite{wang2025mtsam}, which are marked with $^{\dagger}$.
As presented in the results, SIMS achieves state-of-the-art (SOTA) average performance across all three benchmarks. 

Notably, EW consistently yielded the lowest performance among all evaluated scalarization methods. Crucially, the advantage of SIMS extend beyond superior average metrics. The intrinsic scale invariance of SIMS\footnote{***cy: check `scale-invariant formulation' \edit{Done}} enables it to yield a balanced performance profile that avoids the extreme trade-offs where performance on specific task is severely compromised. This superiority is particularly pronounced on the Cityscapes dataset as shown in Table~\ref{tab:cityscapes_results}, where we observe that FOOPS converges to solutions heavily dominated by the segmentation objective, neglecting depth estimation due to the scales of losses. While STCH mitigates this via explicit objective normalization and attains results comparable to SIMS, such heuristic alignment relies on statistical estimation that may face scalability challenges. As the number of tasks increases (Tables~\ref{table:nyu} and ~\ref{tab:pascal_context_results}), these normalization estimates may become increasingly unstable or inaccurate. In contrast, SIMS \edit{exhibits intrinsic scale invariance, eliminating the need for explicit alignment of the loss magnitudes.}\footnote{***cy: check robustness \edit{Done}} 
These quantitative gains are further corroborated by the visualization in Fig.~\ref{fig:toy-scale}, where SIMS maintains consistent optimization trajectories invariant to objective scaling, thereby avoiding the extreme trade-off solutions that plague baseline methods.

\begin{figure}[!tbph]
    \centering
    \vspace{-5pt}
    \includegraphics[width=1\linewidth]{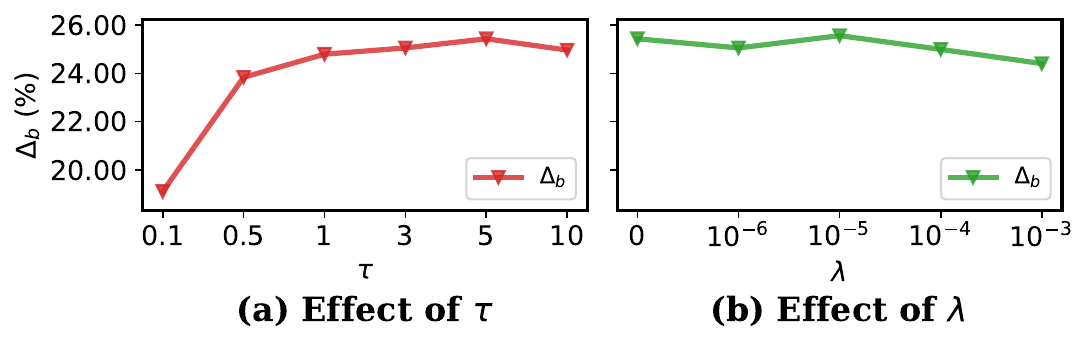}
    \captionsetup{skip=2pt}
    \caption{Ablation study on the hyper-parameters $\tau$ and $\lambda$.}
    \label{fig:ablation_tau_lambda}
    \vspace{-10pt}
\end{figure}

\subsection{Ablation Studies}
\label{sec:ablation}
We conduct ablation studies on the impact of the smoothing parameter $\tau$ and the regularization parameter $\lambda$.
\vspace{0.5\baselineskip}

\noindent \textbf{Effect of $\tau$.}\footnote{***cy: Why `Sensitivity with respect to X' but not `Effect of X' \edit{Done}}
We perform a sensitivity study on the parameter $\tau$ to examine how varying $\tau$ affects the performance of our method, with all other hyper-parameters fixed to the same settings as in previous experiments. According to the results shown in Fig.~\ref{fig:ablation_tau_lambda}(a), we can see that within a reasonable range (i.e., $[1, 10]$)\footnote{***cy: complete it. \edit{Done}} of $\tau$, our method achieves consistently stable performance, achieving competitive or state-of-the-art overall performance. The full results can be found in Appendix~\ref{appdix:tau}.
\vspace{0.8\baselineskip}

\noindent \textbf{Effect of $\lambda$. }
Recall that our method optimizes the smoothed surrogate objective
$v_{\lambda, \tau}(\theta)=-
    \min_{\theta' \in \mathbb{R}^d}
    h_{\lambda,\tau}(\theta,\theta'),$
where $\lambda \ge 0$ weights the proximal regularization term. This term improves the curvature of the inner minimization and can yield better-conditioned gradients in practice. According to Proposition~\ref{theov}, $\lambda$ also affects the Pareto interpretation of the surrogate: when $\lambda=0$, $v_{0, \tau}(\theta)\le 0$ implies an $\epsilon$-weak Pareto guarantee, whereas $\lambda > 0$ can recover weak Pareto optimality under additional mild conditions. As shown in Fig.~\ref{fig:ablation_tau_lambda}(b), we find SIMS to be insensitive to variations in $\lambda$ across several orders of magnitude, with small positive values sometimes providing modest gains. Nevertheless, to keep the approach tuning-light and to use a single uniform setting across all benchmarks, we set $\lambda =0$ in all experiments. The full results can be found in Appendix~\ref{appdix:lambda}.

\vspace{-5pt}

\subsection{Effect of Transformation Functions}
\label{subsec:trans}
Beyond logarithmic transformations, we further investigate the optimality of our choice for the transformation function $\psi$ and its impact on the resulting multi-task trade-off. Since the transformations $\psi$ is required to be monotonically increasing, we consider (i) a power function family $\psi(x)=x^a$ with $a\in\{0.5,1,2\}$ and (ii) inverse-hyperbolic-sine transforms $\psi(x)=asinh(cx)$ with $c\in\{0.1,0.01\}$.

As shown in Table~\ref{table:trans}, in the power function family, the expansive setting ($a=2$) results in the poorest performance by aggravating discrepancies of loss scale, whereas the compressive setting ($a=0.5$) offers tangible improvements over the linear baseline ($a=1$).\footnote{***cy: improvements compared to whom. \edit{Done}} Turning to the $\text{asinh}$ family, which serves as a smooth numerical approximation to the logarithm in engineering \cite{lupton1999modified}, the setting $c=0.01$ demonstrates remarkable performance, trailing the log baseline by only $\Delta_b = -0.07\%$. Overall, our empirical results demonstrate that the logarithmic transformation consistently outperforms all alternative candidates, confirming it as the optimal design choice.

\subsection{Comparison with Loss Normalization}
\label{subsec:prenorm}
We view loss normalization as a heuristic approach to scale invariance. To facilitate a direct comparison with SIMS, this strategy can be interpreted as applying a linear transformation to the task losses, where the associated linear coefficients are estimated from training statistics. We consider a Min–Max normalization baseline inspired by previous works~\cite{dai2023improvable,liu2019end,lakkapragada2023mitigating} that fixes the minimum to zero and varies the estimation scope of the maximum via three schemes: (i) a global estimate set to the mean loss at the second epoch, (ii) an epoch-wise estimate using the last epoch’s mean loss, and (iii) a step-wise estimate computed by an exponential moving average (EMA) over steps.

As shown in Table~\ref{table:trans}\footnote{***cy: As shown in Table XXX \edit{Done}}, all three variants substantially underperform compared to SIMS (i.e., $\Delta_b$ of $-1.20\%$, $-0.38\%$, and $-1.15\%$, respectively), with the epoch-wise variant being the strongest among them. This trend shows that purely statistic-driven normalization\footnote{***cy: `statistic-driven linear rescaling'? check. \edit{Done}} is fragile to the choice of reference, whereas SIMS removes the need for such auxiliary scale estimates by embedding scale invariance directly into the merit function construction.

\begin{figure}[!hpbt]
  \vspace{-5pt}
  \centering  \includegraphics[width=0.88\linewidth]{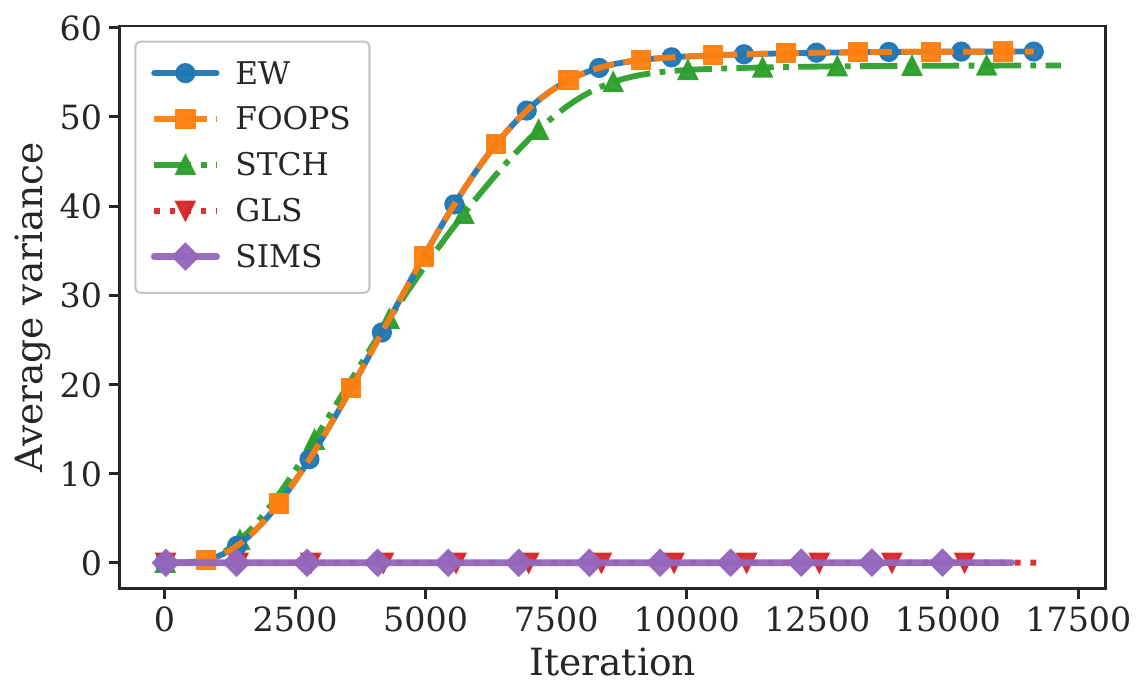}
  \captionsetup{skip=1pt}
  \caption{Trajectory Variance across Scales
  }
  \label{fig:var}
  \vspace{-10pt}
\end{figure}
\begin{add}

\subsection{Synthetic Experiments for Scale Invariance}
To further evaluate whether SIMS is invariant to task-wise loss rescaling, we conduct a quantitative synthetic experiment based on the motivating example in Section~\ref{sec:motivation}.
Here, we examine whether each scalarization method behaves consistently and produces similar trajectories when the relative scales of task losses are artificially changed.
Specifically, we fix the initialization at $(0,0)$ and stop once the distance to the Pareto front is below $0.05$. We evaluate each method under three fixed random seeds and seven rescaling settings: $1:1$, $1:1.25$, $1.25:1$, $1:10$, $10:1$, $100:1$, and $1:100$. These task-wise rescaling settings only change the numerical magnitudes of the objectives but do not alter the Pareto front.

As the quantitative metric, we compute the variance across the seven rescaling settings, where a smaller variance indicates stronger scale-invariant behavior. We analyze the results from both trajectory and final solution perspectives. As shown in Fig.~\ref{fig:var}, the trajectory variances of EW, FOOPS, and STCH increase rapidly during optimization, indicating that their optimization trajectories are strongly affected by task-wise rescaling. In contrast, GLS and SIMS keep the variance close to zero, suggesting nearly consistent trajectories across rescaling settings. At the final solutions, Table~\ref{tab:variance_comparison} shows that SIMS and GLS achieve the smallest variances, while EW, STCH, and FOOPS exhibit much larger variances. These results provide direct empirical evidence that SIMS is substantially more scale-invariant than existing baselines, further supporting the scale-invariant property of the proposed logarithmic transformation-induced merit function.

\end{add}

\begin{table}[t]
\centering
\caption{Variance of final converged solutions under different task-wise rescaling settings. Lower ($\downarrow$) is better.}
\label{tab:variance_comparison}
\begin{tabular}{lccc}
\toprule
Method & Var. of $\ell_1\downarrow$ & Var. of $\ell_2\downarrow$ & Average Var.$\downarrow$ \\
\midrule
EW     & 57.322545 & 57.322545 & 57.322545 \\
FOOPS  & 57.318462 & 57.318462 & 57.318462 \\
STCH   & 50.89209  & 60.6234   & 55.757745 \\
GLS    & 0.020195  & 0.020195  & 0.020195 \\
SIMS (Ours)   & 0.020105  & 0.020105  & 0.020105 \\
\bottomrule
\end{tabular}
\vspace{-10pt}
\end{table}

\section{Conclusion}
In this paper, we address the sensitivity of existing merit-function-based scalarization methods to the scales of task loss in multi-task learning. Specifically, we propose SIMS, a Scale-Invariant Merit-function-based Scalarization method, \edit{which employs a logarithmic transformation-induced merit function to convert the multi-objective problem into a single objective that is insensitive to loss magnitudes}. We theoretically prove that imposing scale invariance uniquely determines the transformation function to be logarithmic. Furthermore, we show that our formulation preserves weak Pareto optimality and admits a smooth surrogate with controllable approximation error. We demonstrate that the resulting surrogate objective can be efficiently solved via the TTGDA algorithm with established convergence guarantees. Extensive experiments on standard benchmarks demonstrate the effectiveness of SIMS. 

\begin{acks}
This work was supported by National Natural Science Foundation of China under Grant no. 62136005 and Shenzhen fundamental research program JCYJ20250604144724032.
\end{acks}

\bibliographystyle{ACM-Reference-Format}
\bibliography{reference}

\clearpage
\appendix

\section{Proofs}
\subsection{Proofs in Section~\ref{section:formulation}}
\begin{proof}[Proof of Theorem~\ref{theo:u0_merit}]
Let $\theta\in \mathbb{R}^d$. By the definition Eq.~\eqref{eq:ubar} of $\bar{u}$, we have
\begin{align*}
\bar{u}(\theta)
&=
\sup_{\theta'\in \mathbb{R}^d}
\min_{i\in[m]}
\bigl\{\psi(\mathcal{L}_i(\theta))-\psi(\mathcal{L}_i(\theta'))\bigr\}\\
&\geq
\min_{i\in[m]}
\bigl\{\psi(\mathcal{L}_i(\theta))-\psi(\mathcal{L}_i(\theta'))\bigr\}
=0, 
\end{align*}
which implies $\bar{u}(\theta)\ge 0$ for all $\theta\in \mathbb{R}^d$.

On the other hand, again by the definition Eq.~\eqref{eq:ubar} of $\bar{u}$, we obtain
\[
\bar{u}(\theta)=0
\;\Longleftrightarrow\;
\min_{i\in[m]}
\bigl\{\psi(\mathcal{L}_i(\theta))-\psi(\mathcal{L}_i(\theta'))\bigr\}\le 0,
\quad \forall\,\theta'\in \mathbb{R}^d.
\]
Hence, there does not exist any $\theta'\in \mathbb{R}^d$ such that
\[
\psi(\mathcal{L}_i(\theta))-\psi(\mathcal{L}_i(\theta'))>0,
\quad \forall\, i\in[m],
\]
or equivalently,
\[
\psi(\mathcal{L}_i(\theta'))<\psi(\mathcal{L}_i(\theta)),
\quad \forall\, i\in[m].
\]
Since $\psi(\cdot)$ a strictly increasing function, we have 
\[
\mathcal{L}_i(\theta')<\mathcal{L}_i(\theta),
\quad \forall\, i\in[m],
\]
which means that $\theta$ is weakly Pareto optimal for problem~\eqref{eq:1} by definition.
\end{proof}

\subsection{Proofs in Section~\ref{sec5}}
\begin{proof}[Proof of Theorem~\ref{thm:log_characterization}]
Fix any task $i$ and consider a special case in which the inner minimum is always attained at $i$ (e.g., other tasks have constant losses). Then the surrogate locally reduces to the difference form
\[
\bar u(\theta)=\psi(x)-\psi(y),
\]
for arbitrary positive values $x=\mathcal{L}_i(\theta)$ and
$y=\mathcal{L}_i(\theta')$. After rescaling by $c>0$, we obtain
\[
\bar u_c(\theta)=\psi(c x)-\psi(c y).
\]
By the assumed invariance property, we have that
$\bar u_c(\theta)-\bar u(\theta)=K(c)$ must be independent of $\theta$. Therefore, for all $x,y,c>0$,
\[
\psi(c x)-\psi(c y)=\psi(x)-\psi(y).
\]
Setting $y=1$ and defining $f(t)=\psi(t)-\psi(1)$ gives
\[
f(c x)=f(x)+f(c),\qquad \forall x,c>0,
\]
which is the multiplicative Cauchy functional equation
$f(xy)=f(x)+f(y)$.
Since $f$ is differentiable, its only solutions are $f(x)=a\ln x$ (~\citet{kuczma2009introduction}). Hence $\psi(x)=a\ln x+b$. Since $\psi$ is strictly increasing, we must have $a>0$.
\end{proof}

\subsection{Auxiliary Lemmas}
\label{app:aux_lemmas}
For convenience, we define the merit function $u_\lambda(\theta)$ and restate
the smoothed merit function $v_{\lambda,\tau}(\theta)$ below.

\begin{equation}
\label{eq:u_lambda}
u_\lambda(\theta) := \sup_{\theta'\in \mathbb{R}^d} \min_{i\in[m]}
\left\{\tilde{\mathcal{L}}_i(\theta)-\tilde{\mathcal{L}}_i(\theta')- \frac{\lambda}{2}\|\theta-\theta'\|^2\right\},
\end{equation}
and
\begin{equation}
\label{eq:v_lambda_tau}
v_{\lambda,\tau}(\theta) :=
-\min_{\theta'\in \mathbb{R}^d}\left\{\tau \ln
\left(\sum_{i=1}^m \exp\!\left(
\frac{\tilde{\mathcal{L}}_i(\theta')-\tilde{\mathcal{L}}_i(\theta)}{\tau}
\right)\right)+\frac{\lambda}{2}\|\theta-\theta'\|^2\right\}.\notag 
\end{equation}


\begin{lemma}[Theorems~3.1 and~3.3 in~\cite{tanabe2024new}]
\label{lem:u_lambda_merit}
For $\lambda\ge 0$, consider the merit function $u_\lambda(\theta)$ defined in Eq.~\eqref{eq:u_lambda}. Then $u_\lambda(\theta)\ge 0$ for all $\theta\in \mathbb{R}^d$. Moreover, $u_0(\theta)=0$ if and only if $\theta$ is weakly Pareto optimal. For $\lambda>0$, if $\theta$ is weakly Pareto optimal, then $u_\lambda(\theta)=0$; furthermore, if $\tilde{\mathcal{L}}_i(\theta)$ is convex for all $i\in[m]$, then $u_\lambda(\theta)=0$ implies that $\theta$ is weakly Pareto optimal.
\end{lemma}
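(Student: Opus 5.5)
The statement to prove is Lemma~\ref{lem:u_lambda_merit}, cited from Tanabe et al.; I would reprove it directly from the definition of $u_\lambda$ in Eq.~\eqref{eq:u_lambda}.

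\textbf{Nonnegativity.} Taking $\theta'=\theta$ inside the supremum makes every term in the minimum equal to zero. Hence $u_\lambda(\theta)\ge 0$ for every $\lambda\ge 0$.

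\textbf{The case $\lambda=0$.} Here $u_0=\bar u$. The equivalence then follows exactly as in the proof of Theorem~\ref{theo:u0_merit}.
\begin{itemize}
\item $u_0(\theta)>0$ holds if and only if some $\theta'$ gives $\tilde{\mathcal{L}}_i(\theta')<\tilde{\mathcal{L}}_i(\theta)$ for all $i$.
\item Since $\psi$ is strictly increasing, this is the same as $\mathcal{L}_i(\theta')<\mathcal{L}_i(\theta)$ for all $i$.
\item That is precisely the failure of weak Pareto optimality.
\end{itemize}

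\textbf{Necessity for $\lambda>0$.} Because the penalty term is nonnegative,
\[
\min_i\{\tilde{\mathcal{L}}_i(\theta)-\tilde{\mathcal{L}}_i(\theta')\}-\tfrac{\lambda}{2}\|\theta-\theta'\|^2\le \min_i\{\tilde{\mathcal{L}}_i(\theta)-\tilde{\mathcal{L}}_i(\theta')\}.
\]
Taking the supremum over $\theta'$ gives $0\le u_\lambda(\theta)\le u_0(\theta)$. So if $\theta$ is weakly Pareto optimal, then $u_0(\theta)=0$, and therefore $u_\lambda(\theta)=0$.

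\textbf{Sufficiency under convexity.} This is the only nontrivial step, and I would prove it by contraposition. Suppose $\theta$ is not weakly Pareto optimal, so some $\bar\theta$ satisfies $\tilde{\mathcal{L}}_i(\bar\theta)<\tilde{\mathcal{L}}_i(\theta)$ for all $i$. Set $\delta:=\min_i\{\tilde{\mathcal{L}}_i(\theta)-\tilde{\mathcal{L}}_i(\bar\theta)\}>0$.
\begin{itemize}
\item Take the segment point $\theta_t=\theta+t(\bar\theta-\theta)$ with $t\in(0,1]$.
\item By convexity of each $\tilde{\mathcal{L}}_i$, we have $\tilde{\mathcal{L}}_i(\theta_t)\le (1-t)\tilde{\mathcal{L}}_i(\theta)+t\tilde{\mathcal{L}}_i(\bar\theta)$, so $\tilde{\mathcal{L}}_i(\theta)-\tilde{\mathcal{L}}_i(\theta_t)\ge t\delta$ for every $i$.
\item The penalty equals $\tfrac{\lambda}{2}t^2\|\bar\theta-\theta\|^2$.
\item Hence $u_\lambda(\theta)\ge t\delta-\tfrac{\lambda}{2}t^2\|\bar\theta-\theta\|^2$, which is strictly positive for small enough $t>0$ because the linear term dominates the quadratic one.
\end{itemize}
This contradicts $u_\lambda(\theta)=0$, so $u_\lambda(\theta)=0$ forces weak Pareto optimality.

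The main subtlety is the scope of the convexity hypothesis. The argument needs convexity along segments from $\theta$, and the lemma states it for the transformed losses $\tilde{\mathcal{L}}_i$. If only "convex at $\theta$" were available, I would replace the chord inequality with the first-order inequality $\tilde{\mathcal{L}}_i(\bar\theta)\ge \tilde{\mathcal{L}}_i(\theta)+\langle g_i,\bar\theta-\theta\rangle$ and combine it with a Taylor upper bound. That route is more delicate, so I would rely on global convexity as stated.
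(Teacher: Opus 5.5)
Your proof is correct and follows essentially the paper's own arguments. The paper only cites Tanabe et al.\ for this lemma, but its proof of Theorem~\ref{theo:u0_merit} uses the same test point $\theta'=\theta$ and strict monotonicity of $\psi$. Its proof of Proposition~\ref{theov}(4) uses the same segment point $\theta_t$ with the chord inequality, arguing directly from $u_\lambda(\theta)=0$ by dividing by $t$ and letting $t\to 0$ instead of by contraposition. That chord inequality along segments from $\theta$ is exactly what the paper means by ``convex at $\theta$'', so your closing hedge about first-order inequalities is not needed.
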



\begin{lemma}[Log-sum-exp preserves weak convexity]
\label{lem:lse_weak_convex}
Let $\tilde{\mathcal{L}}_i(\theta)$, $i\in[m]$, be weakly convex with modulus
$\mu_i\in\mathbb{R}$. Let $\bar{\mu}=\max_{i\in[m]} \mu_i$. Then
\[
\ln\!\left(\sum_{i=1}^m e^{\tilde{\mathcal{L}}_i(\theta)}\right)
\]
is weakly convex with modulus $\bar{\mu}$.
\end{lemma}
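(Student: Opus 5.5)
The plan is to reduce the claim to a pointwise inequality for a fixed $\theta$, using the variational characterization of weak convexity. By Assumption~\ref{ass:weak_convexity} each $\tilde{\mathcal{L}}_i$ is $\mu_i$-weakly convex, and hence also $\bar{\mu}$-weakly convex with $\bar\mu=\max_i\mu_i$. So $g_i(\theta):=\tilde{\mathcal{L}}_i(\theta)+\frac{\bar\mu}{2}\|\theta\|^2$ is convex for every $i$. The goal is to show that
\[
F(\theta):=\ln\Bigl(\sum_{i=1}^m e^{\tilde{\mathcal{L}}_i(\theta)}\Bigr)+\frac{\bar\mu}{2}\|\theta\|^2
\]
is convex.

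The key step is to absorb the quadratic inside the log-sum-exp. Since $\frac{\bar\mu}{2}\|\theta\|^2$ does not depend on $i$, we have
\[
\ln\Bigl(\sum_{i=1}^m e^{\tilde{\mathcal{L}}_i(\theta)}\Bigr)+\frac{\bar\mu}{2}\|\theta\|^2=\ln\Bigl(\sum_{i=1}^m e^{\tilde{\mathcal{L}}_i(\theta)+\frac{\bar\mu}{2}\|\theta\|^2}\Bigr)=\mathrm{LSE}\bigl(g_1(\theta),\dots,g_m(\theta)\bigr).
\]
This is exact because $\ln(e^{c}\sum_i e^{a_i})=c+\ln\sum_i e^{a_i}$.

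Then I would invoke the standard composition rule (e.g.\ \cite{boyd2004convex}). $\mathrm{LSE}:\mathbb{R}^m\to\mathbb{R}$ is convex and nondecreasing in each argument, and each $g_i$ is convex, so the composition $\mathrm{LSE}\circ(g_1,\dots,g_m)$ is convex.

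For a self-contained argument without differentiability, I would write out the two inequalities for $t\in[0,1]$ and $\theta_1,\theta_2$. Convexity of each $g_i$ gives
\[
g_i(t\theta_1+(1-t)\theta_2)\le t g_i(\theta_1)+(1-t)g_i(\theta_2).
\]
Monotonicity of LSE lets us apply it componentwise to the right-hand side, and convexity of LSE in $\mathbb{R}^m$ then finishes.

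The only point needing care is the case where some $\mu_i$ is negative, which the lemma permits since $\mu_i\in\mathbb{R}$. Replacing $\mu_i$ by $\bar\mu\ge\mu_i$ is still valid: adding $\frac{\bar\mu-\mu_i}{2}\|\theta\|^2\ge 0$ to a convex function keeps it convex. That remains true even when $\bar\mu<0$, since it is the differences $\bar\mu-\mu_i$ that are nonnegative. I do not expect any real obstacle here; the main step is the observation that a common quadratic passes through the log-sum-exp.
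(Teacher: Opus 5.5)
Your proof is correct and follows essentially the same route as the paper: use $\bar\mu\ge\mu_i$ so that each $\tilde{\mathcal{L}}_i+\frac{\bar\mu}{2}\|\theta\|^2$ is convex, pull the common quadratic out of the log-sum-exp exactly, and conclude by convexity of the log-sum-exp of convex functions. You also make explicit two points the paper leaves implicit: the composition rule needs $\mathrm{LSE}$ to be nondecreasing in each argument, and replacing $\mu_i$ by $\bar\mu$ remains valid even when the moduli are negative.
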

\begin{proof}
By definition, since $\bar{\mu}=\max_{i\in[m]} \mu_i$, the function
\[
\tilde{\mathcal{L}}_i(\theta)+\frac{\bar{\mu}}{2}\|\theta\|^2
\]
is convex for all $i\in[m]$.
Because the log-sum-exp function preserves convexity, it follows that
\[
\ln\!\left(
\sum_{i=1}^m
e^{\tilde{\mathcal{L}}_i(\theta)+\frac{\bar{\mu}}{2}\|\theta\|^2}
\right)
\]
is convex.
Rearranging the above expression yields
\begin{align}
\ln\!\left(
\sum_{i=1}^m
e^{\tilde{\mathcal{L}}_i(\theta)+\frac{\bar{\mu}}{2}\|\theta\|^2}
\right)
=
\ln\!\left(\sum_{i=1}^m e^{\tilde{\mathcal{L}}_i(\theta)}\right)
+
\frac{\bar{\mu}}{2}\|\theta\|^2 \notag .
\end{align}
Hence,
\(
\ln\!\left(\sum_{i=1}^m e^{\tilde{\mathcal{L}}_i(\theta)}\right)
\)
is $\bar{\mu}$-weakly convex.
\end{proof}

\begin{lemma}[Smoothness of log-sum-exp smoothing]\label{lem:lse_smoothness}
Suppose that for all $i\in[m]$,
$\tilde{\mathcal{L}}_i$ is $\zeta_i$-smooth and $\ell_i$-Lipschitz continuous on $\mathbb{R}^d$.
Define
\begin{align}
    g_{\mu}(\theta)=\mu\ln\!\Big(\sum_{i=1}^{m}\exp\big(\tilde{\mathcal{L}}_i(\theta)/\mu\big)\Big),
    \quad \mu>0 \notag.
\end{align}
Then $g_{\mu}$ is continuously differentiable on $\mathbb{R}^d$, and its gradient is Lipschitz on $\mathbb{R}^d$ with
\begin{align}
    \|\nabla g_{\mu}(\theta)\!-\!\nabla g_{\mu}(\theta')\|
    \le\!
    \Big(\frac{\bar{\ell}^2}{\mu}+\bar{\zeta}\Big)
    \|\theta-\theta'\|,\, \forall~\theta,\theta'\in \!\mathbb{R}^d \notag.
\end{align} 
Equivalently, $g_{\mu}$ is $\Big(\frac{\bar{\ell}^2}{\mu}+\bar{\zeta}\Big)$-smooth on $\mathbb{R}^d$, where $\bar{\ell}=\max_{i}\ell_i$ and $\bar{\zeta}=\max_{i}\zeta_i$.
\end{lemma}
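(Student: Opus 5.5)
The plan is to compute the gradient and Hessian of $g_\mu$ explicitly and bound the Hessian's operator norm. This works directly when each $\tilde{\mathcal{L}}_i$ is twice differentiable; the general $C^{1,1}$ case is handled at the end by a gradient-difference argument. Write $F(\theta)=(\tilde{\mathcal{L}}_1(\theta),\dots,\tilde{\mathcal{L}}_m(\theta))^\top$ and $\mathrm{LSE}_\mu(z)=\mu\ln\sum_i e^{z_i/\mu}$, so that $g_\mu=\mathrm{LSE}_\mu\circ F$. Since $\mathrm{LSE}_\mu$ is $C^\infty$ and each $\tilde{\mathcal{L}}_i$ is $C^1$, the chain rule gives continuous differentiability with
\[
\nabla g_\mu(\theta)=\sum_{i=1}^m p_i(\theta)\nabla\tilde{\mathcal{L}}_i(\theta),\qquad p_i(\theta)=\frac{e^{\tilde{\mathcal{L}}_i(\theta)/\mu}}{\sum_j e^{\tilde{\mathcal{L}}_j(\theta)/\mu}} .
\]
Here $p(\theta)$ lies in the probability simplex.

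The central step is the bound
\[
\nabla g_\mu(\theta)-\nabla g_\mu(\theta') = \sum_i p_i(\theta)\bigl(\nabla\tilde{\mathcal{L}}_i(\theta)-\nabla\tilde{\mathcal{L}}_i(\theta')\bigr)+\sum_i\bigl(p_i(\theta)-p_i(\theta')\bigr)\nabla\tilde{\mathcal{L}}_i(\theta').
\]
\begin{itemize}
\item \textbf{First term.} Because the $p_i$ are convex weights and each $\nabla\tilde{\mathcal{L}}_i$ is $\zeta_i$-Lipschitz, its norm is at most $\bar\zeta\|\theta-\theta'\|$.
\item \textbf{Second term.} Write $G(\theta')=[\nabla\tilde{\mathcal{L}}_1(\theta'),\dots,\nabla\tilde{\mathcal{L}}_m(\theta')]$. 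The term equals $G(\theta')(p(\theta)-p(\theta'))$. Two facts control it.
  \begin{itemize}
  \item Each column of $G$ has norm at most $\ell_i\le\bar\ell$, by Lipschitz continuity.
  \item The softmax map $z\mapsto \sigma(z/\mu)$ has Jacobian $\frac1\mu(\mathrm{diag}(p)-pp^\top)$. This matrix is PSD with operator norm at most $1/\mu$, so softmax is $\frac1\mu$-Lipschitz in $\ell_2$.
  \end{itemize}
\end{itemize}

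The obstacle is to get $\bar\ell^2$ rather than $m\bar\ell^2$; a crude bound $\|G\|_{op}\le\sqrt m\,\bar\ell$ together with $\|F(\theta)-F(\theta')\|\le\sqrt m\bar\ell\|\theta-\theta'\|$ loses a factor $m$. To avoid this, I will work on the segment $\theta_t=\theta'+t(\theta-\theta')$ and write $p(\theta)-p(\theta')=\int_0^1 \frac1\mu(\mathrm{diag}(p_t)-p_tp_t^\top)\,G(\theta_t)^\top(\theta-\theta')\,dt$. The quantity to bound is then $u^\top G(\theta')\,\frac1\mu(\mathrm{diag}(p_t)-p_tp_t^\top)\,G(\theta_t)^\top v$ for unit vectors $u,v$. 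Set $a_i=\langle\nabla\tilde{\mathcal{L}}_i(\theta'),u\rangle$ and $b_i=\langle\nabla\tilde{\mathcal{L}}_i(\theta_t),v\rangle$, so that $|a_i|,|b_i|\le\bar\ell$. The expression is $\frac1\mu\mathrm{Cov}_{p_t}(a,b)$. By Cauchy--Schwarz for the covariance under the probability vector $p_t$, it is at most $\frac1\mu\sqrt{\mathrm{Var}(a)\mathrm{Var}(b)}\le\frac1\mu\bar\ell^2$. Hence the second term is at most $\frac{\bar\ell^2}{\mu}\|\theta-\theta'\|$, and summing the two terms gives the stated constant $\frac{\bar\ell^2}{\mu}+\bar\zeta$.

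This argument needs only differentiability of the softmax composition along the segment, which holds because the $\tilde{\mathcal{L}}_i$ are $C^1$. It therefore covers the case where the $\tilde{\mathcal{L}}_i$ are merely $C^{1,1}$, and no twice-differentiability of $\tilde{\mathcal{L}}_i$ is required.
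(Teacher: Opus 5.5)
Your proof is correct, and it departs from the paper's at the one step that matters. The paper uses the mirror-image decomposition $\nabla g_\mu(\theta)-\nabla g_\mu(\theta') = J(\theta)\bigl(p(\theta)-p(\theta')\bigr)+\bigl(J(\theta)-J(\theta')\bigr)p(\theta')$, where $J$ is your $G$. It handles the Jacobian-difference piece exactly as you handle your first term, via convex weights and $\bar\zeta$. It then bounds the softmax-difference piece in one line, combining the $\frac{1}{\mu}$-smoothness of log-sum-exp in $\ell_2$ with the claims $\|J(\theta)\|\le\bar\ell$ and $\|\boldsymbol{\tilde{\mathcal{L}}}(\theta)-\boldsymbol{\tilde{\mathcal{L}}}(\theta')\|\le\bar\ell\|\theta-\theta'\|$. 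In Euclidean and spectral norms, each of these two claims holds only up to a factor $\sqrt{m}$; take $m$ identical copies of $\theta\mapsto\bar\ell\,\theta_1$. So the paper's chain, read literally, gives $m\bar\ell^2/\mu+\bar\zeta$, which is exactly the loss you anticipated.

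Your segment integral plus the covariance Cauchy--Schwarz bound $|a^\top(\mathrm{diag}(p_t)-p_tp_t^\top)b|\le\sqrt{\mathrm{Var}_{p_t}(a)\,\mathrm{Var}_{p_t}(b)}\le\bar\ell^2$ does deliver the stated constant. It needs only $C^{1,1}$ losses, and the constant is sharp: for $m=2$ with $\pm\bar\ell\theta$, one gets $g_\mu''(0)=\bar\ell^2/\mu$.

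If you want to keep the paper's two-line structure, an alternative repair is to pair dual norms. Use $\|p(z)-p(z')\|_1\le\frac{1}{\mu}\|z-z'\|_\infty$, which holds because the $\ell_\infty\to\ell_1$ norm of $\mathrm{diag}(p)-pp^\top$ is a mean absolute deviation and hence at most $1$. Combine it with $\|Jx\|_2\le\bar\ell\|x\|_1$ and $\|\boldsymbol{\tilde{\mathcal{L}}}(\theta)-\boldsymbol{\tilde{\mathcal{L}}}(\theta')\|_\infty\le\bar\ell\|\theta-\theta'\|$. That route is shorter but rests on the $\ell_\infty$-smoothness of log-sum-exp, whereas yours is fully elementary.

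The Hessian route announced in your first sentence is never used and can be dropped.
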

\begin{proof}
Define $\boldsymbol{\tilde{\mathcal{L}}}(\theta):=(\tilde{\mathcal{L}}_1(\theta),\ldots,\tilde{\mathcal{L}}_m(\theta))^\top$ and
\[
h_\mu(z):=\mu\ln\Big(\sum_{i=1}^m e^{z_i/\mu}\Big),\qquad z\in\mathbb{R}^m,
\]
so that $g_\mu(\theta)=h_\mu(\boldsymbol{\tilde{\mathcal{L}}}(\theta))$. Let $p(z):=\nabla h_\mu(z)$ be the softmax vector, i.e.,
$p_i(z)=\exp(z_i/\mu)/\sum_{j}\exp(z_j/\mu)$, hence $\|p(z)\|_1=1$.
Moreover, $h_\mu$ is $(1/\mu)$-smooth:
\begin{equation}\label{eq:hmu_smooth_icml}
\|\nabla h_\mu(z)-\nabla h_\mu(z')\|\le \frac{1}{\mu}\|z-z'\|,\qquad \forall z,z'\in\mathbb{R}^m .
\end{equation}

By the chain rule, with $J_{\boldsymbol{\tilde{\mathcal{L}}}}(\theta)=(\nabla \tilde{\mathcal{L}}_1(\theta),\ldots,\nabla \tilde{\mathcal{L}}_m(\theta))$,
\[
\nabla g_\mu(\theta)=J_{\boldsymbol{\tilde{\mathcal{L}}}}(\theta)\,\nabla h_\mu(\boldsymbol{\tilde{\mathcal{L}}}(\theta))
=\sum_{i=1}^m p_i(\boldsymbol{\tilde{\mathcal{L}}}(\theta))\,\nabla \tilde{\mathcal{L}}_i(\theta).
\]
For any $\theta,\theta'\in \mathbb{R}^d$, we decompose
\begin{align*}
\|\nabla g_\mu(\theta)-\nabla g_\mu(\theta')\|
\le& \|J_{\boldsymbol{\tilde{\mathcal{L}}}}(\theta)\|\,\|\nabla h_\mu(\boldsymbol{\tilde{\mathcal{L}}}(\theta))-\nabla h_\mu(\boldsymbol{\tilde{\mathcal{L}}}(\theta'))\| \\ 
&+ \|(J_{\boldsymbol{\tilde{\mathcal{L}}}}(\theta)-J_{\boldsymbol{\tilde{\mathcal{L}}}}(\theta'))\,p(\boldsymbol{\tilde{\mathcal{L}}}(\theta'))\|.
\end{align*}

For the first term, From Eq.~\eqref{eq:hmu_smooth_icml}, we have
\[
\|J_{\boldsymbol{\tilde{\mathcal{L}}}}(\theta)\|\,\|\nabla h_\mu(\boldsymbol{\tilde{\mathcal{L}}}(\theta))-\nabla h_\mu(\boldsymbol{\tilde{\mathcal{L}}}(\theta'))\|
\le \frac{\|J_{\boldsymbol{\tilde{\mathcal{L}}}}(\theta)\|}{\mu}\,\|\boldsymbol{\tilde{\mathcal{L}}}(\theta)-\boldsymbol{\tilde{\mathcal{L}}}(\theta')\|.
\]
Since each $\tilde{\mathcal{L}}_i$ is $\ell_i$-Lipschitz on $\mathbb{R}^d$, we have
$\|\nabla \tilde{\mathcal{L}}_i(\theta)\|\le \ell_i$ and $\|\boldsymbol{\tilde{\mathcal{L}}}(\theta)-\boldsymbol{\tilde{\mathcal{L}}}(\theta')\|\le (\max_i \ell_i)\|\theta-\theta'\|$.
Thus $\|J_{\boldsymbol{\tilde{\mathcal{L}}}}(\theta)\|\le \max_i \ell_i$, and the first term is bounded by
$\frac{\max_i \ell_i^2}{\mu}\|\theta-\theta'\|$.

For the second term, using $\|p(\boldsymbol{\tilde{\mathcal{L}}}(\theta'))\|_1=1$ and the $\zeta_i$-smoothness of $\tilde{\mathcal{L}}_i$,
\begin{align}
\|(J_{\boldsymbol{\tilde{\mathcal{L}}}}(\theta)\!-\!J_{\boldsymbol{\tilde{\mathcal{L}}}}(\theta'))\,p(\boldsymbol{\tilde{\mathcal{L}}}(\theta'))\|&=\!
\Big\|\sum_{i=1}^m p_i(\boldsymbol{\tilde{\mathcal{L}}}(\theta'))\big(\nabla \tilde{\mathcal{L}}_i(\theta)-\nabla \tilde{\mathcal{L}}_i(\theta')\big)\Big\| \notag\\
&\le \max_i \zeta_i \,\|\theta-\theta'\|\notag.
\end{align}
Combining the two bounds proves
\[
\|\nabla g_\mu(\theta)-\nabla g_\mu(\theta')\|
\le
\Big(\frac{\max_i \ell_i^2}{\mu}+\max_i \zeta_i\Big)\,\|\theta-\theta'\|,
\qquad \forall \theta,\theta'\in \mathbb{R}^d.
\]
Let $\bar{\ell}=\max_{i}\ell_i$ and $\bar{\zeta}=\max_{i}\zeta_i$. Then, $g_{\mu}$ is $\Big(\frac{\bar{\ell}^2}{\mu}+\bar{\zeta}\Big)$-smooth on $\mathbb{R}^d$.
\end{proof}

\subsection{Proofs in Section~\ref{sec:app}}
\begin{proof}[Proof of Proposition~\ref{prop:smooth_approx}]
The first claim follows directly from Proposition~\ref{prop:bounded_approx} by taking the limit $\tau \rightarrow 0$, and we omit the proof for brevity. For the second claim, we define, for $\theta, \theta'\in \mathbb{R}^d$,
\[
\phi_i(\theta, \theta'):=\tilde{\mathcal{L}}_i(\theta')-\tilde{\mathcal{L}}_i(\theta),\qquad i\in[m].
\]
Since $\tilde{\mathcal{L}}_i$ is $\ell_i$-Lipschitz and $\zeta_i$-smooth on $\mathbb{R}^d$, $\phi_i$ is $\ell_i$-Lipschitz and $\zeta_i$-smooth w.r.t $\theta, \theta'\in \mathbb{R}^d$ as well.
Consider the log-sum-exp term
\[
h_{0,\tau}(\theta,\theta')=\tau\ln\!\Big(\sum_{i=1}^{m}\exp(\phi_i(\theta)/\tau)\Big).
\]
Applying Lemma~\ref{lem:lse_smoothness} with $\mu=\tau$ and $\{\phi_i\}$ gives that $h_{0,\tau}(\theta,\theta')$ is $\big(\frac{\bar{\ell}^2}{\tau}+\bar{\zeta}\big)$-smooth w.r.t $\theta, \theta'\in \mathbb{R}^d$. Since  $v_{0, \tau}(\theta) =-\min_{\theta' \in \mathbb{R}^d} h_{0,\tau}(\theta,\theta')$, we can also conclude that  $v_{0, \tau}(\theta)$ is also $\big(\frac{\bar{\ell}^2}{\tau}+\bar{\zeta}\big)$-smooth w.r.t $\theta$. Let $C=\bar{\zeta}$ and $\alpha = {\ell}^2$, we arrive the conclusion.
\end{proof}

\begin{proof}[Proof of Proposition~\ref{prop:bounded_approx}]
By the property of the log-sum-exp function (e.g.,~\cite{beck2012smoothing}), and since taking $\min$ preserves inequalities, we obtain
\begin{align}
\label{eq:prop1_bound}
u_\lambda(\theta) - \tau \ln m
\;\le\;
v_{\lambda,\tau}(\theta)
\;\le\;
u_\lambda(\theta),
\qquad \forall\,\theta\in \mathbb{R}^d .
\end{align}
When $\lambda=0,$ $u_0(\theta)= \bar{u}(\theta)$, we have 
$$v_{0,\tau}(\theta) \le \bar{u}(\theta)\leq v_{0,\tau}(\theta) + \tau\ln m. $$
Considering Eq.~\eqref{eq:prop1_bound}, it also implies that as $\tau\rightarrow 0$, $v_{\lambda,\tau}(\theta)$ uniformly converges to $u_\lambda(\theta)$. Also recall from Lemma~\ref{lem:u_lambda_merit} that $\theta$ is weakly Pareto optimal if and only if $u_0(\theta)=0$. Therefore, $\theta$ is weakly Pareto optimal if and only if $\lim_{\tau\rightarrow 0} v_{0,\tau}(\theta)=0$. which completes the proof.
\end{proof}

\begin{proof}[Proof of Lemma~\ref{lemmsc}]
By $\mu_i$-weak convexity, $\frac{1}{\tau}\tilde{\mathcal{L}}_i(\theta')$ is $\frac{\mu_i}{\tau}$-weakly convex with respect to $\theta'$. Therefore, combining with Lemma~\ref{lem:lse_weak_convex}, the function
\[
\tau\ln\!\left(
\sum_{i=1}^m \exp\!\left(\frac{\tilde{\mathcal{L}}_i(\theta)-\tilde{\mathcal{L}}_i(\theta')}{\tau}\right)
\right)
\]
is $\bar{\mu}$-weakly convex with respect to $\theta'$, where $\bar{\mu}=\max_{i\in[m]}\mu_i$. Since $\lambda+\max_{i\in[m]}\mu_i=\lambda+\bar{\mu}>0$, the function $h_{\lambda,\tau}(\theta,\theta')$ is strictly convex with respect to $\theta'$. Hence the minimizer of $\min_{\theta'\in \mathbb{R}^d} h_{\lambda,\tau}(\theta,\theta')$ is unique.
\end{proof}

\begin{proof}[Proof of Proposition~\ref{theov}]
\noindent\textbf{(1)} For the first claim, by Eq.~\eqref{eq:prop1_bound} and $u_\lambda(\theta)\geq 0$, we have $v_{\lambda,\tau}(\theta) \ge -\tau \ln m$ for any $\theta \in \mathbb{R}^d$. 

\noindent\textbf{(2)} For the second claim, by Lemma~\ref{lem:u_lambda_merit}, if $\theta$ is weakly Pareto optimal, then $u_{\lambda}(\theta)=0$. Furthermore, by Proposition~\ref{prop:bounded_approx}, we have $u_{\lambda}(\theta)\ge v_{\lambda,\tau}(\theta)$, which implies $v_{\lambda,\tau}(\theta)\le 0$.



\noindent\textbf{(3) $\lambda=0$.} If $v_{0,\tau}(\theta)\le 0$, then by Proposition~\ref{prop:bounded_approx} we have
\begin{equation}
\label{eq:prop2_casea_1}
\bar{u}(\theta) \le v_{0,\tau}(\theta)+\tau\ln m \le \tau\ln m.
\end{equation}
By the definition of $\bar{u}$, for all $\hat{\theta}\in \mathbb{R}^d$, it holds that
\begin{equation}
\label{eq:prop2_casea_2}
\min_{i\in[m]} \bigl\{\tilde{\mathcal{L}}_i(\theta)-\tilde{\mathcal{L}}_i(\hat{\theta})\bigr\} \le \bar{u}(\theta) \le \tau\ln m.
\end{equation}
Equivalently, there does not exist any $\hat{\theta}\in \mathbb{R}^d$ such that
\[
\tilde{\mathcal{L}}_i(\hat{\theta}) < \tilde{\mathcal{L}}_i(\theta) - \tau\ln m, \qquad \forall i\in[m].
\]
Since $\psi(\cdot)$ is strictly increasing, this implies that there is no $\hat{\theta}\in \mathbb{R}^d$ satisfying
\[
\mathcal{L}_i(\hat{\theta})< \mathcal{L}_i(\theta)-\epsilon, \qquad \forall i\in[m], \quad\text{with }\epsilon=\tau\ln m,
\]
i.e., $\theta$ is $\epsilon$-weakly Pareto optimal.

\noindent\textbf{(4) $\lambda>0$.}
If $v_{\lambda,\tau}(\theta)\le -\tau\ln m$, then Proposition~\ref{prop:bounded_approx} yields
\begin{align}
0 \le u_{\lambda}(\theta) \le v_{\lambda,\tau}(\theta)+\tau\ln m \le 0\notag,
\end{align}
hence $u_{\lambda}(\theta)=0$. By the definition of $u_{\lambda}$, this implies that
\begin{equation}
\label{eq:prop2_caseb_2}
\min_{i\in[m]}
\bigl\{\tilde{\mathcal{L}}_i(\theta)-\tilde{\mathcal{L}}_i(\theta')\bigr\}
-\frac{\lambda}{2}\|\theta-\theta'\|^2
\le 0,
\qquad \forall\,\theta'\in \mathbb{R}^d.
\end{equation}
For any $\hat{\theta}\in \mathbb{R}^d$ and any $t\in(0,1)$. Let
\[
\theta_t := (1-t)\theta+t\hat{\theta} \in \mathbb{R}^d.
\]
Plugging $\theta'=\theta_t$ into Eq.~\eqref{eq:prop2_caseb_2}, we obtain
\begin{equation}
\label{eq:prop2_caseb_3}
\min_{i\in[m]} \bigl\{\tilde{\mathcal{L}}_i(\theta)-\tilde{\mathcal{L}}_i(\theta_t)\bigr\} -\frac{\lambda}{2}\|\theta-\theta_t\|^2 \le 0.
\end{equation}
Assume that for all $i\in[m]$, $\tilde{\mathcal{L}}_i$ is convex at $\theta$. Then for each $i\in[m]$,
\[
\tilde{\mathcal{L}}_i(\theta_t) \le t\,\tilde{\mathcal{L}}_i(\hat{\theta})+(1-t)\tilde{\mathcal{L}}_i(\theta),
\]
which implies
\[
\tilde{\mathcal{L}}_i(\theta)-\tilde{\mathcal{L}}_i(\theta_t) \ge t\bigl(\tilde{\mathcal{L}}_i(\theta)-\tilde{\mathcal{L}}_i(\hat{\theta})\bigr).
\]
Substituting into Eq.~\eqref{eq:prop2_caseb_3} and using $\|\theta-\theta_t\|^2=t^2\|\hat{\theta}-\theta\|^2$ yields
\begin{align}
\min_{i\in[m]}
\Bigl\{t\bigl(\tilde{\mathcal{L}}_i(\theta)-\tilde{\mathcal{L}}_i(\hat{\theta})\bigr)\Bigr\} -\frac{\lambda}{2}\,t^2\|\hat{\theta}-\theta\|^2 \le 0 \notag.
\end{align}
Dividing both sides by $t>0$ and letting $t\rightarrow 0$, we obtain for all $\hat{\theta}\in \mathbb{R}^d$,
\begin{align}
\min_{i\in[m]} \bigl\{\tilde{\mathcal{L}}_i(\theta)-\tilde{\mathcal{L}}_i(\hat{\theta})\bigr\} \le 0 \notag,
\end{align}
which means that there does not exist $\hat{\theta}\in \mathbb{R}^d$ such that $\tilde{\mathcal{L}}_i(\hat{\theta})<\tilde{\mathcal{L}}_i(\theta)$ for all $i\in[m]$. Therefore, $\theta$ is weakly Pareto optimal.
\end{proof}

\subsection{Proofs in Section~\ref{analysis}}
\begin{proof}[Proof of Lemma~\ref{smoothnessh}]
According to proof of Proposition~\ref{prop:smooth_approx}, we know $h_{0,\tau}(\theta,\theta')$ is $\big(\frac{\bar{\ell}^2}{\tau}+\bar{\zeta}\big)$-smooth w.r.t $\theta, \theta'\in \mathbb{R}^d$. Then, we have $h_{\lambda,\tau}(\theta,\theta')$ is $\big(\frac{\bar{\ell}^2}{\tau}+\bar{\zeta}+ \lambda \big) $-smooth, where the $\lambda$ term comes from the quadratic regularizer $\frac{\lambda}{2}\|\theta-\theta'\|^2$.
\end{proof}

\begin{lemma}[Lemma 4.3 in~\cite{lin2020gradient}]
For TTGDA, let $\delta^k=\bigl\|\theta^{\star}_{\lambda,\tau}(\theta^k)-\theta'^k\bigr\|^2$,
Then the following statements hold true.
\begin{equation}
\label{eq:delta}
\delta^k\le\Bigl(1-\frac{1}{2\kappa}+4\kappa^3\ell^2\eta_{\theta}^2\Bigr)\delta^{k-1}+4\kappa^3\eta_{\theta}^2\bigl\|\nabla v_{\lambda,\tau}(\theta^{k-1})\bigr\|^2,
\end{equation}
and 
\begin{equation}
\label{eq:vdescent}
v_{\lambda,\tau}(\theta^k)\le v_{\lambda,\tau}(\theta^{k-1})-\frac{7\eta_{\theta}}{16}\bigl\|\nabla v_{\lambda,\tau}(\theta^{k-1})\bigr\|^2+\frac{9\eta_{\theta}\ell^2}{16}\delta^{k-1}.
\end{equation}
\end{lemma}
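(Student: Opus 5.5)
The plan is to reduce to the standard nonconvex--strongly-concave setting of \cite{lin2020gradient}, with $\theta$ as the outer variable and $\theta'$ as the inner one. Set $g(\theta,\theta') := -h_{\lambda,\tau}(\theta,\theta')$, so that $v_{\lambda,\tau}(\theta) = \max_{\theta'} g(\theta,\theta')$. Algorithm~\ref{alg1} is then simultaneous gradient descent on $v_{\lambda,\tau}$ in $\theta$ and gradient ascent on $g$ in $\theta'$. The structural facts I need come from earlier results:
\begin{itemize}
\item Lemma~\ref{smoothnessh} gives that $g$ is jointly $\ell$-smooth.
\item The proof of Lemma~\ref{lemmsc} shows that, for $\lambda>\bar{\mu}$, $g(\theta,\cdot)$ is $(\lambda-\bar{\mu})$-strongly concave. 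I will use this sharper form, namely the $\bar{\mu}$-weak convexity of the log-sum-exp part plus the $\lambda$-quadratic, rather than mere strict convexity.
\item The condition number is then $\kappa = \ell/(\lambda-\bar{\mu})$, and the maximizer $\theta^\star_{\lambda,\tau}(\theta)$ is unique.
\end{itemize}

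\textbf{Step 1 (regularity of the inner solution and of $v_{\lambda,\tau}$).} First, I show that $\theta^\star_{\lambda,\tau}$ is $\kappa$-Lipschitz. I will compare the first-order optimality conditions at $\theta_1$ and $\theta_2$, apply strong monotonicity of $-\nabla_{\theta'}g$ in $\theta'$, and then use $\ell$-smoothness in $\theta$. Next, by Danskin's theorem, $\nabla v_{\lambda,\tau}(\theta) = \nabla_\theta h_{\lambda,\tau}(\theta,\theta^\star_{\lambda,\tau}(\theta))$, up to the sign convention of the update. Combining this with Step 1's Lipschitz bound shows that $\nabla v_{\lambda,\tau}$ is $(\ell+\kappa\ell)\le 2\kappa\ell$-Lipschitz.

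\textbf{Step 2 (descent inequality~\eqref{eq:vdescent}).} Write the outer step as
\[
\theta^k-\theta^{k-1} = -\eta_\theta \nabla v_{\lambda,\tau}(\theta^{k-1}) + \eta_\theta e^{k-1}.
\]
The error term satisfies $\|e^{k-1}\|\le \ell\|\theta'^{k-1}-\theta^\star_{\lambda,\tau}(\theta^{k-1})\| = \ell\sqrt{\delta^{k-1}}$, by $\ell$-smoothness of $\nabla_\theta h$ in $\theta'$. I then plug this into the quadratic upper bound for the $2\kappa\ell$-smooth function $v_{\lambda,\tau}$. Young's inequality on the cross term $\langle \nabla v, e\rangle$, together with the stepsize choice $\eta_\theta \le 1/(16(\kappa+1)\ell)$ (this is the $\Theta(1/(\kappa^2\ell))$ regime), should produce the constants $7/16$ and $9/16$.

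\textbf{Step 3 (tracking recursion~\eqref{eq:delta}).} I split the distance through the intermediate point $\theta^\star_{\lambda,\tau}(\theta^{k-1})$:
\[
\|\theta'^k-\theta^\star(\theta^k)\| \le \|\theta'^k-\theta^\star(\theta^{k-1})\| + \|\theta^\star(\theta^{k-1})-\theta^\star(\theta^k)\|.
\]
For the first piece, one gradient step with $\eta_{\theta'}=1/\ell$ on an $\ell$-smooth, $(\lambda-\bar\mu)$-strongly convex function contracts the squared distance by $(1-1/\kappa)$. For the second piece, Step 1 bounds it by $\kappa\|\theta^k-\theta^{k-1}\|$. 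I square the sum using the weighted Young inequality $(a+b)^2\le(1+\beta)a^2+(1+1/\beta)b^2$ with $\beta = 1/(2(\kappa-1))$, which gives $(1+\beta)(1-1/\kappa) = 1-1/(2\kappa)$. Finally, I bound $\|\theta^k-\theta^{k-1}\|^2\le 2\eta_\theta^2(\|\nabla v\|^2+\ell^2\delta^{k-1})$ using Step 2's decomposition. Since $(1+1/\beta)\kappa^2\le 2\kappa^3$, this yields exactly the coefficients $4\kappa^3\ell^2\eta_\theta^2$ and $4\kappa^3\eta_\theta^2$.

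\textbf{Main obstacle.} I expect the main difficulty to be Step 1 under only weak convexity: I must make sure the strong-convexity modulus of the inner problem is really $\lambda-\bar{\mu}$, and not only the strict convexity stated in Lemma~\ref{lemmsc}. I also need the bookkeeping of the simultaneous (Jacobi-type) update, where $\theta^k$ uses $\theta'^{k-1}$, to match the indices in~\eqref{eq:delta}. I would first re-derive the modulus from Lemma~\ref{lem:lse_weak_convex}, and then follow Lin et al.'s constant tracking verbatim.
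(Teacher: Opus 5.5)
Your proposal is correct and follows essentially the paper's route. The paper gives no independent proof: it imports Lemma 4.3 of \cite{lin2020gradient} with $f=-h_{\lambda,\tau}$, strong-concavity modulus $\lambda-\bar{\mu}$ and $\kappa=\ell/(\lambda-\bar{\mu})$. You are right that what is needed from Lemma~\ref{lemmsc} is $(\lambda-\bar{\mu})$-strong convexity rather than mere strict convexity. Your Steps 1--3, with $\eta_{\theta'}=1/\ell$ and the Young parameter $\beta=1/(2(\kappa-1))$, reproduce that argument.

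One bookkeeping fix is needed in Step 2: do not relax $L_v=(1+\kappa)\ell$ to $2\kappa\ell$. With that relaxation and $\eta_\theta\le 1/(16(\kappa+1)\ell)$, the gradient coefficient is only $\frac{\eta_\theta}{2}-\frac{\kappa\eta_\theta}{8(\kappa+1)}$, which is below $\frac{7\eta_\theta}{16}$ whenever $\kappa>1$. If you keep $(1+\kappa)\ell\eta_\theta\le\frac{1}{16}$, you get exactly $-\frac{7\eta_\theta}{16}$ and $\frac{9\eta_\theta\ell^2}{16}$. Alternatively, use the stepsize $\eta_\theta=1/(16(\kappa+1)^2\ell)$ from \cite{lin2020gradient}.
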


\begin{proof}[Proof of Theorem~\ref{thm:gda_complexity}]
Let
\[
\delta^k
=
\bigl\|
\theta^{\star}_{\lambda,\tau}(\theta^k)
-
\theta'^k
\bigr\|^2,
\qquad
g^{k}
:=
\bigl\|
\nabla v_{\lambda,\tau}(\theta^{k})
\bigr\|^2.
\]
Define
\[
\gamma
:=
1-\frac{1}{2\kappa}+4\kappa^3\ell^2\eta_\theta^2.
\]
By Eq.~\eqref{eq:delta}, for all $k\ge 1$,
\begin{equation}
\label{eq:rec_delta_C1}
\delta^k \le \gamma\,\delta^{k-1}+4\kappa^3\eta_\theta^2\, g^{k-1}.
\end{equation}
Unrolling Eq.~\eqref{eq:rec_delta_C1} yields
\begin{equation}
\label{eq:delta_unroll_C1}
\delta^k
\le
\gamma^k \delta^0
+
4\kappa^3\eta_\theta^2
\sum_{j=0}^{k-1}\gamma^{k-1-j} g^{j}.
\end{equation}

Next, by Eq.~\eqref{eq:vdescent}, for all $k\ge 1$,
\begin{equation}
\label{eq:descent_C1}
v_{\lambda,\tau}(\theta^k)
\le
v_{\lambda,\tau}(\theta^{k-1})
-
\frac{7\eta_\theta}{16}\, g^{k-1}
+
\frac{9\eta_\theta\ell^2}{16}\,\delta^{k-1}.
\end{equation}
Summing Eq.~\eqref{eq:descent_C1} over $k=1,\dots,K$ gives
\begin{align}
\frac{7\eta_\theta}{16}\sum_{k=0}^{K-1} g^{k}
\le
v_{\lambda,\tau}(\theta^{0})-v_{\lambda,\tau}(\theta^{K})
+
\frac{9\eta_\theta\ell^2}{16}\sum_{k=0}^{K-1}\delta^{k}\notag.
\end{align}
Using $v_{\lambda,\tau}(\theta^{K})\ge \min_{\theta}v_{\lambda,\tau}(\theta)$ and
$\Delta_v:=v_{\lambda,\tau}(\theta^{0})-\min_{\theta}v_{\lambda,\tau}(\theta)$,
we obtain
\begin{equation}
\label{eq:sum_descent_C1b}
\frac{7\eta_\theta}{16}\sum_{k=0}^{K-1} g^{k}
\le
\Delta_v
+
\frac{9\eta_\theta\ell^2}{16}\sum_{k=0}^{K-1}\delta^{k}.
\end{equation}

We now bound $\sum_{k=0}^{K-1}\delta^{k}$ using Eq.~\eqref{eq:delta_unroll_C1}. Summing Eq.~\eqref{eq:delta_unroll_C1} over $k=1,\dots,K$ and exchanging the order of summation yield
\begin{align}
\sum_{k=1}^{K}\delta^{k}
&\le
\delta^{0}\sum_{k=1}^{K}\gamma^{k}
+
4\kappa^3\eta_\theta^2
\sum_{k=1}^{K}\sum_{j=0}^{k-1}\gamma^{k-1-j} g^{j} \nonumber\\
&=
\delta^{0}\sum_{k=1}^{K}\gamma^{k}
+
4\kappa^3\eta_\theta^2
\sum_{j=0}^{K-1} g^{j}\sum_{k=j+1}^{K}\gamma^{k-1-j} \nonumber\\
&\le
\delta^{0}\frac{\gamma}{1-\gamma}
+
4\kappa^3\eta_\theta^2
\sum_{j=0}^{K-1} g^{j}\frac{1}{1-\gamma},
\label{eq:sum_delta_C1}
\end{align}
where we used $\sum_{k=1}^{K}\gamma^{k}\le \frac{\gamma}{1-\gamma}$ and $\sum_{k=j+1}^{K}\gamma^{k-1-j}\le \frac{1}{1-\gamma}$.

Substituting Eq.~\eqref{eq:sum_delta_C1} into Eq.~\eqref{eq:sum_descent_C1b} gives
\begin{align}
\frac{7\eta_\theta}{16}\sum_{k=0}^{K-1} g^{k}
&\le
\Delta_v
+
\frac{9\eta_\theta\ell^2}{16}
\left(
\delta^{0}\frac{\gamma}{1-\gamma}
+
4\kappa^3\eta_\theta^2\frac{1}{1-\gamma}
\sum_{k=0}^{K-1} g^{k}
\right).
\label{eq:pre_rearrange_C1}
\end{align}
Rearranging Eq.~\eqref{eq:pre_rearrange_C1} yields
\begin{equation}
\label{eq:rearranged_C1}
\left(
\frac{7\eta_\theta}{16}
-
\frac{9\eta_\theta\ell^2}{16}
\cdot
\frac{4\kappa^3\eta_\theta^2}{1-\gamma}
\right)
\sum_{k=0}^{K-1} g^{k}
\le
\Delta_v
+
\frac{9\eta_\theta\ell^2}{16}
\cdot
\frac{\gamma}{1-\gamma}\,\delta^{0}.
\end{equation}

Finally, choose $\eta_\theta=\Theta(1/(\kappa^2\ell))$ so that
$4\kappa^3\ell^2\eta_\theta^2 \le \frac{7}{32\kappa}$, which implies
\[
1-\gamma
=
\frac{1}{2\kappa}-4\kappa^3\ell^2\eta_\theta^2
\ge
\frac{9}{32\kappa},
\qquad
\frac{1}{1-\gamma}\le \frac{32}{9}\kappa,
\qquad
\frac{4\kappa^3\eta_\theta^2}{1-\gamma}\le \frac{128}{9}\kappa^4\eta_\theta^2.
\]
Under this choice, the coefficient on the left-hand side of Eq.~\eqref{eq:rearranged_C1} is lower bounded by a constant multiple of $\eta_\theta$, and thus
\begin{align}
\sum_{k=0}^{K-1} g^{k}
\le
\mathcal{O}\!\left(
\frac{\Delta_v}{\eta_\theta}
+
\kappa\ell^2\,\delta^{0}
\right)\notag.
\end{align}
Let $\widehat{k}$ be sampled uniformly from $\{0,\dots,K-1\}$. Then
\[
\mathbb{E}\bigl\|\nabla v_{\lambda,\tau}(\theta^{\widehat{k}})\bigr\|^2
=
\frac{1}{K}\sum_{k=0}^{K-1} g^{k}
\le
\mathcal{O}\!\left(
\frac{\Delta_v}{K\eta_\theta}
+
\frac{\kappa\ell^2\,\delta^{0}}{K}
\right).
\]
Setting the right-hand side to be at most $\epsilon^2$ and using
$\eta_\theta=\Theta(1/(\kappa^2\ell))$ gives the iteration (and gradient)
complexity
\[
K
=
\mathcal{O}\!\left(
\frac{\kappa^2\ell\,\Delta_v+\kappa\ell^2 \delta^{0}}{\epsilon^2}
\right).
\]
This concludes the proof.
\end{proof}

\begin{table*}[t]
\caption{Ablation studies on the impact of $\tau$ on the NYUv2 dataset. The best results for each task are shown in \textbf{bold}. $\uparrow$($\downarrow$) means that the higher (lower) the value, the better the performance.}
\label{table:ablation_tau_detail}
\centering
\setlength{\tabcolsep}{5pt}
\renewcommand{\arraystretch}{1.15}
\begin{tabular}{l cc cc cc ccc c}
\toprule
\multirow{3}{*}[-1ex]{$\tau$}
& \multicolumn{2}{c}{\textbf{Segmentation}}
& \multicolumn{2}{c}{\textbf{Depth}}
& \multicolumn{5}{c}{\textbf{Surface Normal}}
& \multirow{3}{*}[-1ex]{$\Delta_b \uparrow$} \\
\cmidrule(lr){2-3}\cmidrule(lr){4-5}\cmidrule(lr){6-10}
& \multirow{2}{*}[-0.4ex]{\textbf{mIoU}$\uparrow$}
& \multirow{2}{*}[-0.4ex]{\textbf{Pix Acc}$\uparrow$}
& \multirow{2}{*}[-0.4ex]{\textbf{Abs Err}$\downarrow$}
& \multirow{2}{*}[-0.4ex]{\textbf{Rel Err}$\downarrow$}
& \multicolumn{2}{c}{\textbf{Angle Distance}}
& \multicolumn{3}{c}{\textbf{Within $t^\circ$}}
& \\
\cmidrule(lr){6-7}\cmidrule(lr){8-10}
& & & & 
& \textbf{Mean}$\downarrow$
& \textbf{Median}$\downarrow$
& \textbf{11.25}$\uparrow$
& \textbf{22.5}$\uparrow$
& \textbf{30}$\uparrow$
& \\
\midrule
$\tau=0.1$
& 65.21 & 83.46
& 0.2982 & 0.1141
& 18.71 & 13.63
& 42.59 & 71.28 & 81.38
& +19.10\% \\
$\tau=0.5$
& 66.58 & 84.41
& 0.2822 & 0.1100
& 17.05 & 11.87
& 48.49 & 75.30 & 84.20
& +23.84\% \\
$\tau=1$
& \textbf{68.24} & \textbf{84.78}
& 0.2788 & 0.1094
& 16.93 & 11.81
& 48.71 & 75.62 & 84.49
& +24.80\% \\
$\tau=3$
& 66.92 & 84.43
& 0.2738 & 0.1068
& 16.79 & 11.69
& 49.14 & 76.07 & 84.76
& +25.06\% \\
$\tau=5$
& 67.12 & 84.70
& \textbf{0.2722} & \textbf{0.1050}
& 16.85 & 11.65
& 49.35 & 75.85 & 84.54
& +\textbf{25.44\%} \\
$\tau=10$
& 67.35 & 84.32
& 0.2772 & 0.1088
& \textbf{16.73} & \textbf{11.56}
& \textbf{49.59} & \textbf{76.15} & \textbf{84.77}
& +24.97\% \\
\bottomrule
\end{tabular}
\end{table*}

\begin{table*}[t]
\caption{The impact of $\lambda$ on the NYUv2 dataset. The best results for each task are shown in \textbf{bold}. $\uparrow$($\downarrow$) means that the higher (lower) the value, the better the performance.}
\label{table:ablation_lambda_detail}
\centering
\setlength{\tabcolsep}{5pt}
\renewcommand{\arraystretch}{1.15}
\begin{tabular}{c cc cc cc ccc c}
\toprule
\multirow{3}{*}[-1ex]{$\lambda$}
& \multicolumn{2}{c}{\textbf{Segmentation}}
& \multicolumn{2}{c}{\textbf{Depth}}
& \multicolumn{5}{c}{\textbf{Surface Normal}}
& \multirow{3}{*}[-1ex]{$\Delta_b \uparrow$} \\
\cmidrule(lr){2-3}\cmidrule(lr){4-5}\cmidrule(lr){6-10}
& \multirow{2}{*}[-0.4ex]{\textbf{mIoU}$\uparrow$}
& \multirow{2}{*}[-0.4ex]{\textbf{Pix Acc}$\uparrow$}
& \multirow{2}{*}[-0.4ex]{\textbf{Abs Err}$\downarrow$}
& \multirow{2}{*}[-0.4ex]{\textbf{Rel Err}$\downarrow$}
& \multicolumn{2}{c}{\textbf{Angle Distance}}
& \multicolumn{3}{c}{\textbf{Within $t^\circ$}}
& \\
\cmidrule(lr){6-7}\cmidrule(lr){8-10}
& & & & 
& \textbf{Mean}$\downarrow$
& \textbf{Median}$\downarrow$
& \textbf{11.25}$\uparrow$
& \textbf{22.5}$\uparrow$
& \textbf{30}$\uparrow$
& \\
\midrule
0
& 67.12 & 84.70
& \textbf{0.2722} & \textbf{0.1050}
& 16.85 & 11.65
& 49.35 & 75.85 & 84.54
& +25.44\% \\

$1.00\times10^{-6}$
& 67.57 & 84.67
& 0.2759 & 0.1080
& 16.84 & 11.69
& 49.14 & 75.98 & \textbf{84.70}
& +25.06\% \\

$1.00\times10^{-5}$
& \textbf{68.02} & 84.73
& 0.2742 & 0.1063
& \textbf{16.77} & \textbf{11.62}
& \textbf{49.42} & \textbf{76.00} & \textbf{84.70}
& \textbf{+25.57}\% \\

$1.00\times10^{-4}$
& 67.39 & \textbf{84.75}
& 0.2768 & 0.1080
& 16.83 & 11.66
& 49.26 & 75.86 & 84.57
& +25.00\% \\

$1.00\times10^{-3}$
& 67.21 & 84.64
& 0.2805 & 0.1091
& 17.00 & 11.77
& 48.85 & 75.51 & 84.32
& +24.41\% \\

\bottomrule
\end{tabular}
\end{table*}

\section{Experimental Details}

\subsection{Illustrative Example}
\label{appdix:example}
We detail the illustrative example in Section~\ref{sec:motivation}.
Let $\theta=(\theta_1,\theta_2)\in\mathbb{R}^2$, and consider the following intermediate objectives:
\begin{align*}
\tilde{\ell}_1(\theta) &= c_1(\theta) f_1(\theta) + c_2(\theta) g_1(\theta), \\
\tilde{\ell}_2(\theta) &= c_1(\theta) f_2(\theta) + c_2(\theta) g_2(\theta),
\end{align*}
where
\begin{align*}
f_1(\theta) &= \ln\!\Big(\max\big(\big|0.5(-\theta_1-7)-\tanh(-\theta_2)\big|,\;5\times 10^{-6}\big)\Big) + 6, \\
f_2(\theta) &= \ln\!\Big(\max\big(\big|0.5(-\theta_1+3)-\tanh(-\theta_2)+2\big|,\;5\times 10^{-6}\big)\Big) + 6, \\
g_1(\theta) &= \frac{(-\theta_1+7)^2 + 0.1\,(-\theta_2-8)^2}{10} - 20, \\
g_2(\theta) &= \frac{(-\theta_1-7)^2 + 0.1\,(-\theta_2-8)^2}{10} - 20, \\
c_1(\theta) &= \max\!\big(\tanh(0.5\theta_2),\,0\big), \qquad
c_2(\theta) = \max\!\big(\tanh(-0.5\theta_2),\,0\big).
\end{align*}

This example follows~\cite{liu2021conflict} and its modification in~\cite{navon2022multi}.
Building on the latter modification, we further adjust the objectives by adding a constant shift to keep both losses positive. Specifically, we define
\begin{equation*}
\hat{\ell}_1(\theta)=\tilde{\ell}_1(\theta)+30,\qquad \hat{\ell}_2(\theta)=\tilde{\ell}_2(\theta)+30,
\end{equation*}
and use the final objectives
\begin{equation*}
\ell_1(\theta)=0.1\,\hat{\ell}_1(\theta),\qquad \ell_2(\theta)=\hat{\ell}_2(\theta). 
\end{equation*}
\begin{equation*}
\text{and }\ell_1(\theta)=\,\hat{\ell}_1(\theta),\qquad \ell_2(\theta)=0.1\hat{\ell}_2(\theta). 
\end{equation*}

We use five initialization points $(-8.5,7.5)$, $(0.0,0.0)$, $(9.0,9.0)$, $(-7.5,-0.5)$, and $(9.0,-1.0)$. All methods use Adam with learning rate $10^{-3}$ for up to $35\mathrm{K}$ iterations. 
We approximate the Pareto front by evaluating an $800\times800$ uniform grid over $[-12,12]$ along each dimension and retaining the non-dominated points. At iteration $t$, we compute the distance from the current objective-space point $\mathbf{z}_t$ to the approximated Pareto set $\mathcal{P}$:
\[
d_t \;=\; \min_{\mathbf{p}\in\mathcal{P}} \|\mathbf{z}_t-\mathbf{p}\|_2 .
\]
If $d_t<0.05$, we stop early.

\begin{table*}[t]
\caption{The baseline with normalization on the NYUv2 dataset. $\uparrow$($\downarrow$) means that the higher (lower) the value, the better the performance.}
\label{table8}
\centering
\setlength{\tabcolsep}{5pt}
\renewcommand{\arraystretch}{1.15}
\begin{tabular}{c cc cc cc ccc c}
\toprule
\multirow{3}{*}[-1ex]{Method}
& \multicolumn{2}{c}{\textbf{Segmentation}}
& \multicolumn{2}{c}{\textbf{Depth}}
& \multicolumn{5}{c}{\textbf{Surface Normal}}
& \multirow{3}{*}[-1ex]{$\Delta_b \uparrow$} \\
\cmidrule(lr){2-3}\cmidrule(lr){4-5}\cmidrule(lr){6-10}
& \multirow{2}{*}[-0.4ex]{\textbf{mIoU}$\uparrow$}
& \multirow{2}{*}[-0.4ex]{\textbf{Pix Acc}$\uparrow$}
& \multirow{2}{*}[-0.4ex]{\textbf{Abs Err}$\downarrow$}
& \multirow{2}{*}[-0.4ex]{\textbf{Rel Err}$\downarrow$}
& \multicolumn{2}{c}{\textbf{Angle Distance}}
& \multicolumn{3}{c}{\textbf{Within $t^\circ$}}
& \\
\cmidrule(lr){6-7}\cmidrule(lr){8-10}
& & & & 
& \textbf{Mean}$\downarrow$
& \textbf{Median}$\downarrow$
& \textbf{11.25}$\uparrow$
& \textbf{22.5}$\uparrow$
& \textbf{30}$\uparrow$
& \\
\midrule
EW + EMA
& 67.19 & 84.32
& 0.2750 & 0.1076
& 17.04 & 11.93
& 48.24 & 75.35 & 84.27
& +24.52\% \\

STCH + Epoch-wise
& 66.83 & 84.24
& 0.2777 & 0.1095
& 17.23 & 12.07
& 47.76 & 74.92 & 83.94
& +23.79\% \\

SIMS (Ours)  & 67.12 & 84.70 & 0.2722 & 0.1050 & 16.85   & 11.65   & 49.35 & 75.85 & 84.54 & +25.44\% \\
\bottomrule
\end{tabular}
\end{table*}

\begin{table}[t]
\centering
\caption{Comparison of different methods on Ali-CCP.}
\label{tab:auc_results}
\begin{tabular}{lccc}
\toprule
Method & AUC/T1 & AUC/T2 & Avg AUC \\
\midrule
EW           & 0.5828 & 0.5774 & 0.5801 \\
FOOPS        & 0.5742 & 0.5779 & 0.5761 \\
STCH         & 0.5653 & 0.5979 & 0.5816 \\
SIMS (Ours)  & 0.5757 & 0.6012 & 0.5884 \\
\bottomrule
\end{tabular}
\end{table}

\begin{table}[t]
\centering
\caption{Effect of $\lambda$ on convergence and Pareto-front quality.}\label{table10}
\begin{tabular}{c c c}
\toprule
$\lambda$ & Mean Iter. to $r_k < 0.1$ & Mean Rank of dist to pf \\
\midrule
0      & 2463 & 1.750 \\
$1\times10^{-4}$ & 2526 & 2.333 \\
$1\times10^{-3}$ & 1385 & 2.917 \\
$1\times10^{-2}$ & 711  & 2.917 \\
$1\times10^{-1}$ & 198  & 2.833 \\
\bottomrule
\end{tabular}
\end{table}

\begin{table}[t]
\centering
\caption{Effect of different $\lambda$ values on convergence speed and distance to the Pareto front for problem F11.}
\label{tab:f11_lambda}
\resizebox{\columnwidth}{!}{
\begin{tabular}{cccccc}
\toprule
Problem & Init Point & $\lambda$ & Iter. to $r_k < 0.1$ & Dist. to PF & Rank \\
\midrule
F11 & $(-2.0, 2.0)$ & $0$       & $98$ & $0.000453952$ & $1$ \\
F11 & $(-2.0, 2.0)$ & $10^{-4}$ & $97$ & $0.000637059$ & $2$ \\
F11 & $(-2.0, 2.0)$ & $10^{-3}$ & $95$ & $0.002296451$ & $3$ \\
F11 & $(-2.0, 2.0)$ & $10^{-2}$ & $15$ & $0.00230228$  & $4$ \\
F11 & $(-2.0, 2.0)$ & $10^{-1}$ & $2$  & $0.004238246$ & $5$ \\
\bottomrule
\end{tabular}
}
\end{table}

\begin{table*}[t]
\caption{Performance on the NYUv2 dataset when the depth loss is rescaled by $100\times$. $\uparrow$($\downarrow$) means that the higher (lower) the value, the better the performance.}
\label{tab:rescale}
\centering
\setlength{\tabcolsep}{5pt}
\renewcommand{\arraystretch}{1.15}
\begin{tabular}{c cc cc cc ccc c}
\toprule
\multirow{3}{*}[-1ex]{Method}
& \multicolumn{2}{c}{\textbf{Segmentation}}
& \multicolumn{2}{c}{\textbf{Depth}}
& \multicolumn{5}{c}{\textbf{Surface Normal}}
& \multirow{3}{*}[-1ex]{$\Delta_b \uparrow$} \\
\cmidrule(lr){2-3}\cmidrule(lr){4-5}\cmidrule(lr){6-10}
& \multirow{2}{*}[-0.4ex]{\textbf{mIoU}$\uparrow$}
& \multirow{2}{*}[-0.4ex]{\textbf{Pix Acc}$\uparrow$}
& \multirow{2}{*}[-0.4ex]{\textbf{Abs Err}$\downarrow$}
& \multirow{2}{*}[-0.4ex]{\textbf{Rel Err}$\downarrow$}
& \multicolumn{2}{c}{\textbf{Angle Distance}}
& \multicolumn{3}{c}{\textbf{Within $t^\circ$}}
& \\
\cmidrule(lr){6-7}\cmidrule(lr){8-10}
& & & & 
& \textbf{Mean}$\downarrow$
& \textbf{Median}$\downarrow$
& \textbf{11.25}$\uparrow$
& \textbf{22.5}$\uparrow$
& \textbf{30}$\uparrow$
& \\
\midrule
EW
& 57.58 & 78.94
& 0.2755 & 0.1085
& 18.95 & 13.46
& 43.32 & 70.89 & 80.64
& +17.39\% \\

STCH
& 67.01 & 84.12
& 0.2835 & 0.1122
& 17.35 & 12.24
& 47.12 & 74.65 & 83.80
& +23.01\% \\

foops
& 65.23 & 83.26
& 6.0287 & 2.9620
& 19.03 & 14.02
& 41.19 & 70.40 & 80.87
& -536.99\% \\

SIMS
& 67.84 & 84.85
& 0.2778 & 0.1078
& 16.80 & 11.60
& 49.40 & 76.02 & 84.66
& +25.21\% \\
\bottomrule
\end{tabular}
\end{table*}

\subsection{Implementation Details}
\label{app:mtldetails}
We employ the following loss functions:

\noindent \textbf{Semantic $\&$ Human Parts Segmentation.}
Both tasks use pixel-wise Cross-Entropy Loss over valid pixels:
$$\mathcal{L}_{seg} = - \frac{1}{N_{valid}} \sum_{p \in \Omega} \mathbf{1}_{[y_p \neq \text{ignore}]} \log(P(y_p | x_p))$$
where $\Omega$ is the image domain, $y_p$ is the ground truth label, and the loss ignores the void class (index 255).

\noindent \textbf{Depth Estimation.}
We use the $L_1$ Loss (Mean Absolute Error) restricted to pixels with valid depth annotations:
$$\mathcal{L}_{depth} = \frac{1}{N_{valid}} \sum_{p \in \Omega} \mathbf{1}_{[d_p > 0]} | d_p - \hat{d}_p |$$where $d_p$ and $\hat{d}_p$ denote the ground truth and predicted depth, respectively.

\noindent \textbf{Surface Normal Prediction.}
We align predicted normals with ground truth via Inverse Cosine Similarity:
$$\mathcal{L}_{normal} = 1 - \frac{1}{N_{valid}} \sum_{p \in \Omega} \mathbf{1}_{[m_p = 1]} \langle \mathbf{n}_p, \hat{\mathbf{n}}_p \rangle$$where $\mathbf{n}_p, \hat{\mathbf{n}}_p$ are unit vectors, and $\langle \cdot, \cdot \rangle$ denotes the dot product.

\noindent \textbf{Edge Detection.}
To handle the extreme sparsity of edge pixels, we use a Class-Balanced Binary Cross-Entropy Loss:
$$\mathcal{L}_{edge} = - \sum_{p \in \Omega} \left[ \beta y_p \log(\hat{y}_p) + (1-y_p) \log(1-\hat{y}_p) \right]$$
where $\beta$ is a positive weight (set to 0.95) to increase the penalty for missing edge pixels.

\subsection{Metric for Each Task}
\label{appdix:metric}
\noindent \textbf{NYUv2 dataset.} For semantic segmentation, we use mean Intersection over Union (mIoU) and Pixel Accuracy (Pix Acc); for depth prediction, Absolute Error (Abs Err) and Relative Error (Rel Err); and for surface normal estimation, mean and median angular errors in degrees, pixel percentages within 11.25, 22.5, and 30 degrees.

\noindent \textbf{CityScapes dataset.} For semantic segmentation, using mean Intersection over Union (mIoU) and Pixel Accuracy (Pix Acc); for depth prediction, Absolute Error (Abs Err) and Relative Error (Rel Err).

\noindent \textbf{PASCAL-Context dataset.} For semantic segmentation, human parts segmentation, and saliency estimation, we use mean Intersection over Union (mIoU); for surface normal estimation, mean angular error in degrees.

\noindent \textbf{The selection of $\tau$.}
As indicated by our ablation study, the performance is relatively insensitive to the choice of $\tau$ within a reasonable range. Specifically, for the reported results, we adopt $\tau=5$ for NYUv2 and $\tau=3$ for both Cityscapes and PASCAL-Context.

\section{Additional Experimental Studies}

\subsection{Effect of $\tau$}

\label{appdix:tau}
The complete table is shown in the Table.~\ref{table:ablation_tau_detail}.

\subsection{Effect of $\lambda$}
\label{appdix:lambda}
The complete table is shown in the Table.~\ref{table:ablation_lambda_detail}.

\begin{add}

\section{Additional Experiment}
\subsection{Normalization baselines}
We add comparisons with normalization baselines. \zbedit{Table~\ref{table8} shows the results of the baseline (EW and STCH) with normalization (EMA and Epoch-wise) on the NYUv2 dataset.} We see that EW + EMA improves over vanilla EW, but it still underperforms SIMS (24.52\% vs. 25.44\%). In contrast, STCH + Epoch-wise shows no gain (gain=-0.02\%), suggesting that simply attaching online normalization to a merit-based objective does not necessarily yield a better trade-off and may even disturb the original learning objective. Overall, these results indicate that the gain of SIMS comes from its principled scale-invariant scalarization design with weak Pareto property.
\subsection{Non-vision MTL scenarios}
We add a non-vision experiment on Ali-CCP \cite{ma2018entire}, where the CVR loss is 1–2 orders of magnitude smaller than CTR. \zbedit{Table~\ref{tab:auc_results} shows the results.} On a standard Shared-Bottom model, SIMS achieves the best average AUC (0.5884), mainly by improving the smaller-scale task. This supports the generalization of SIMS beyond vision tasks.
\subsection{Synthetic Experiments on $\lambda$}
To better address the concern on the gap between the theory and the empirical setting of $\lambda$, we added additional synthetic experiments. According to the bi-objective BBOB \cite{brockhoff2022using} benchmark, we selected F1, F2, and F11 because they provide a controlled progression in difficulty and geometric structure.
Specifically, F1 is formed by two Sphere functions, F2 by a Sphere function and a separable Ellipsoid function, and F11 by two separable Ellipsoid functions. We set centers as $(-3.0,-3.0)$ and $(3.0,3.0)$ for F1, $(-3.0,-2.0)$ and $(3.0,2.0)$ for F2, and $(-3.0,-2.0)$ and $(3.0,2.0)$ for F11, and all constant shift to be 1.

We choice $\lambda$ from the set $\{0, 10^{-1}, 10^{-2}, 10^{-3}, 10^{-4}\}$ and the initial points from the set $\{(-2.0,2.0), (2.0,-2.0), (3.0,1.0),(5.0,-5.0)\}$ to examine both convergence behavior and solution quality.
Empirically, we find that all tested values of $\lambda$ converge on these problems. Moreover, under the overall ranking across the three benchmarks and multiple initializations, where the ranking is defined by ordering different $\lambda$ values according to the distance between their final converged solutions and the Pareto front (pf), with smaller distance corresponding to a lower rank value, $\lambda=0$ achieves the smallest distance to the Pareto front in most cases (Mean Rank=1.75, \zbedit{as shown in Table~\ref{table10}}). This is consistent with our analysis: while a larger positive $\lambda$ improves the conditioning of the inner problem, it also changes the surrogate itself and may introduce a bias relative to the original merit-function characterization, whereas $\lambda=0$ remains the closest smooth approximation to the original objective.

To better characterize the role of $\lambda$, we further introduce the stationarity residual $$ r_k=\sqrt{|\nabla_{\theta} h_{\lambda,\tau}(\theta_k,\theta_k')|^2+|\nabla_{\theta'} h_{\lambda,\tau}(\theta_k,\theta_k')|^2 }.$$ Here $r_k$ can be used to measure how close the current solution to a first-order stationary point. We use $r_k$ to study the convergence to an $\epsilon$-stationary point in the $\lambda> \bar\mu$ regime. \zbedit{From the results in Table~\ref{table10},} we observe that as $\lambda$ increases, the mean first hitting iteration for $r_k<0.1$ becomes smaller, indicating that larger $\lambda$ generally leads to faster stabilization of the optimization dynamics. This is also in line with our theory, since a larger positive $\lambda$ improves the curvature of the inner problem and makes the optimization procedure better conditioned.

In summary, these observations indicate that choosing $\lambda$ is essentially a trade-off (Table~\ref{tab:f11_lambda} provides a representative example on problem F11 with initialization at $(-2.0,, 2.0)$). A larger positive $\lambda$ improves the conditioning of the inner problem and tends to accelerate the convergence, while a smaller $\lambda$ preserves a more faithful characterization of the original merit-function objective and often yields a smaller distance to the Pareto front (Sec.\ref{sec:ablation}). Hence, $\lambda$ balances the convergence rate and fidelity to the original problem formulation. How to choose $\lambda$ in a more principled or adaptive way is an interesting direction that deserves further investigation in our future work.

\subsection{Rescale task losses on a real benchmark}
We additionally rescale the depth loss by $100\times$ on NYUv2 to evaluate the robustness of different methods under real-data task-wise rescaling. As shown in Table \ref{tab:rescale} and \ref{table:nyu}, EW is clearly affected by the enlarged depth loss, with $\Delta_b$ dropping from $22.34\%$ to $17.39\%$. Although STCH adopts objective normalization, it still exhibits a mild performance decline, with $\Delta_b$ decreasing from $23.81\%$ to $23.01\%$. FOOPS is even more severely affected by the rescaling, the optimization on the depth task collapses, leading to extremely large depth errors and also degrading the performance of the other tasks, which results in a substantially worse overall score. In contrast, SIMS remains highly stable, with $\Delta_b$ only changing from $25.44\%$ to $25.21\%$. These results provide direct real-data evidence that artificial task-wise rescaling can significantly affect the scalarization methods, while SIMS remains robust under loss rescaling, consistent with our scale-invariance claim.

\end{add}

\end{document}
\endinput